\documentclass{article}

\usepackage[preprint]{neurips_2026}

\usepackage{amsmath}
\usepackage{multirow}
\usepackage{mathrsfs}

\usepackage{subcaption}

\usepackage{booktabs}
\usepackage{adjustbox}
\usepackage{pdflscape}
\newcommand{\prof}[2]{\ensuremath{#1 \pm #2}}

\usepackage[utf8]{inputenc} % allow utf-8 input
\usepackage[T1]{fontenc}    % use 8-bit T1 fonts
\usepackage[
  colorlinks=true,
  linkcolor=cyan,
  citecolor=cyan
]{hyperref}       % hyperlinks
\usepackage{url}            % simple URL typesetting
\usepackage{booktabs}       % professional-quality tables
\usepackage{amsfonts}       % blackboard math symbols
\usepackage{nicefrac}       % compact symbols for 1/2, etc.
\usepackage{microtype}      % microtypography
\usepackage{xcolor}         % colors

\usepackage{booktabs}

\usepackage{microtype}
\usepackage{graphicx}
\usepackage{amssymb}
\usepackage{multicol}
\usepackage{wasysym}
\usepackage{makecell}
\usepackage{pifont}
\usepackage{booktabs} % for professional tables
\usepackage{float}
\usepackage{hyperref}

\usepackage[ruled,linesnumbered,noend]{algorithm2e}
\usepackage{amsmath}
\usepackage{amssymb}

\SetKwInput{KwInput}{Input}
\SetKwInput{KwInit}{Initialize}
\SetKw{KwFor}{for}
\SetKw{KwEnd}{end for}
\DontPrintSemicolon
\SetNlSty{}{}{:}
\usepackage[bb=dsserif]{mathalpha}
\usepackage{bm}

\usepackage{amsmath}
\usepackage{amssymb}
\usepackage{mathtools}
\usepackage{amsthm}
\usepackage{hyperref} % 可选，用于生成可点击的目录
\usepackage{wrapfig}
\usepackage{colortbl}

\usepackage[capitalize,noabbrev]{cleveref}

\theoremstyle{plain}
\newtheorem{theorem}{Theorem}[section]

\newtheorem{lemma}[theorem]{Lemma}

\theoremstyle{definition}
\newtheorem{definition}[theorem]{Definition}

\theoremstyle{remark}

\usepackage{mdframed}

\newmdtheoremenv[
  backgroundcolor=white!95!blue!5,  % 冷灰蓝，干净
  linecolor=blue!40!cyan,          % 柔和蓝黑
  linewidth=1pt,
  roundcorner=6pt,
  innertopmargin=8pt,
  innerbottommargin=8pt,
  innerrightmargin=10pt,
  innerleftmargin=10pt,
  frametitlebackgroundcolor=cyan!10, % (可选) 标题条
  frametitleaboveskip=2pt,
  frametitlebelowskip=2pt,
  frametitlefont=\bfseries,
  shadow=true,
  shadowsize=3pt,
  shadowcolor=cyan!15,
]{theoremwithframe}{Theorem}

\newmdtheoremenv[
  backgroundcolor=white!95!red!5,  % 冷灰蓝，干净
  linecolor=red!40!red,          % 柔和蓝黑
  linewidth=1pt,
  roundcorner=6pt,
  innertopmargin=8pt,
  innerbottommargin=8pt,
  innerrightmargin=10pt,
  innerleftmargin=10pt,
  frametitlebackgroundcolor=red!10, % (可选) 标题条
  frametitleaboveskip=2pt,
  frametitlebelowskip=2pt,
  frametitlefont=\bfseries,
  shadow=true,
  shadowsize=3pt,
  shadowcolor=red!15,
]{insightmwithframe}{Problem}

\title{Scaling World-Model Reinforcement Learning Through Diffusion Policy Optimization}

\author{
Xiaoyuan Cheng$^{1,}$\thanks{
Core contributors: Xiaoyuan Cheng: \url{ucesxc4@ucl.ac.uk} and Wenxuan Yuan: \url{YUAN0186@e.ntu.edu.sg}. $^\dagger$ Corresonding authors: Zhuo Sun: \url{zhuosunreid@outlook.com}, Che Liu: \url{cl522@ic.ac.uk}.
}
\quad
Wenxuan Yuan$^{2,*}$ \quad
Zhancun Mu$^{3}$ \quad
Yuanzhao Zhang$^{4}$ \\
\textbf{Yiming Yang}$^{1}$ \quad
\textbf{Hai Wang}$^{1}$ \quad
\textbf{Zhuo Sun}$^{5,\dagger}$ \quad
\textbf{Che Liu}$^{6,\dagger}$ \\[0.3cm]
$^{1}$Dynamic Systems Lab, University College London \\
$^{2}$College of Computing and Data Science, Nanyang Technological University \\
$^{3}$School of Intelligence Science and Technology, Peking University \quad 
$^{4}$Santa Fe Institute \\
$^{5}$School of Statistics and Data Science, Shanghai University of Finance and Economics \\
$^{6}$Department of Computing, Imperial College London \\
\textit{Code:} \url{https://github.com/Edmond1Cheng/MBDPO}\\
\textit{Hugging Face:} \url{https://huggingface.co/BruceYuan/MBDPO}
}

\begin{document}

\maketitle

\begin{figure}[H]
    \centering
    \includegraphics[width=0.9\linewidth]{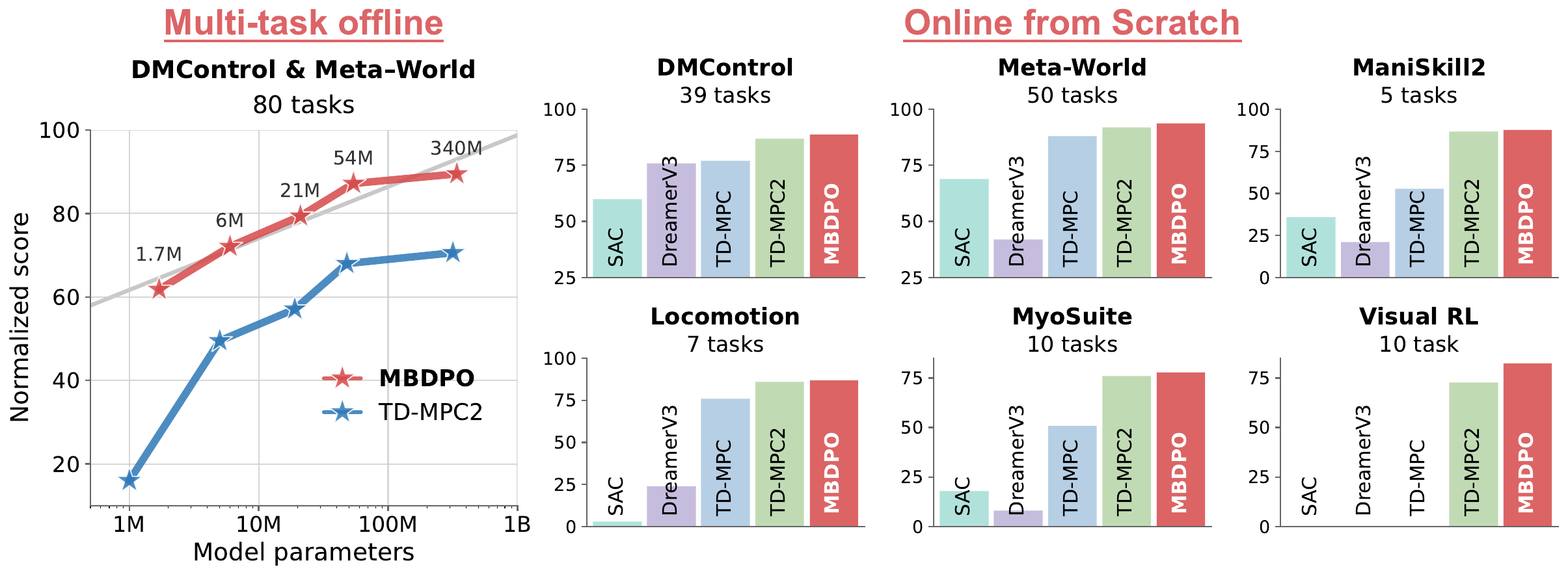}
    \caption{\textbf{Overview of offline and online performance.} (Left) Our method (MBDPO) significantly outperforms TD-MPC2 \citep{hansen2023td} in multi-task offline pretraining, exhibiting a clear monotonic scaling behavior as model parameters increase from $1.7$M to $340$M. (Right) In the online-from-scratch setting, MBDPO consistently achieves superior or competitive results across $4$ benchmarks with $121$ tasks.}
    \label{fig:overview}
\end{figure}

\begin{abstract}
Model-based reinforcement learning (RL) can be effectively supported at scale through the use of world models. However, in practice, scaling such approaches remains fundamentally limited. A commonly recognized challenge is model bias and error compounding, which degrade long-horizon predictions. Beyond these issues, we identify a more critical yet underexplored bottleneck: a structural misalignment between search and value learning in existing world model approaches. In particular, policy improvement often relies on value functions induced by a separate, non-search policy, resulting in training inconsistency and ultimately suboptimal learning. To address this limitation, we propose Model-Based Diffusion Policy Optimization (MBDPO) in world models, a framework that unifies search and policy optimization through diffusion policy representations, thereby unlocking the potential of world models for scalable policy learning. Instead of constructing an explicit planner over a learned world model, we reformulate policy optimization as a diffusion process over searched trajectories in latent world models.  In this view, we extract an implicit energy function from the collected dataset that anchors the policy, enabling MBDPO to refine the score field for policy optimization while mitigating misalignment. We evaluate MBDPO across a wide range of settings, including multi-task offline pretraining, online learning, and offline-to-online fine-tuning. In the offline regime, we further investigate its scaling behavior by pretraining on large-scale datasets, observing consistent and monotonic performance gains with increasing model capacity.
\end{abstract}

\section{Introduction}

Model-based reinforcement learning (RL) has long been regarded as a promising paradigm for sample-efficient planning in general cases. By learning an explicit dynamical system, agents can dream and plan without multiple interactions with real environments \citep{sutton1996model, sutton1998reinforcement}. Recent advances in world models have further expanded this paradigm \citep{garrido2024learning, zhou2024dino, hafner2025mastering, psenka2026parallel}, enabling large models to capture rich environment dynamics across diverse tasks. Instead of serving solely as local frame predictors \citep{sv2023gradient, thrun1990planning, guan2023leveraging}, modern world models aim to learn generalizable representations of environment dynamics that can be reused across downstream control problems. At the same time, the emergence of large-scale datasets has created new opportunities for pretraining world models at scale \citep{wu2023daydreamer, hafner2025training}. Analogous to foundation models in vision and language, pretrained world models offer the potential to unlock scalable model-based RL among multi-tasks \citep{hansen2023td}.  

Despite recent advances, model-based RL methods built upon world models still struggle to meaningfully outperform their model-free counterparts \citep{van2019use, fujimoto2025towards, chang2026surprising}. Two explanations have been proposed. The most common explanation attributes this shortfall to limited model accuracy and compounding prediction errors \citep{m2023model, lambert2022investigating}. When a learned model is used to simulate long-horizon trajectories, even small one-step prediction errors accumulate over time, resulting in increasingly unreliable state estimates. Motivated by this observation, prior work has focused on mitigating policy degradation induced by imperfect model dynamics. Representative approaches include improving model accuracy \citep{nagabandi2018neural, hafner2019learning}, incorporating uncertainty-aware modeling \citep{deisenroth2011pilco, curi2020efficient, fu2022model}, employing truncated or branched rollouts \citep{janner2019trust, park2025horizon}, and enhancing long-horizon prediction \citep{ma2024transformer, farebrother2025temporal}. Underlying these efforts is an implicit assumption: that sufficiently accurate world models will naturally translate into better planning and, ultimately, better control performance.

However, recent work challenges this point \citep{chang2026surprising}, arguing that model accuracy alone does not dictate planning effectiveness. Even highly accurate models can fail to yield strong policies, causing RL within world models to struggle in both performance and scalability.
The core obstacle lies in the misalignment between policy search and value learning. Due to the substantial computational cost of search, the value function is typically trained using rollouts generated by a learned non-search policy network \citep{hansen2022temporal, hansen2023td, hansen2025learning, wang2025bootstrapped, zhan2025bootstrap, shimizu2024bisimulation}. In contrast, policy improvement relies on trajectories produced by a search procedure (i.e., model predictive path integral \citep{williams2015model}) over a learned world model. This mismatch introduces a distributional discrepancy: the value function is trained on state-action pairs generated by a non-search policy, whereas policy execution and improvement are governed by trajectories produced via search over the world model. As a result, the value function is often queried on out-of-distribution state-action pairs where its estimates are unreliable and typically overoptimistic; the search procedure then exploits these errors, leading to systematic overestimation bias and ultimately degrading both performance and scalability \citep{fujimoto2019off}.

% \begin{wrapfigure}{r}{0.7\linewidth}
% \centering
% % \vspace{-10pt}
% \includegraphics[width=\linewidth]{abstract.pdf}
% \vspace{-10pt}
% \caption{}
% \label{fig:mbdpo}
% \vspace{-10pt}
% \end{wrapfigure} 

\begin{figure}[ht]
    \centering
    \includegraphics[width=0.9\linewidth]{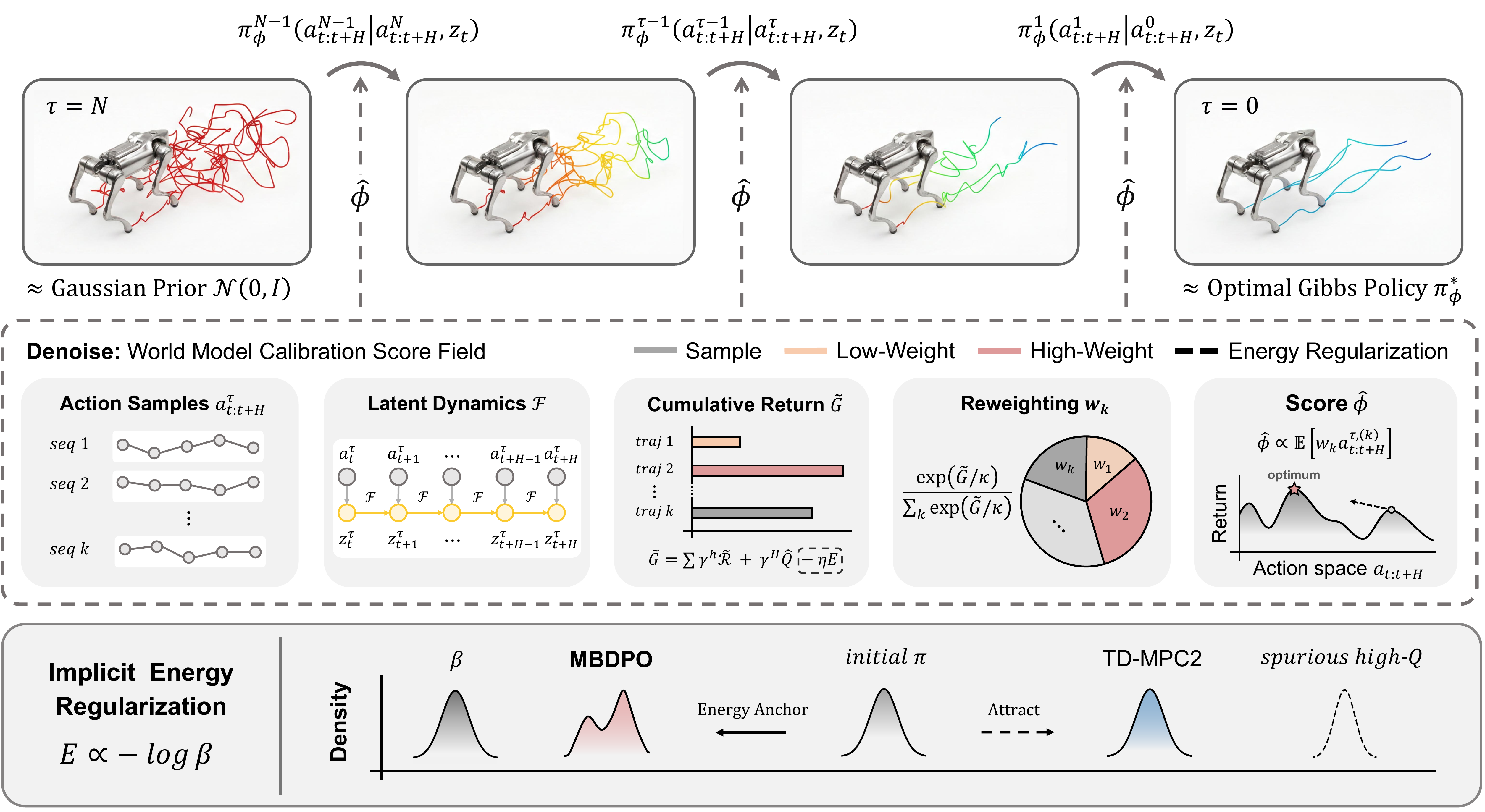}
%     \caption{\textbf{Core Framework of MBDPO.} 
% The target policy distribution is progressively shaped by a sequence of stepwise transition kernels (from $\pi_\phi^N$ to $\pi_\phi^0$), which transform a Gaussian prior $\mathcal{N}(0, I)$ into the optimal Gibbs policy $\pi_\phi^*$ via multi-step refinement. 
% Crucially, each transition kernel $\pi^{\tau-1}_{\phi}(a_{t:t+H}^{\tau-1}| a_{t:t+H}^{\tau}, z_t)$ is governed by the score function $\hat{\phi}$, which is estimated entirely within the learned world model: candidate sequences are rolled out through the latent dynamics $\mathcal{F}$, evaluated by their energy-regularized cumulative return $\tilde{G}$, and reweighted to recover the score that points toward higher-return actions. 
% In this way, the world model redefines and corrects the score field, mathematically transforming the standard generative denoising process into model-based policy optimization. 
% Furthermore, the implicit energy $E \propto -\log\beta$ anchors the policy to the behavior distribution within a KL trust region (bottom), thereby unifying search and policy optimization.}
\caption{\textbf{Core framework of MBDPO.} 
The target policy distribution is progressively shaped by a sequence of stepwise transition kernels, from $\pi_\phi^N$ to $\pi_\phi^0$, which transforms a Gaussian prior $\mathcal{N}(0, I)$ into the optimal Gibbs policy $\pi_\phi^*$ through multi-step refinement.  Crucially, each transition kernel $\pi^{\tau-1}_{\phi}(a_{t:t+H}^{\tau-1}|a_{t:t+H}^{\tau}, z_t)$ is governed by the score function $\hat{\phi}$, estimated entirely within the learned world model: Monte Carlo action sequence samples are rolled out through the latent dynamics $\mathcal{F}$, evaluated by their energy-regularized cumulative return $\tilde{G}$, and reweighted to recover a score that points toward higher-return actions.  In this way, the world model redefines and corrects the score field, transforming the standard generative denoising process into model-based policy optimization.  Furthermore, the implicit energy $E \propto -\log\beta$ anchors the policy to the non-search policy distribution within a KL trust region, thereby unifying search and policy optimization.}
    \label{fig:mbdpo}
\end{figure}

Motivated by the bottlenecks of prior approaches, we investigate how to overcome these two limitations and unlock the scalability potential of model-based RL within world models. In this work, we introduce a novel perspective grounded in diffusion models and rethink how to make model-based RL in world models scalable. From this perspective, we demonstrate that world models enable direct policy optimization, where score matching is achieved through imagined trajectories. Our contributions are threefold:

(1) We revisit the foundation of policy optimization and show, from the perspectives of both value iteration and policy improvement, that the misalignment between search and value learning can lead to suboptimality in existing world model approaches.

(2) We propose model-based diffusion policy optimization (MBDPO), a unified framework that formulates search and policy optimization as a diffusion process over imagined trajectories of world models, thereby unlocking the potential of world models for scalable policy learning (see Figure~\ref{fig:mbdpo}). In this formulation, the buffer dataset induces an implicit energy function that anchors policy updates to the behavior distribution. This mechanism enables score correction beyond the suboptimality in previous methods. By doing so, MBDPO unifies search and policy optimization, eliminating their structural misalignment.

(3) We develop a scalable diffusion RL algorithm jointly trained with the world model and demonstrate consistent, monotonic performance improvements as model capacity increases across diverse settings, including multi-task offline pretraining, online learning, and offline-to-online fine-tuning. Beyond these quantitative results, we visualize our controlled latent trajectories to evaluate the underlying representation, finding that the learned policy consistently exhibits structured behaviors that align with physical intuition.

% \begin{itemize}
%     \item We revisit the foundation of policy optimization and show, from the perspectives of both value iteration and policy improvement, that the misalignment between search and value learning can lead to suboptimality in existing world model approaches.
%     \item We propose model-based diffusion policy optimization (MBDPO), a unified framework that formulates search and policy optimization as a diffusion process over imagined trajectories of world models, thereby unlocking the potential of world models for scalable policy learning (see Figure~\ref{fig:mbdpo}). In this formulation, the buffer dataset induces an implicit energy function that anchors policy updates to the behavior distribution. This mechanism enables score correction, beyond suboptimality in previous methods. By doing so, MBDPO unifies search and policy optimization, eliminating their structural misalignment.
%     \item We develop a scalable diffusion RL algorithm jointly trained with the world model and demonstrate consistent, monotonic performance improvements as model capacity increases across diverse settings, including multi-task offline pretraining, online learning, and offline-to-online fine-tuning. Beyond these quantitative results, we visualize our controlled latent trajectories to evaluate the underlying representation, finding that the learned policy consistently exhibits structured behaviors that align with physical intuition.
% \end{itemize}
% (1)  (3) 
\textbf{Related Work.} We review prior work in two areas: world models for RL, and diffusion policy. 

\textbf{World Models.} The concept of world models originates from Jay W. Forrester, who introduced it for constructing mental representations of complex systems \citep{forrester1971counterintuitive}. It was later adapted to machine learning, enabling agents to learn and act within learned internal environments \citep{ha2018world}. Later, the Dreamer series \citep{hafner2019learning, hafner2020mastering, hafner2025mastering, hafner2025training} transformed latent world modeling from a proof-of-concept into a scalable and robust RL paradigm. However, due to model bias, training the policy purely within the learned world model can lead to distributional shift \citep{janner2019trust}. Another line of research, TD-MPC \citep{hansen2022temporal, hansen2023td, hansen2025learning}, 
combines model-based and model-free RL. However, the existence of misalignment between search and value hinders scalability and performance. Recent approaches, including Vision-Language-Action models with world models \citep{zhen20243d,cen2025worldvla, team2026gigabrain, intelligence2026pi} and World Action Models (WAMs) \citep{han2025percept, ye2026worldactionmodelszeroshot, li2026causal}, have been proposed to generate action chunks and state representations, driven by the rapid development of large pre-trained vision models. Compared to RL, the WAM family does not perform explicit policy optimization, which limits its ability to achieve long-horizon control and robust generalization.

\textbf{Diffusion Policy.} Diffusion policies can be broadly categorized into model-free and model-based approaches.  Model-free diffusion policies do not rely on learned dynamics; instead, they directly learn the policy distribution from a fixed dataset \citep{janner2022planning, chi2025diffusion} or a learned $Q$-function \citep{psenka2023learning, dong2025maximum, cheng2026does, li2026q}.  In contrast, model-based diffusion techniques leverage environment dynamics to estimate the score function without relying on offline data \citep{pan2024model, xue2025full, cheng2025safe}. Crucially, computing this score function requires access to the transition dynamics, making world models an ideal substrate; they provide the simulated environment necessary. However, in many practical scenarios, ground-truth dynamics are unavailable, limiting their direct applicability. This bottleneck, in turn, highlights the critical necessity of learned world models. By approximating a diverse spectrum of dynamics, world models hold the key to unlocking the full potential of model-based diffusion for scalable policy learning. Despite this immense promise, effective model-based diffusion policies within learned world models remain underexplored.

% \textbf{Diffusion Policy.}
% Instead of generating a one-step action, we locally optimize an action sequence 
% $a_{t:t+H} = \{ a_t, \dots, a_{t+H} \}$ 
% and execute only the first action in a receding-horizon manner.
% We now describe how reinforcement learning leverages diffusion models \citep{ho2020denoising}.

% Let the cumulative scale factor be
% $\bar{\alpha}^{\tau} = \prod_{k=1}^{\tau} \alpha^k$.
% The forward and reverse diffusion processes are defined as
% \begin{align}
% \tag{forward}
% \pi^{\tau}(a_{t:t+H}^{\tau} \mid a_{t:t+H}^{0}, s_t)
% \sim
% \mathcal{N}\big(
% \sqrt{\bar{\alpha}^{\tau}}\, a_{t:t+H}^{0},
% (1-\bar{\alpha}^{\tau}) I
% \big), \\
% \tag{backward}
% a_{t:t+H}^{\tau-1}
% =
% \frac{1}{\sqrt{\alpha^{\tau}}}
% \left(
% a_{t:t+H}^{\tau}
% +
% (1 - \alpha^{\tau})
% \colorbox{red!15}{$\nabla_{a_{t:t+H}^{\tau}}
% \log \pi(a_{t:t+H}^{\tau} \mid s_t)$}
% \right).
% \end{align}

% The key difference between model-free and model-based diffusion lies in how the score function is estimated. Model-free diffusion approximates the score using the $Q$-function, whereas model-based diffusion estimates it through imagined rollouts and trajectory dreaming.
\vspace{-0.5em}
\section{Preliminaries}
\textbf{Model-Based RL.} We consider a Markov Decision Process (MDP) defined by the tuple $\mathcal{M} = (\mathcal{S}, \mathcal{A}, \mathcal{T}, r, \rho, \gamma)$ \citep{puterman1990markov}, where $\mathcal{S}$ and $\mathcal{A}$ denote the state and action spaces, respectively. $\rho$ is the initial state distribution, $\mathcal{T}: \mathcal{S} \times \mathcal{A} \to \mathcal{S}$ represents the transition dynamics, $r: \mathcal{S} \times \mathcal{A} \to \mathbb{R}$ is the reward function in the state space, and $\gamma \in (0,1)$ is the discount factor. For a policy $\pi(a|s)$, the action-value function (or $Q$-function) $Q^\pi(s, a)$ is defined as the expected discounted return starting from state $s$ and action $a$: $Q^\pi(s, a) \coloneqq \mathbb{E}_{\pi} \left[ \sum_{t=0}^\infty \gamma^t r(s_t, a_t) \;\middle|\; s_0 = s, a_0 = a \right]$. Our objective is to find a policy $\pi$ that maximizes the expected discounted return: $\mathbb{E}_{s \sim d^{\pi}, a \sim \pi} [Q^\pi(s, a)].$ In this paper, we jointly optimize the policy $\pi$ and a world model that approximates the latent MDP dynamics.

\textbf{Latent World Model.} Drawing on recent advances \citep{hansen2023td, hansen2025learning,  hafner2025mastering}, we employ a latent world model to capture compact representations essential for complex decision-making tasks. Our world model comprises the following components:
\begin{equation}
\label{eq:components_mapping}
\begin{alignedat}{3}
    &\text{Encoder}         && \mathcal{E} \colon \mathcal{S} \times \textcolor{gray}{\mathcal{E}_{env}} \to \mathcal{Z}, \quad && z_t = \mathcal{E}(s_t, \textcolor{gray}{e}); \\
    &\text{Latent Dynamics} \quad && \mathcal{F} \colon \mathcal{Z} \times \mathcal{A} \times \textcolor{gray}{\mathcal{E}_{env}} \to \mathcal{Z}, \quad && z_{t+1} = \mathcal{F}(z_t, a_t, \textcolor{gray}{e}); \\
    &\text{Latent Reward}   && \mathcal{R} \colon \mathcal{Z} \times \mathcal{A} \times \textcolor{gray}{\mathcal{E}_{env}} \to \mathbb{R}, \quad && \hat{r}_t = \mathcal{R}(z_t, a_t, \textcolor{gray}{e});
\end{alignedat}
\end{equation}
% \vspace{-8pt}
where $z_t \in \mathcal{Z}$ denotes the latent state and \textcolor{gray}{$e \in \mathcal{E}_{env}$} is a learnable task embedding designed for multi-task generalization. For brevity, we \textit{omit} \textcolor{gray}{$e$} in subsequent discussions unless its inclusion is explicitly required. Rather than operating in the raw observation space, our RL framework is built entirely upon the learned latent world model.

\textbf{Diffusion Policy.} Instead of generating a single-step action, we optimize an action sequence $a_{t:t+H} \coloneqq \{ a_t, \dots, a_{t+H} \}$ and execute it in a receding-horizon fashion, similar to MPC. We adopt the Denoising Diffusion Probabilistic Model (DDPM) framework \citep{ho2020denoising} to represent our generative policy. Let $\tau \in \{1, \dots, N\}$ denote the diffusion timestep. Since each step of the reverse-time transition is determined by the score function $\phi(z_t, a_{t:t+H}^\tau, \tau) \coloneqq \nabla_{a_{t:t+H}^{\tau}}
\log \pi^\tau_\phi(a_{t:t+H}^{\tau} | z_t)$, the evolution of the policy is directly governed by $\phi$. Accordingly, the reverse diffusion process is characterized by the reverse-time transition kernel $\pi_{\phi}^{\tau-1} (a_{t:t+H}^{\tau-1}|a_{t:t+H}^{\tau}, z_t)$, where the marginal distribution of the clean action sequence is 
\begin{equation} \label{eq:policy_refinement}
    \pi_{\phi}^{0} (a_{t:t+H}^{0}|z_t)= \int_{a_{t:t+H}^{1:N}} \pi^{N}_{\phi}(a_{t:t+H}^{N}|z_t) \prod_{\tau = N}^1 \pi_{\phi}^{\tau-1} (a_{t:t+H}^{\tau-1}|a_{t:t+H}^{\tau}, z_t) da_{t:t+H}^{1:N},
\end{equation}
which describes a multi-step refinement process that transforms from a Gaussian prior $\pi^{N}(a_{t:t+H}^{N}|z_t) \sim \mathcal{N}(0, I)$. Specifically, the stepwise transition kernel is guided by the score function:
\begin{equation} \label{eq:transition_kernel} \small
    \pi_{\phi}^{\tau-1} (a_{t:t+H}^{\tau-1} |a_{t:t+H}^{\tau}, z_t) = \mathcal{N} \left( a_{t:t+H}^{\tau-1} ; \frac{1}{\sqrt{\alpha^\tau}} \left( a_{t:t+H}^\tau + (1-\alpha^\tau) \phi(z_t, a_{t:t+H}^\tau ,\tau) \right), \sigma^2(\tau) I \right),
\end{equation}
where $\alpha^\tau$ is the scalar factor of noise injection at the $\tau$ step, and $\sigma^2(\tau)$ denotes the step-dependent variance. In contrast to conventional diffusion policies in the model-free setting, our approach leverages the world model to estimate the score function $\phi$ via imagined trajectories, thereby unifying the search and policy optimization. The next section establishes the link between the diffusion refinement in \eqref{eq:policy_refinement} and policy optimization within the world model framework.
% Let $\tau \in \{1, \dots, N\}$ denote the diffusion timestep and $\bar{\alpha}^{\tau} \in (0, 1]$ be the cumulative noise schedule. The forward and reverse diffusion processes for the action sequence are defined as:
% \begin{align} 
% \text{(Forward)} \quad & p(a_{t:t+H}^{\tau} \mid a_{t:t+H}^{0}, s_t) = \mathcal{N}\left(\sqrt{\bar{\alpha}^{\tau}} a_{t:t+H}^{0}, (1-\bar{\alpha}^{\tau}) I\right), \\
% \text{(Reverse)} \quad & a_{t:t+H}^{\tau-1} = \frac{1}{\sqrt{\alpha^{\tau}}} \left( a_{t:t+H}^{\tau} + (1 - \alpha^{\tau}) \colorbox{red!15}{$\nabla_{a_{t:t+H}^{\tau}}
% \log \pi(a_{t:t+H}^{\tau} \mid s_t)$} \right) , \label{eq:reverse_sampling}
% \end{align}
% The core distinction between model-free and model-based diffusion policies lies in the estimation of the score function \colorbox{red!15}{$\nabla_{a_{t:t+H}^{\tau}}
% \log\pi$}. Model-free approaches approximate the score via gradients of a learned $Q$-function, which can suffer from sharp gradients. In contrast, by exploiting a learned world model, we can directly perform policy optimization through imagined trajectories without any gradients. 

Given the unified latent representation within world models, we focus on the following central question:
\begin{center}
\textit{What are the fundamental bottlenecks in scaling model-based RL within latent world models, and how can we architect scalable algorithms to overcome them?}
\end{center}
\vspace{-0.5em}
In the subsequent section, we analyze the scalability limitations inherent in current model-based frameworks and propose a principled approach to resolve these challenges.
 \vspace{-0.5em}
\section{Method}
In this section, we analyze why existing model-based RL approaches in world models do not correspond to a monotonic policy improvement, and thus fail to guarantee convergence to the optimal policy. To address this issue, we propose a corrected and scalable learning paradigm that reconsiders the interplay between the world model and policy optimization. 

\subsection{Misalignment Between Search and Value Learning} 
We briefly revisit RL strategies built upon world models, such as those proposed in \citep{hansen2022temporal, hansen2025learning, zhan2025bootstrap, lin2025tdmpc2improvingtemporaldifference}. In these representative approaches, a sampling-based planner operates within the learned world model to generate high-value actions for environment interaction. However, this design inherently introduces a misalignment between policy improvement and value learning \citep{chang2026surprising}. Specifically, while the value function is trained on state-action pairs generated by a non-search policy $\beta$\footnote{
Here, $\beta$ denotes the data-collection policy associated with the training samples. 
In offline settings, $\beta$ is the behavior policy that generated the fixed offline dataset. 
In online settings, $\beta$ refers to the implicit replay-buffer policy, i.e., the mixture of historical policies that collected the transitions stored in the replay buffer.
}, the executed policy $\pi$\footnote{Here, we use $\pi$ to denote the search policy used in other methods, contrasting it with our diffusion policy $\pi_{\phi}$.} relies on the sampling-based planner over the world model, resulting in a significant distributional discrepancy (see more details in Appendix~\ref{append:previous_method_stragety}). This misalignment hinders performance and limits scalability. In particular, we elucidate why such a procedure may fail to guarantee optimality, suggesting that it does not constitute true policy optimization in the conventional sense.

% In this sense, the world model is \textit{explicitly} leveraged for policy improvement, primarily to enhance sample efficiency. However, the discrepancy between the value learning (updated using search actions from planner) and the parameterized policy $\pi$ introduces a fundamental misalignment. This mismatch can hinder convergence and limit scalability. In particular, we revisit why such a learning procedure may fail to converge to a Banach fixed point.

% While search aims for policy improvement, it often leads to a vicious cycle of error propagation: (1). Since $\hat Q$ is trained on $\pi$, it lacks accuracy in regions unexplored by $\pi$. Search policy $\mu$ search from learned world models naturally gravitates toward actions with maximum $\hat Q$. These peaks are often artifacts of approximation error rather than true value, leading to: $\hat Q(z,a) \ge Q^\pi(z,a)$ for some $(z,a)$. (2). The Bellman update $T^\mu \hat Q$ propagates this overestimation bias back into the value function. Instead of correcting the error, the update reinforces it by treating the fake peaks as target values. The iterative updating can lead a suboptimal results according to the following results. 

\begin{insightmwithframe}[Value Iteration Gap] \label{prob_1}
Let $\beta$ denote the learned, non-search policy, and let $\pi$ be the policy induced by search. Suppose $\hat Q$ is the learned value function trained on data collected under $\beta$. Then, the discrepancy between the Bellman operators under $\pi$ and $\beta$ (denoted as $T^\pi$ and $T^\beta$, respectively; see Definition~\ref{def:bellman_operator}) introduces the following error when updating $\hat{Q}$:
\begin{equation} \label{eq:bellman_gap}
\|T^\pi \hat Q - T^\beta \hat Q\|_{\infty} \leq \gamma \| \hat{Q}\|_{\infty} \sqrt{2D_{\text{KL}}^{\max}(\pi\|\beta)},
\end{equation}
where $D_{\text{KL}}^{\max}(\pi\|\beta) \coloneqq \sup_{z\in \mathcal{Z}} D_{\text{KL}}(\pi(\cdot|z)\|\beta(\cdot|z))$, see the detailed proof in Appendix~\ref{append:proof_prob_1}. 
\end{insightmwithframe}

While search aims to improve the policy, it can induce a vicious cycle of error amplification. Since $\hat{Q}$ is trained under $\beta$, it is unreliable in out-of-distribution regions. The search policy $\pi$ tends to select actions that maximize $\hat{Q}$, which often correspond to spurious overestimated values, i.e., $\hat{Q}(z,a) > Q^{\beta}(z,a)$ for some $(z,a) \in \mathcal{Z\times A}$. The Bellman update $T^\pi \hat Q$ then bootstraps on these erroneous peaks, reinforcing rather than correcting them. As a result, the iterative updates can amplify the estimation error of $\hat{Q}$ as shown in \eqref{eq:bellman_gap}.

\begin{insightmwithframe}[Policy Improvement Gap] \label{prob_2}
Let $J(\pi)$ and $J(\beta)$ denote the standard discounted returns induced by
policies $\pi$ and $\beta$, respectively. Furthermore, let
$\hat{J}(\beta)
\coloneqq
\mathbb{E}_{z \sim d^{\beta}, a \sim \beta}[\hat{Q}(z,a)]$
and
$\hat{J}_{\beta}(\pi)
\coloneqq
\mathbb{E}_{z \sim d^{\beta}, a \sim \pi}[\hat{Q}(z,a)]$
represent the estimated $Q$-based surrogate objectives evaluated under the
discounted state occupancy distribution $d^\beta$ using an approximate value
function $\hat Q$. Then, the performance gap satisfies the following bound:
\begin{equation}  \label{eq:updating_gap}
    \underbrace{J(\pi) - J(\beta)}_{\text{true improvement}} \geq \frac{1}{1-\gamma}\underbrace{\big(\hat{J}_{\beta}(\pi) - \hat{J}(\beta)\big)}_{\text{estimated improvement}} - C_1 D_{\text{KL}}^{\max}(\pi\|\beta) - \underbrace{C_2 \| Q^{\beta} - \hat{Q} \|_{\infty}}_{\text{value iteration gap in}~\text{Problem}~\ref{prob_1}},
\end{equation}
where $C_1$ and $C_2$ are positive constants depending on the discount factor $\gamma$ and the boundedness of the value functions, see the analysis in Appendix~\ref{append:proof_prob_2}. 
\end{insightmwithframe}
The above result in \eqref{eq:updating_gap} highlights two key failure modes induced by the search policy. First, the policy improvement is inherently local under the metric of KL divergence: once the search policy $\pi$ deviates significantly from $\beta$, the learned value function $\hat Q$ no longer provides a reliable estimate for evaluating $\pi$, leading to biased performance estimation. In particular, the approximation gap $\|Q^\beta - \hat Q\|_\infty$ can grow as updates are performed under $\pi$ (as shown in Problem~\ref{prob_1}), since the Bellman operator $T^\pi$ propagates and reinforces errors in regions where $\hat Q$ is inaccurate. Empirically, this phenomenon is characterized by a severe inflation in cross TD error and substantial action drift relative to the base policy, as illustrated in Figure~\ref{fig:insight_misaligment}. Together, these theoretical and empirical effects culminate in a self-reinforcing cycle of distribution shift and error amplification.

\begin{figure}[ht]
    \centering
    \includegraphics[width=1.00\linewidth]{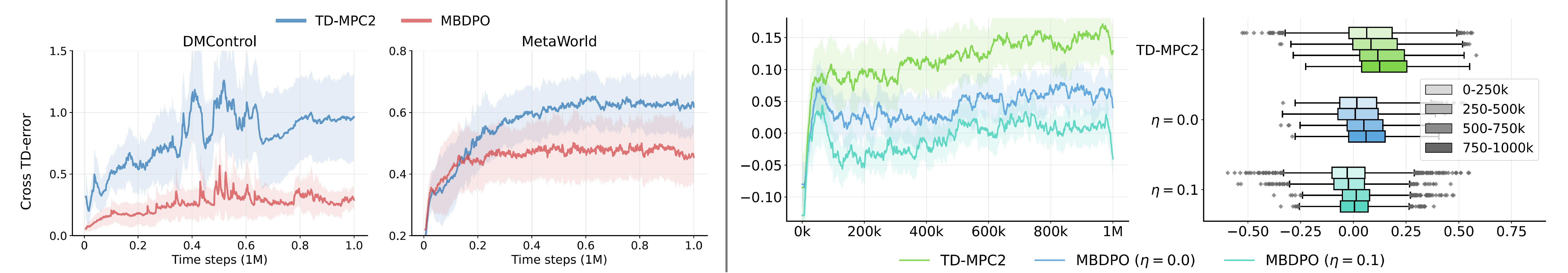}
    \caption{(Left) Cross temporal difference (TD) error comparison between MBDPO and TD-MPC2 across two benchmark suites, DMControl and MetaWorld (detailed subtask curves are provided in Figure~\ref{fig:all_TD_error}). (Right) Averaged relative action drift during training. (Right 1) Exponential moving average (EMA) of the mean drift across 8 tasks; larger values indicate greater deviation. (Right 2) Action drift distributions across different training intervals, measured by the average log-likelihood ratio 
$\frac{1}{H}\sum_{h=0}^{H}\log\frac{\pi(a_h \mid z_h)}{\beta(a_h \mid z_h)}$. Compared to TD-MPC2, the diffusion policy optimization framework significantly reduces action drift, with the contrastive variant ($\eta=0.1$) yielding the closest alignment with the policy network. The reported evaluation averages results across 8 distinct online tasks (individual subtask distributions are detailed in Figure~\ref{fig:all_method_drift}).}
\label{fig:insight_misaligment}
\end{figure}

At the core, this issue arises from the interplay between value approximation error and unconstrained policy improvement within world models: the former introduces spurious high-value estimates, while the latter allows the search policy $\pi$ to deviate significantly from $\beta$, exacerbating distribution shift. To bridge this gap, we unify search and policy optimization into a single algorithm, achieving  a monotonic policy improvement within the world model.

\subsection{Model-Based Diffusion Policy Optimization (MBDPO) in World Models}
In this subsection, we define the policy learning objective and detail the core component of our framework: MBDPO within world models. 

\textbf{Policy Objective.} In RL built upon world models, policy optimization is framed as seeking a policy $\pi_\phi$ through the following formulation: 
\begin{equation} \label{eq:policy_objective}
    \arg\max_{\pi_{\phi}} \mathbb{E}_{a_{t:t+H}\sim \pi_{\phi}(\cdot|z_t)} \big[ G(z_h, a_{t:t+H})  \big], 
\end{equation}
where
\begin{equation}
    G(z_h, a_{t:t+H}) \coloneqq \mathbb{E}_{z_{h+1}\sim \mathcal{F}(z_h, a_h), \  a_{t:t+H}\sim \pi_{\phi}(\cdot|z_t),} \big[ \sum_{h = t}^{t+H-1} \gamma^{h-t} \mathcal{R}(z_h, a_h) + \gamma^{H} \hat{Q}(z_{t+H}, a_{t+H})  \big].
\end{equation}
In prior work \citep{hansen2022temporal, hansen2023td, hansen2025learning, zhan2025bootstrap, psenka2026parallel}, the action sequence $a_{t:t+H}$ is typically obtained via model predictive path integral (MPPI). However, this approach can be problematic: MPPI is initialized with a Gaussian prior and relies on trajectories generated by the world model, which can lead to a large KL divergence from the non-search policy $\beta$. Consequently, both the Bellman update and policy improvement may enter a detrimental feedback loop, as illustrated in Problems~\ref{prob_1} and \ref{prob_2}. Essentially, this paradigm does not constitute true policy optimization; it merely performs a heuristic search that lacks the guarantees and distributional consistency required for a monotonic improvement. To address this, it is imperative to establish a principled policy optimization objective that simultaneously maximizes expected returns and constrains the search policy to the non-search base policy $\beta$:
\begin{equation} \label{eq:Lagrangian}
\begin{split}
    \max_{\pi_{\phi}} \quad & \mathbb{E}_{a_{t:t+H}\sim \pi_{\phi}(\cdot|z_t)} \big[G(z_t, a_{t:t+H}) \big] \\
    \text{s.t.} \quad & \underbrace{D_{\text{KL}}(\pi_\phi(\cdot|z_t)\|\beta(\cdot|z_t)) \leq \varepsilon}_{\text{KL constraint}},\\
    & \underbrace{z_{h+1} = \mathcal{F}(z_h, a_h), \quad \forall h \in \{t, \dots, t+H-1\}.}_{\text{latent dynamics constraint}}
\end{split}
\end{equation}
Here, the KL divergence enforces policy regularization to prevent suboptimality, while the stepwise transitions encapsulate the rollouts within the learned latent world model $\mathcal{F}$ over a planning horizon $H$. The formulation in \eqref{eq:Lagrangian} anchors the policy optimization on the distribution $\beta$, thereby mitigating the failure modes characterized in Problems~\ref{prob_1} and \ref{prob_2}. Because the exact density of $\beta$ is intractable, we introduce the following implicit energy function.

\textbf{Implicit Energy Function.} To this end, the implicit energy function $E_{\theta}$ is learned from the buffer dataset via a contrastive objective:
\begin{equation} \label{eq:minimization_of_energy}
   \min_{E_\theta} \mathcal{L}(E_\theta) =  - \mathbb{E}\Big[ \log \frac{\exp(-E_\theta(z,a))}{\exp(-E_\theta(z,a))+\sum_{j} \exp(-E_\theta(z, a_j^-))} \Big].
\end{equation}
As the dataset is large-scale and diverse, the negative samples $a_j^-$ randomly drawn from other latent states within the same batch can be closely approximated by a uniform distribution over the entire action space. Consequently, the summation in the denominator asymptotically tracks the unweighted partition function, thereby aligning the learned energy function with the log behavior policy density up to a state-dependent constant:
\begin{align*}
    -E(z,a) \approx \log \beta(a|z) + C(z).
\end{align*}
Based on the learned energy, we can reparameterize the reward function by absorbing the implicit energy function as $\tilde{\mathcal{R}} (z_h, a_h) = \mathcal{R}(z_h, a_h) - \frac{\eta}{\gamma^{h-t}} E(z_h, a_h)$, where $\eta$ is the regularized factor. 

\textbf{Diffusion Policy.} Instead of relying on MPPI merely for policy search, we adopt MBDPO to unify search and policy optimization. Unlike the conventional model-free setting, the score function $\phi$ in \eqref{eq:transition_kernel} can be estimated directly from trajectories imagined from the learned world model, without requiring a forward diffusion process:
\begin{align}
    \phi(z_t, a_{t:t+H}^\tau, \tau) = -\frac{a^{\tau}_{t:t+H}}{1-\Bar{\alpha}^\tau} + \frac{\sqrt{\Bar{\alpha}^\tau}}{1-\Bar{\alpha}^\tau} \mathbb{E}_{a_{t:t+H}^{0|\tau} \sim \mathcal{N}(\frac{1}{\sqrt{\Bar{\alpha}^\tau}} a_{t:t+H}^\tau, \frac{1-\Bar{\alpha}^\tau}{\Bar{\alpha}^\tau} I )}  \big[w(a^{0|\tau}_{t:t+H}) \cdot a_{t:t+H}^{0|\tau}  \big]. \label{eq:score_matching}
\end{align}
Here, $a_{t:t+H}^{0|\tau}$ is a random variable that follows the sample posterior distribution $\mathcal{N}(\frac{1}{\sqrt{\Bar{\alpha}^\tau}} a_{t:t+H}^\tau, \frac{1-\Bar{\alpha}^\tau}{\Bar{\alpha}^\tau} I )$, and $\bar{\alpha}^\tau = \prod_{k=1}^\tau \alpha^k$. The importance weight $w(a_{t:t+H}^{0|\tau})$ is defined as: 
\begin{equation} \label{eq:importance_weight}
    w(a_{t:t+H}^{0|\tau}) = \frac{\exp(\frac{\tilde{G}(z_t, a_{t:t+H}^{0|\tau})}{\kappa})}{\mathbb{E}_{a_{t:t+H}^{0|\tau} \sim \mathcal{N}(\frac{1}{\sqrt{\Bar{\alpha}^\tau}} a_{t:t+H}^\tau, \frac{1-\Bar{\alpha}^\tau}{\Bar{\alpha}^\tau} I )}[\exp(\frac{\tilde{G}(z_t, a_{t:t+H}^{0|\tau})}{\kappa})]},
\end{equation}
where $\kappa$ is the temperature factor, with $$\tilde{G}(z_t, a_{t:t+H}^{0|\tau}) \coloneqq \sum_{h = t}^{t+H-1} \gamma^{h - t} \tilde{\mathcal{R}}(z_h, a^{0|\tau}_h) + \gamma^{H} \hat{Q}(z^{0|\tau}_{t+H}, a^{0|\tau}_{t+H}) - \eta E(z^{0|\tau}_{t+H}, a^{0|\tau}_{t+H})$$ representing the cumulative return regularized with the implicit energy. By substituting the score function into the reverse-time transition kernel in \eqref{eq:transition_kernel}, the action sequence $a_{t:t+H}^{\tau-1}$ at $\tau-1$ is
\begin{equation}\label{eq:sampling}
    a_{t:t+H}^{\tau-1}
=
\frac{1}{\sqrt{\alpha^\tau}}
\left(
a_{t:t+H}^{\tau}
+
(1-\alpha^{\tau})
\phi(z_t, a_{t:t+H}^{\tau}, \tau)
\right)
+
\sigma(\tau)\epsilon,
\quad
\epsilon \sim \mathcal{N}(0, I)
\end{equation}
Lemma~\ref{lemma:score_function_proof} in the Appendix presents the detailed derivation of equations~\eqref{eq:score_matching}--\eqref{eq:sampling}. 
% This formulation clarifies the role of the world model. 
% \textit{The Gaussian proposal in~\eqref{eq:score_matching} determines which clean action sequences are plausible denoising candidates, while the world model determines how these candidates should be weighted by evaluating their predicted trajectory returns. 
% Thus, the world model enters the reverse transition kernel through the score function (shown in \eqref{eq:transition_kernel}): high-return imagined trajectories receive larger weights, and the resulting score biases the reverse transition kernel toward action sequences that are both diffusion-consistent and high-value under the latent dynamics.}
To generate the optimized action sequence, we start from a Gaussian prior $a_{t:t+H}^{N}\sim \mathcal{N}(0,I)$ and iteratively apply the reverse-time transition kernel defined in~\eqref{eq:policy_refinement} and \eqref{eq:transition_kernel}. At each denoising step, the score in~\eqref{eq:score_matching} shapes the transition kernel based on the returns evaluated from imagined trajectories within the world model. In this sense, the world model acts as a score-field corrector for policy optimization. Once the exact score function is matched in~\eqref{eq:score_matching}, the optimal policy for the KL-constrained objective in~\eqref{eq:Lagrangian} is derived, as stated in the following theorem.
% To generate the optimized action sequence, we initialize from a Gaussian prior $a_{t:t+H}^{N}\sim \mathcal{N}(0,I)$ and iteratively apply the reverse-time transition kernel in~\eqref{eq:policy_refinement} and \eqref{eq:transition_kernel}. 
% At each denoising step, the score in~\eqref{eq:score_matching} reshapes the current action distribution using evaluated returns from the learned world model. Consequently, reverse diffusion acts as model-based policy optimization in action-sequence space: it progressively transforms a Gaussian prior into an optimized policy distribution concentrated around high-value trajectories.  
% Moreover, this sampling procedure is equivalent to solving a KL-constrained policy optimization problem for objective~\eqref{eq:policy_objective}, stated as the following theorem. 
% This expectation is estimated via Monte Carlo sampling using trajectories imagined by the world model $z_{h+1} = \mathcal{F}(z_h, a_h)$. To generate the optimized action sequence, we initialize from $a_{t:t+H}^N \sim \mathcal{N}(0, I)$ and iteratively apply the reverse diffusion step in \eqref{eq:transition_kernel} by evaluating the score function. The derivation of the score in \eqref{eq:score_matching} is detailed in Lemma~\ref{lemma:score_function_proof} in Appendix. Notably, this sampling procedure is equivalent to performing KL-constrained policy optimization for objective~\eqref{eq:policy_objective}, where the resulting score function is distilled into a neural network to generate the optimized policy $\pi_{\phi}$.

\begin{theoremwithframe}[Equivalence of Score Matching and Policy Optimization]\label{theorem:score_matching_policy}
Consider the policy refinement task in \eqref{eq:policy_refinement} and the energy minimization objective in \eqref{eq:minimization_of_energy}. For a given temperature $\kappa > 0$ and any latent state $z_t \in \mathcal{Z}$, let $\pi^*_\phi$ be the solution to the KL-constrained policy optimization problem formulated in \eqref{eq:Lagrangian}. 
The optimum is attained if and only if the score matching condition in \eqref{eq:score_matching} is satisfied. Specifically, under the transition dynamics of the world model, the optimal policy $\pi^*_\phi$ can be approximated by the following Gibbs distribution:
\begin{align} \label{eq:Gibbs}
\pi^*_{\phi}(a_{t:t+H}| z_t) \propto \left(\prod_{h=t}^{t+H} \beta(a_h|z_h)\right) \exp\left(\frac{G(z_t,a_{t:t+H})}{\kappa}\right) \prod_{h=t}^{t+H-1} \mathbb{1}_{\{z_{h+1}=\mathcal F(z_h,a_h)\}},
\end{align}
where $\mathbb{1}_{\{\cdot\}}$ denotes the indicator function enforcing the latent dynamics, and the partition function is omitted for brevity. A detailed proof is provided in Appendix~\ref{append:proof_theorem_1}.
\end{theoremwithframe}
Theorem~\ref{theorem:score_matching_policy} provides a unified perspective that bridges search and policy optimization: rather than explicitly parameterizing a policy network, optimization can be implicitly executed through iterative rollouts and score-based sampling within the world model.

\textbf{Why Diffusion Policy?} The diffusion policy intrinsically bridges score matching within the world model and true policy optimization, unlocking the scaling potential of world models through a dual mechanism.
First, the world model continuously bootstraps trajectories by generating imagined rollouts to facilitate score matching; crucially, as established in Theorem~\ref{theorem:score_matching_policy}, this score matching process is mathematically equivalent to performing true policy optimization. Second, while a standard KL-constrained objective in \eqref{eq:Lagrangian} is typically intractable due to the intractable density function $\beta$, this framework bypasses the intractable integration by training an implicit energy function that is naturally absorbed into imagined trajectories within world models.

Since the optimal policy follows the Gibbs distribution in~\eqref{eq:Gibbs}, MBDPO directly addresses the bottlenecks identified in Problems~\ref{prob_1} and \ref{prob_2}. Specifically, it orchestrates both value iteration and policy improvement by ensuring that policy updates remain strictly bounded within a local trust region, thereby overcoming the suboptimality bottlenecks inherent in previous methods.

% \textit{From this perspective, the score matching in \eqref{eq:score_matching} based on the learned world model is equivalent to performing policy optimization by adjusting the stepwise transition kernel in \eqref{eq:transition_kernel}. Ultimately, this framework unifies search and policy optimization, executing the policy optimization via search (rollout trajectories) within the world model.} Theorem~\ref{theorem:score_matching_policy} addresses the bottlenecks identified in Problems~\ref{prob_1} and \ref{prob_2}: (1) it ensures that policy improvement remains within a trust region (constrained by $\epsilon$-KL divergence), enabling monotonic policy improvement as value estimation error shrinks, consistent with the gap analysis in~\eqref{eq:updating_gap}; (2) MBDPO enables true policy optimization directly within the world model, where the diffusion policy derived from the score function is optimized in tandem with the refinement of the learned world model and value estimation. Formal proof and technical discussion are provided in Appendix~\ref{append:proof_theorem_1}. 

\subsection{Practical Algorithm} \label{sec:practical_algorithm}
For the practical implementation of MBDPO, we adopt a joint-training paradigm that synchronizes world model learning with diffusion policy optimization (see Algorithm~\ref{alg:MBDPO} in Appendix~\ref{append:practical_implement}). We maintain a replay buffer $\mathcal{B}$ to store trajectory segments, supporting various training configurations including online, offline, and offline-to-online settings. The framework is jointly trained through an alternating optimization scheme that iteratively updates (i) the world model and value iteration, and (ii) the diffusion policy optimization. 

\textbf{(i) World Model and Value Iteration:}  Utilizing trajectory segments sampled from the buffer $\mathcal{B}$, we optimize the core components of the world model defined in \eqref{eq:components_mapping}, namely the \textit{\textcolor{cyan}{encoder $\mathcal{E}$}}, \textit{\textcolor{cyan}{latent dynamics $\mathcal{F}$}}, and \textit{\textcolor{cyan}{reward predictor $\mathcal{R}$}}. Simultaneously, the \textit{\textcolor{cyan}{value function $\hat{Q}$}} is updated via one-step temporal difference (TD) learning, while the \textit{\textcolor{cyan}{implicit energy $E_{\theta}$}} is refined according to the objective in \eqref{eq:minimization_of_energy}. The joint minimizer for $\mathcal{E, F, R}$, $\hat{Q}$ and $E_\theta$ can be written as 
\begin{equation} \label{eq:joint_minimizer}
    \min \mathbb{E}_{\tau \sim \mathcal{B}} \bigg[ \sum_{h = t}^{t+H} \gamma^{h-t} \bigg( \| \mathcal{F}(z_h, a_h) - \text{sg}(\mathcal{E}(s_{h+1})) \|_2^2 +  \text{CE}(\hat{r}_h, r(s_h, a_h)) + \text{TD}(\hat{Q}(z_h, a_h)) + \mathcal{L}(E_{\theta}) \bigg) \bigg], 
\end{equation}
where $\text{sg}(\cdot)$ denotes the \texttt{stop-grad} operator and $\text{CE}(\cdot, \cdot)$ represents the cross-entropy loss, utilized to capture sparse or discrete rewards across diverse environments. Following Lillicrap et al. \citep{lillicrap2016continuous}, the one-step TD loss is defined as $\text{TD}(\hat{Q}(z_h, a_h)) \coloneqq \text{CE}\big(\hat{Q}(z_h, a_h), r_h + \gamma \bar{Q}(z_{h+1}, a_{h+1})\big)$, where $\bar{Q}$ is the target network maintained as an exponential moving average (EMA) of $\hat{Q}$. The energy loss $\mathcal{L}(E_{\theta})$ is optimized according to \eqref{eq:minimization_of_energy}, using actions sampled from other latent states as negative pairs. Other training configurations, such as normalization and encoder architectures, strictly follow the TD-MPC framework to highlight the significance of scalable policy learning within world models. 

\textbf{(ii) Diffusion Policy Optimization.} As shown in \eqref{eq:score_matching} and Theorem~\ref{theorem:score_matching_policy}, the policy optimization relies on the imagined samples from the learned world models under a weighted expectation (red shadow of~\eqref{eq:monte_carlo}). Since the expectation cannot be computed analytically, we approximate it using Monte Carlo sampling. MBDPO employs a weighted Monte Carlo estimator based on a proposal distribution $q(a_{t:t+H}^{0|\tau} | z_t)$, chosen as a Gaussian $ \mathcal{N}(\frac{1}{\sqrt{\Bar{\alpha}^\tau}} a_{t:t+H}^\tau, \frac{1-\Bar{\alpha}^\tau}{\Bar{\alpha}^\tau} I )$ according to \eqref{eq:importance_weight}. The asymptotically unbiased expectation can thus be approximated using $T$ Monte Carlo samples $\{ a_{t:t+H}^{0|\tau, (i)} \}_{i = 1}^T$ drawn from $q(a_{t:t+H}^{0|\tau} | z_t)$:
\begin{equation} \label{eq:monte_carlo}
    \phi(z_t, a_{t:t+H}^\tau, \tau) \approx -\frac{a^{\tau}_{t:t+H}}{1-\Bar{\alpha}^\tau} + \frac{\sqrt{\Bar{\alpha}^\tau}}{1-\Bar{\alpha}^\tau}  \cdot \sum_{i = 1}^T \colorbox{red!10}{$w(a^{0|\tau, (i)}_{t:t+H})$} \cdot a_{t:t+H}^{0|\tau, (i)}
\end{equation}
with \colorbox{red!10}{$w(a^{0|\tau, (i)}_{t:t+H}) = \frac{\exp(\frac{\tilde{G}(z_t, a_{t:t+H}^{0|\tau, (i)})}{\kappa})}{\sum_{i=1}^T \exp(\frac{\tilde{G}(z_t, a_{t:t+H}^{0|\tau, (i)})}{\kappa}) }$} for samples $a_{t:t+H}^{0|\tau, (i)} \sim q(a_{t:t+H}^{0|\tau} | z_t)$. These weights rely on imagined trajectories rolled out by the learned latent dynamics $\mathcal{F}$ to evaluate the cumulative return $\tilde{G}$, as shown in Figure~\ref{fig:mbdpo}. To avoid the heavy computational overhead of online rollouts during deployment, this Monte Carlo estimation is exclusively employed during score network training, such that
\begin{equation} \label{eq:score_fitting_loss}
    \min \mathbb{E}\big[\| \hat{\phi}(z_t, a_{t:t+H}^\tau, \tau) -  \phi(z_t, a_{t:t+H}^\tau, \tau) \|_2^2 \big], 
\end{equation}
where $\hat{\phi}(z_t, a_{t:t+H}^\tau, \tau)$ is the learned score function. During inference, the score network serves as an amortized solution for efficient policy generation, plugging $\hat{\phi}$ into \eqref{eq:sampling} for iterative sampling of the target action sequence. This framework benefits from provable theoretical guarantees: as established in \citep{chen2022sampling, cheng2026does}, the error relative to the target Gibbs distribution in \eqref{eq:Gibbs} decays at a polynomial rate $\text{Poly}(N, T, L_{\phi})$, where $N, T,$ and $L_{\phi}$ denote the diffusion steps, Monte Carlo sample size, and the score function's Lipschitz constant, respectively.

\section{Experiments} 

\textbf{Benchmarks and Tasks.} We evaluate MBDPO across four major benchmarks encompassing 121 diverse control tasks: DMControl \citep{tassa2018deepmind}, MetaWorld \citep{yu2020meta}, ManiSkill2 \citep{gu2023maniskill2}, and MyoSuite \citep{caggiano2022myosuite}. Visual demonstrations for each benchmark are provided in Figures~\ref{fig:demo_dmcontrol}--\ref{fig:demo_myosuite}. Our evaluation includes challenging variations and subsets such as Visual RL (image-based DMControl), Locomotion, and PickYCB. These tasks demand mastery over high-dimensional state-action spaces, sparse rewards, and multi-object manipulation. Furthermore, they include physiologically accurate musculoskeletal control and complex locomotion with Dog and Humanoid embodiments, covering a wide spectrum of difficulty levels. 

\textbf{Baselines.} We compare MBDPO against several state-of-the-art and data-efficient RL algorithms: (1) Soft Actor-Critic (SAC) \citep{haarnoja2018soft}, a representative off-policy baseline; (2) DreamerV3 \citep{hafner2023mastering}, a leading world-model-based approach; (3) TD-MPC \citep{hansen2022temporal}, the first foundational work of the TD-MPC series; and (4) TD-MPC2 \citep{hansen2023td}, which shares its algorithmic core with the most recent advancements in the field \citep{hansen2025learning}. 

\textbf{Result Analysis.} Our experimental evaluation focuses on five key dimensions:
(1) a comparative performance evaluation of MBDPO against leading model-free and model-based RL baselines;
(2) an investigation into the scaling behaviors of MBDPO;
(3) an empirical verification of whether MBDPO mitigates the bottlenecks identified in Problems~\ref{prob_1} and \ref{prob_2};
(4) an assessment of the algorithm’s generalization across offline pre-training, online, and offline-to-online settings; and
(5) a comprehensive evaluation of the computational efficiency of MBDPO.

\begin{figure}[ht]
    \centering
    \includegraphics[width=0.99\linewidth]{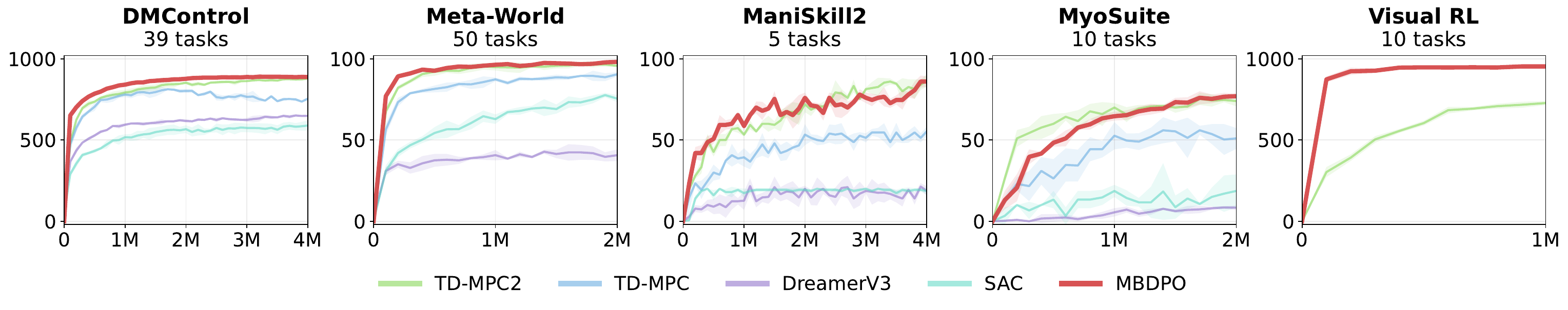}
    \caption{\textbf{Aggregate performance across four benchmarks in the online setting: DMControl, MetaWorld, ManiSkill2, and MyoSuite.} Detailed learning curves for each subtask are provided in Figures~\ref{fig:DMControl}--\ref{fig:myosuite} and Appendix~\ref{append:additional_results_online}.}
    \label{fig:overall_online}
\end{figure}

% \vspace{-1em}

\begin{figure}[ht]
    \centering
    \includegraphics[width=0.92\linewidth]{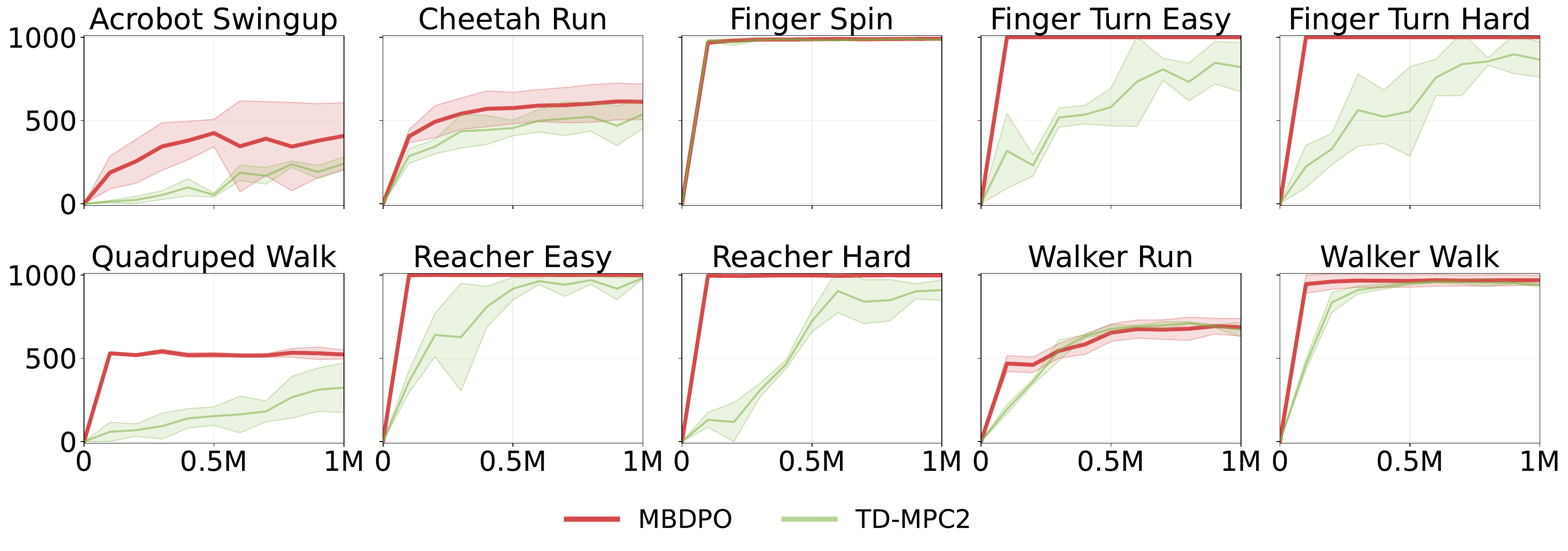}
    \caption{\textbf{Performance comparison between TD-MPC2 and MBDPO across 10 visual control tasks.} Results are averaged over $5$ random seeds for each task.}
    \label{fig:visual}
\end{figure}

\textit{1. MBDPO consistently achieves better results across all evaluated benchmarks, demonstrating significant performance gains over both model-free and model-based baselines.} As illustrated in the aggregate performance in Figure~\ref{fig:overview}, our algorithm consistently attains higher rewards in online settings. Notably, in high-dimensional visual control tasks, MBDPO significantly elevates the performance bottleneck compared to prior methods. Another compelling observation is that across nearly all tasks, MBDPO exhibits accelerated policy improvement with substantially lower variance during the early training stages (see Figures~\ref{fig:overall_online} and \ref{fig:visual}). We attribute this advantage to our principled policy optimization framework, which, unlike pure search methods based on a sampling-based planner, provides a more stable basis for diffusion policy refinement from the onset of training (seamlessly improved with world models and value functions).

\begin{wrapfigure}{r}{0.7\linewidth}
\centering
% \vspace{-10pt}
\includegraphics[width=\linewidth]{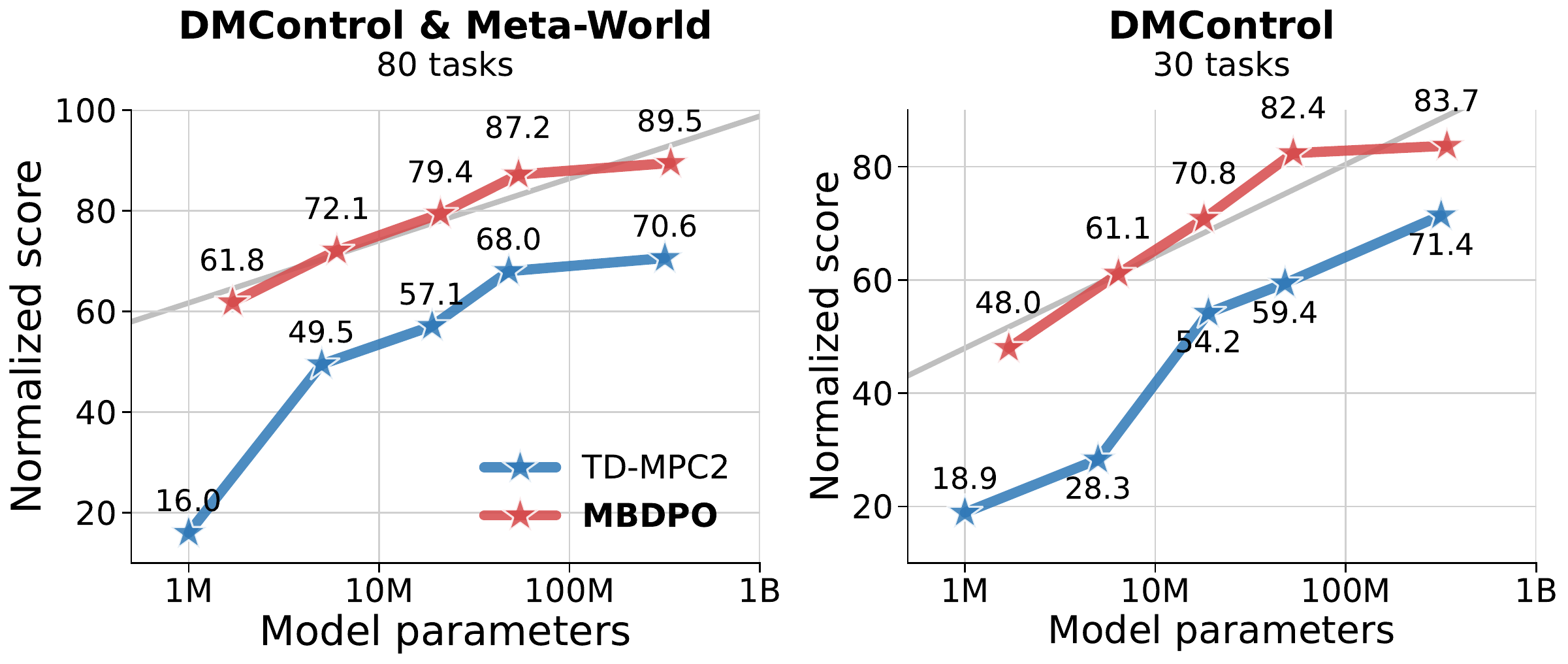}
\vspace{-10pt}
\caption{\textbf{Massively multi-task world models in the offline pretraining setting.} Normalized score as a function of model
size on the two 80-task and 30-task datasets. MBDPO shows sharper scaling behavior with model capacity.}
\label{fig:offline}
\vspace{-10pt}
\end{wrapfigure}

\textit{2. MBDPO unlocks the potential of world models for policy learning, enabling effective multi-task offline pretraining and demonstrating a monotonic scaling curve as model capacity scales from $1.7$M to $340$M.} As illustrated in Figure~\ref{fig:offline}, several key scaling properties emerge. Notably, MBDPO demonstrates a more pronounced advantage at smaller model scales. We attribute this performance to our principled policy optimization, which effectively regularizes the policy (e.g., via the implicit KL constraint with the behavioral policy), ensuring stable performance even when the world model is not highly accurate. As model capacity increases, the performance gap between MBDPO and TD-MPC2 gradually narrows. This is likely because the enhanced accuracy of a larger world model significantly benefits the trajectory-based policy search used in TD-MPC2. In contrast, MBDPO focuses on direct diffusion policy optimization, which consistently evolves alongside the world model. Interestingly, on the 30-task dataset with $340$M parameters, the performance of both methods appears to decrease. We hypothesize that at this scale, the dataset capacity has been potentially exhausted by the model, leading to saturated performance.

\begin{figure}[ht]
    \centering
    \includegraphics[width=0.99\linewidth]{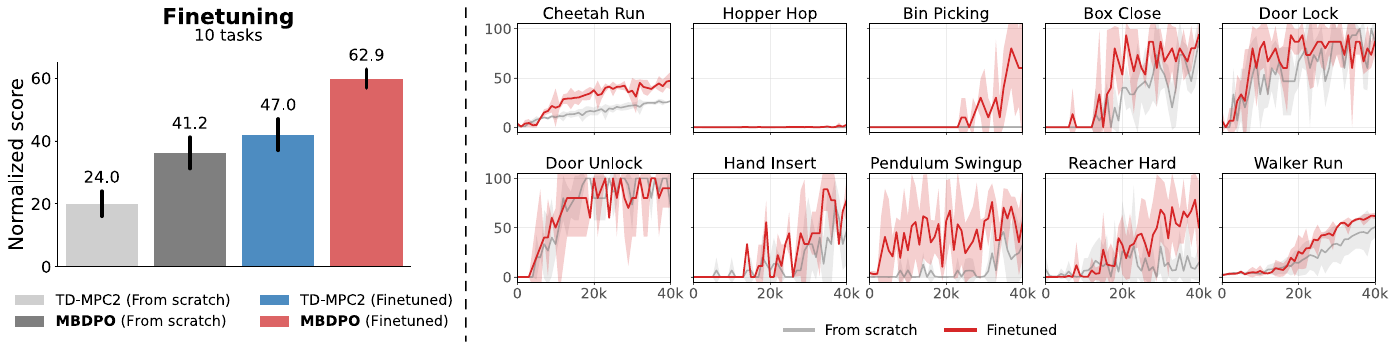}
    \caption{\textbf{Overview of offline-to-online (O2O) performance.} (Left) Comparison between MBDPO and TD-MPC baselines. (Right) Comparison between training from scratch and O2O fine-tuning. Results show that fine-tuning a generalist agent yields superior performance with significantly less data, highlighting the high sample efficiency and transferability of our framework. Note: The suboptimal results in ``Hopper Hop'' can be attributed to a severe deviation in the representation's data distribution (as illustrated by the latent embeddings in Figure~\ref{fig:embeddings}), which subsequently impedes effective policy optimization.}
    \label{fig:o2o_specific_task}
\end{figure}

\textit{3. MBDPO mitigates the fundamental bottlenecks (Problems \ref{prob_1} and \ref{prob_2}) induced by the misalignment between search and value learning.}
We empirically validate this by monitoring both the cross TD error and the average relative policy drift throughout training, as illustrated in Figure~\ref{fig:insight_misaligment}. Crucially, MBDPO exhibits a significantly lower and more stable cross TD error compared to TD-MPC2. This directly demonstrates that our framework effectively minimizes the value iteration error described in \eqref{eq:bellman_gap} (Problem~\ref{prob_1}), preventing the vicious cycle of value error amplification in out-of-distribution regions. Concurrently, we evaluate the policy drift, quantified as the log-ratio between the search-based distribution $\pi$ and the current policy network $\beta$. As training progresses, our diffusion policy regularized by the implicit energy function exhibits minimal volatility and smoothly converges toward the optimized distribution (approaching zero after $1$M steps). In sharp contrast, heuristic search via MPPI maintains high volatility and severe policy drift throughout the entire training stage. This persistent misalignment confirms that unconstrained heuristic search fails to achieve true, stable policy optimization, leading to the distribution shift analyzed in \eqref{eq:updating_gap} (Problem~\ref{prob_2}). This phenomenon is particularly magnified in offline settings (Figure~\ref{fig:offline}), where MBDPO maintains a highly stable optimization path even when deployed with limited-capacity world models. By constraining the KL divergence through the implicit energy, MBDPO suppresses both the value iteration gap and policy distributional drift, achieving a principled policy optimization that is highly congruent with our theoretical analysis in Theorem~\ref{theorem:score_matching_policy}.

\begin{table}[ht]
\centering
\small
\caption{\textbf{Profiling training efficiency results for online-from-scratch and multi-task offline pretraining.}
(a) Online training time is measured per environment step (milliseconds) on a single NVIDIA A800-80GB, see the task-specific results in Table~\ref{tab:profiling_training_results}.
(b) Offline training time is reported in hours for 10M training steps on a single NVIDIA-H200.
Values are reported as mean $\pm$ std.}
\label{tab:profiling_summary_combined}
\setlength{\tabcolsep}{4pt}
\renewcommand{\arraystretch}{1.08}
\begin{minipage}[t]{0.48\linewidth}
\centering
\caption*{\textbf{(a) Online-from-scratch training time}}
\begin{tabular}{lccc}
\toprule
\textbf{Domain} & \textbf{\#} & \textbf{TD-MPC2} & \textbf{MBDPO} \\
\midrule
MetaWorld   & 50 & $1.08 \pm 0.16$ & $1.08 \pm 0.15$ \\
DMControl   & 39 & $0.78 \pm 0.71$ & $0.76 \pm 0.71$ \\
ManiSkill2  & 5  & $6.50 \pm 1.83$ & $6.91 \pm 2.51$ \\
MyoSuite    & 10 & $1.58 \pm 0.22$ & $1.54 \pm 0.19$ \\
Visual RL   & 10 & $3.55 \pm 0.51$ & $3.04 \pm 0.37$ \\
\bottomrule
\end{tabular}
\end{minipage}
\hfill
\begin{minipage}[t]{0.50\linewidth}
\centering
\caption*{\textbf{(b) Multi-task offline training time}}
% \vspace{-0.5em}
\begin{tabular}{lccc}
\toprule
\textbf{Parameters}  & \textbf{30 tasks} & \textbf{80 tasks} & \textbf{80/30} \\
\midrule
1.7M  & $34.8 \pm 0.6$  & $39.9 \pm 0.4$  & $1.15\times$ \\
6M    & $35.0 \pm 0.5$  & $40.5 \pm 0.6$  & $1.16\times$ \\
21M   & $39.8 \pm 0.3$  & $42.6 \pm 0.4$  & $1.07\times$ \\
54M   & $52.6 \pm 0.1$  & $69.6 \pm 0.7$  & $1.32\times$ \\
340M  & $202.9 \pm 0.9$ & $206.8 \pm 0.8$ & $1.02\times$ \\
\bottomrule
\end{tabular}
\end{minipage}

\end{table}

\textit{4. MBDPO is highly versatile, providing unified support for offline pre-training, online learning, and offline-to-online (O2O) transfer settings.} Across these diverse paradigms, MBDPO consistently achieves superior performance, as evidenced in Figures~\ref{fig:overall_online}, \ref{fig:offline}, and \ref{fig:o2o_specific_task}. Notably, our method demonstrates significant advantages in the offline regime, where stable optimization is more critical than in other settings. To evaluate generalization, we extended MBDPO to an O2O fine-tuning scenario—pre-training on 70 tasks followed by fine-tuning on 10 unseen tasks with only $40k$ interactions per task. We found that MBDPO significantly outperforms training from scratch, verifying that our pre-trained generalist agent can rapidly adapt to become an expert in novel environments. This enhanced efficiency in policy optimization underscores MBDPO’s effectiveness across varied scenarios. Specifically, the comparison between from-scratch and fine-tuning curves in Figure~\ref{fig:o2o_specific_task} (right) demonstrates that our framework can be seamlessly extended to new tasks with minimal data requirements.

\textit{5. MBDPO achieves efficient training during both the online-from-scratch and multi-task offline paradigms.} According to Table~\ref{tab:profiling_summary_combined}, MBDPO incurs negligible overhead in online-from-scratch training. Across $121$ tasks from five domains, its per-step training time is comparable to TD-MPC2, and is faster on DMControl, MyoSuite, and Visual RL, reducing the Visual RL average from $3.55$ ms to $3.04$ ms. For multi-task offline training, MBDPO scales efficiently from 30 to 80 tasks, with only a $1.02\times$--$1.32\times$ increase in total training time. Notably, the 340M model shows nearly unchanged cost ($202.9$ vs. $206.8$ hours), indicating strong scalability to broader multi-task settings.

\begin{wrapfigure}{r}{0.5\linewidth}
\centering
% \vspace{-10pt}
\includegraphics[width=\linewidth]{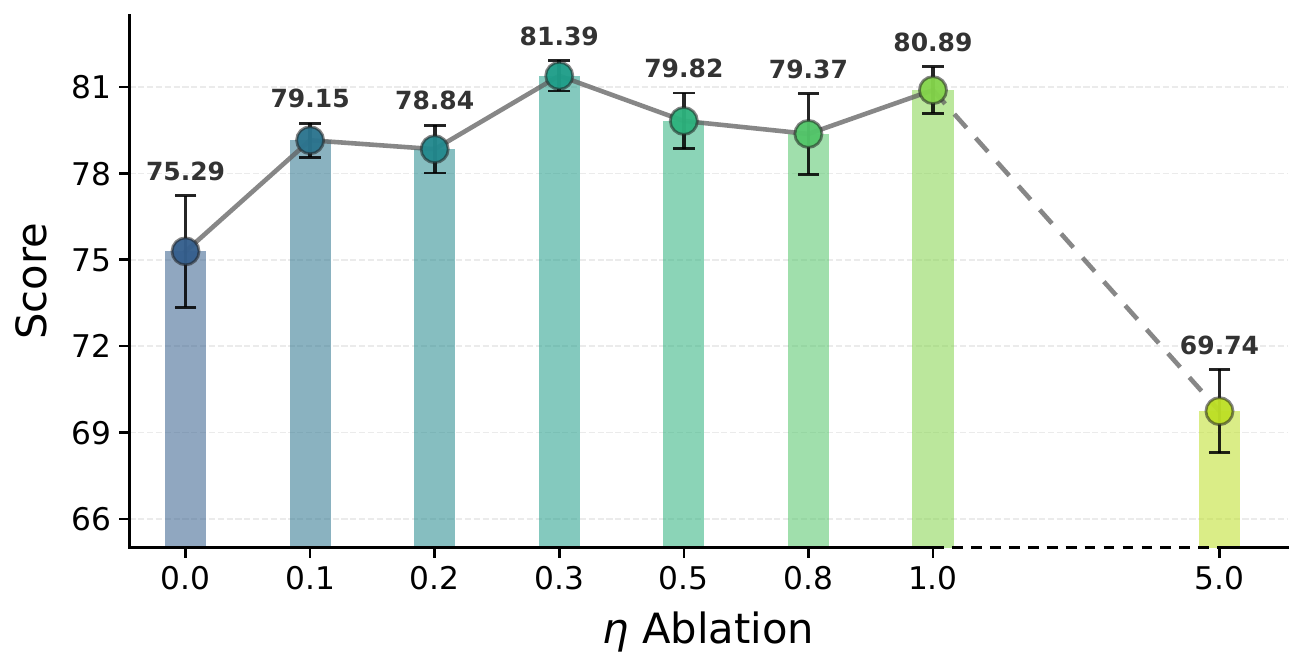}
\vspace{-15pt}
\caption{\textbf{Ablation study of the factor $\eta$.} Experiments are conducted on the 80-task multi-task setting with a 21M parameters model, where $\eta$ is varied within the range $[0, 5]$.
}
\label{fig:abla_temperature}
\vspace{-20pt}
\end{wrapfigure}

\textbf{Ablation Study 1: Regularized Factor $\eta$.}
Figure~\ref{fig:abla_temperature} demonstrates the importance of the implicit energy function $E$ in diffusion policy optimization. In the absence of this energy function, the policy lacks an effective KL constraint, leading to suboptimal performance as identified in Problems~\ref{prob_1} and \ref{prob_2}. While the performance remains stable for $\eta$ within the range $[0.3, 1]$, scaling $\eta$ up to $5$ causes the performance to drop below 70. This degradation occurs because an  excessively large $\eta$ induces an overly dominant behavioral cloning effect, which ultimately restricts the policy's exploitation of higher-value regions within the learned world model.  An interesting phenomenon is that even when the regularized factor is set to $\eta=0$, our method still outperforms TD-MPC2 (75.29 vs. 57.1). This reflects that diffusion policy optimization, by its nature, can lead to a better policy than heuristic search.

\textbf{Ablation Study 2: Diffusion Denoise Timesteps $N$ and Monte Carlo Samples $T$.} We conduct ablation studies on several challenging tasks from DMControl and MetaWorld, with results illustrated in Figure~\ref{fig:Monte_Carlo_samples} and Figure~\ref{fig:abla_diffusion_step}. The findings are straightforward: increasing the number of diffusion timesteps and Monte Carlo samples consistently improves performance. This aligns with our theoretical error analysis—specifically, the polynomially decaying error $\text{Poly}(N, T, L_{\phi})$—confirming that the distributional error decreases polynomially with both parameters. Conversely, because sampling is executed via a discrete timestep, inference runtime scales linearly with the number of diffusion timesteps. However, due to the parallelizable nature of Monte Carlo sampling, increasing $T$ has only a marginal impact on execution time. 

\begin{figure}[ht]
    \centering
    \includegraphics[width=0.78\linewidth]{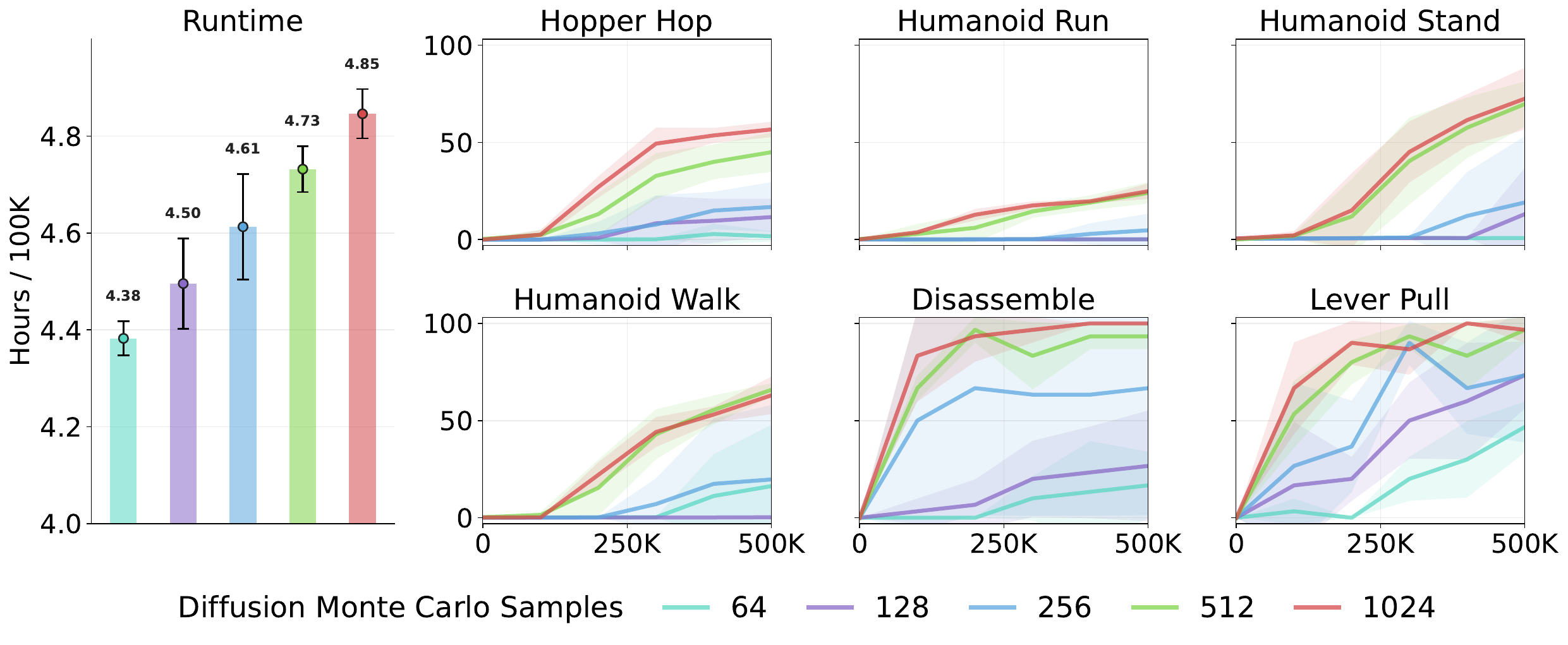}
    \caption{\textbf{The ablation study of Monte Carlo samples in the policy with 3 random seeds.} (Left) Training runtime comparison under different sample numbers. (Right) Episode reward versus training steps and sample numbers.}
    \label{fig:Monte_Carlo_samples}
\end{figure}

\begin{figure}[ht]
    \centering
    \includegraphics[width=0.78\linewidth]{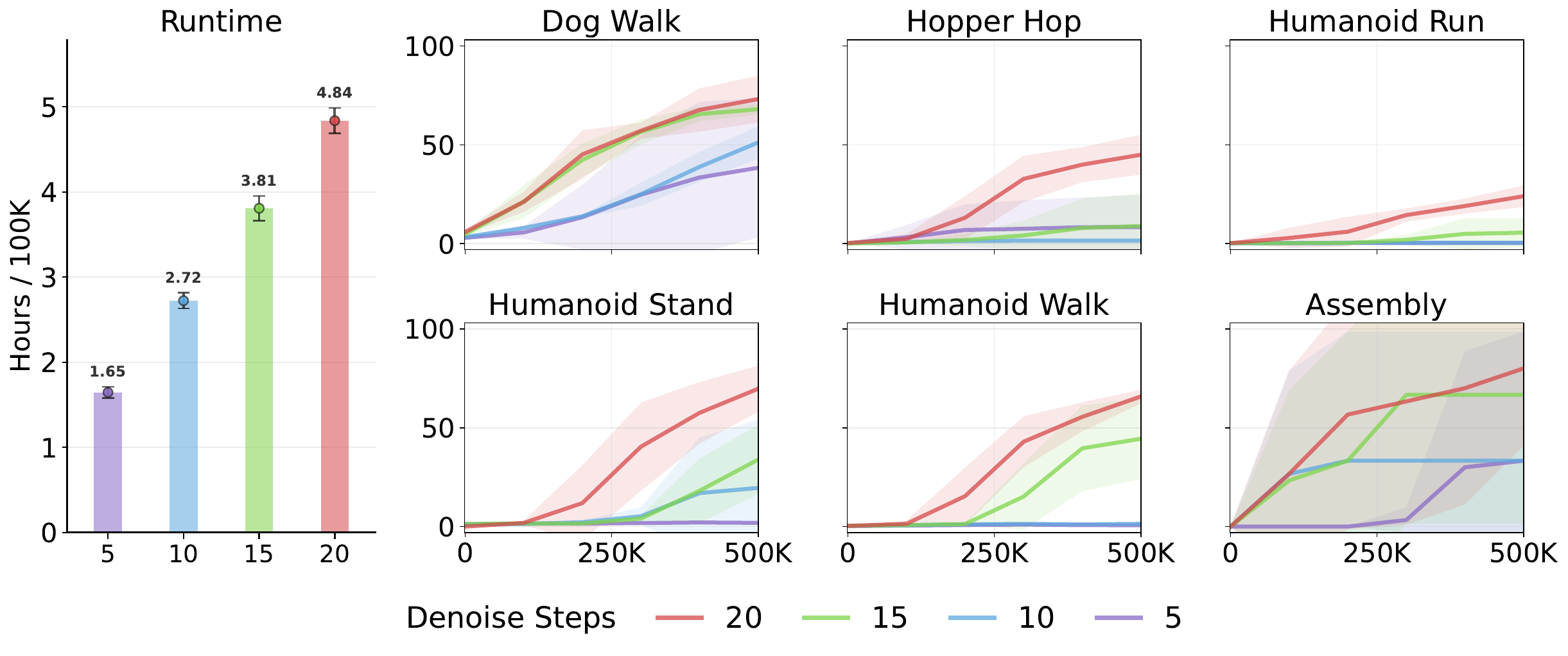}
    \caption{\textbf{The ablation study of diffusion denoise timesteps in the policy with 3 random seeds.}
(Left) Training runtime comparison under different numbers of diffusion timesteps.
(Right) Episode reward versus training steps for different diffusion timesteps.}
    \label{fig:abla_diffusion_step}
\end{figure}

\vspace{-1em}

\section{What We Learned?}

\textbf{1. Scalability of Diffusion Policy Built Upon World Model.}
Our primary insight is that diffusion models act as an intrinsic interface bridging world models and policy optimization. By leveraging imagined trajectories, the diffusion policy can bootstrap its behaviors to achieve continual value iteration and policy improvement. Crucially, we find that the joint learning of the world model and RL policy under a simple loss function enforces a stronger capture of underlying causality within the latent space.
This emergent causality is distinctly mirrored in our latent visualizations, where the policy naturally uncovers periodic closed-loop manifolds for cyclical tasks and goal-oriented trajectories for sequential manipulations. Remarkably, even when evaluated under controlled and intervened latent rollouts, the diffusion policy consistently maintains these physically aligned behaviors. This empirical evidence suggests that the joint optimization in MBDPO helps structure the latent space in a way that better adheres to underlying physical constraints, making the policy less prone to rendering physically implausible decisions when encountering distribution shifts. Looking forward, as the world model achieves sufficient fidelity, this paradigm opens up a promising avenue toward fully self-improving autonomous agents.
\begin{figure}[ht]
    \centering
    \includegraphics[width=0.99\linewidth]{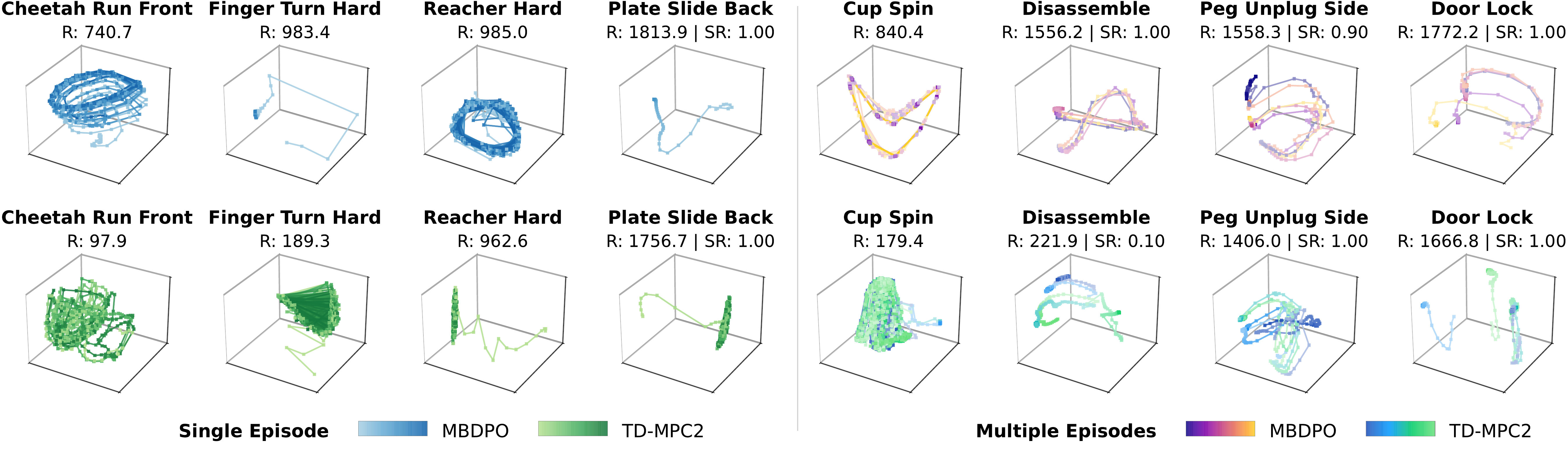}
    \caption{\textbf{Visualizing latent trajectories via a locally linear embedding.} We plot the latent state trajectories across single and multiple episodes in various simulated environments. The colorbar gradient indicates the temporal progression of the trajectories, while R and SR denote the accumulated reward and success rate, respectively.  (a) For cyclical tasks such as ``Cheetah Run Front'', ``Reacher Hard'', and ``Cup Spin'', MBDPO with the diffusion policy generates distinctly closed-loop structures, effectively capturing the physical and repetitive nature of these skills. For other MetaWorld manipulation tasks, MBDPO's latent states delineate smooth and directed state-progressing trajectories that represent a clear causal transition from initial state to target goal. (b) In contrast, under the same controlled latent trajectory visualization, TD-MPC2 exhibits noisier, more disorganized, and less physically aligned behaviors. This indicates that its sampled policy is not physically meaningful, failing to adhere to the underlying causal dynamics necessary for robust decision-making. The controlled latent trajectories for $80$ tasks are visualized in Appendix Figures~\ref{fig:single_epo_diffusion}-\ref{fig:multi_epo_tdmpc}. 
}
    \label{fig:latent_manifold_trajecotry}
\end{figure}

\textbf{2. Rethinking Policy Learning in World Models.} 
Our second higher-level insight compels a fundamental rethinking of how RL should be formulated within world models to truly unlock their future large-scale scaling potential. As mathematically exposed in Problems~\ref{prob_1} and \ref{prob_2}, simply scaling up naive bootstrapping or unconstrained rollout search (such as MPPI, CEM \citep{pinneri2021sample}) is inherently problematic; instead of compounding performance gains, it merely aggregates value overestimation and action drift (Figure~\ref{fig:insight_misaligment}). 
What we truly learned is that the path toward scaling policy learning lies not in designing more complex heuristic search pipelines, but in establishing a simple but principled foundation that inherently links world model with foundational RL paradigms. Crucially, the KL-constrained objective in \eqref{eq:Lagrangian} serves as a direct mathematical bridge. On one hand, its implicit energy function suppresses out-of-distribution value peaks, capturing the essential conservatism of offline RL \citep{levine2020offline}. On the other hand, it bounds the policy shift within a  KL trust region to ensure policy improvement, echoing the core principle of TRPO and PPO \citep{schulman2015trust, schulman2017proximal}. Ultimately, this approach reveals that these seemingly separate methods actually point back to the same fundamental challenges of RL. 

\textbf{3. Representation Matters.}  Our third insight underscores that the quality of the learned latent representation serves as the bedrock for policy optimization, a dependency that manifests through three critical phenomena.  First, a robust, generalist representation unlocks exceptional transferability and data efficiency. As demonstrated in our offline-to-online (O2O) transfer experiments, pre-training MBDPO on 70 diverse tasks constructs an adaptable latent space; this allows the generalist agent to rapidly fine-tune and become an expert on 10 completely unseen tasks with only $40k$ interactions per task, significantly outperforming training from scratch (Figure~\ref{fig:o2o_specific_task}, right). Second, conversely, a corrupted or misaligned representation directly and severely cripples policy optimization. As isolated in the suboptimal performance of ``Hopper Hop'', a severe distributional deviation in the latent embedding (Figure~\ref{fig:embeddings}) imposes a fundamental bottleneck that completely impedes the diffusion policy from searching or sampling effectively. Third, during large-scale pre-training, we observe that the controlled latent trajectories evolve to align with physical intuition (see Figure~\ref{fig:latent_manifold_trajecotry}). This empirical alignment proves that scaling the representation is not merely about preserving training statistics, but about capturing the underlying structure of the world. Ultimately, these findings reveal that policy versatility across offline, online, and O2O paradigms is intrinsically a representation problem, where a physically meaningful latent space is the premier prerequisite for scalable policy learning.

\section{Conclusion} 
In this work, we addressed a critical yet underexplored bottleneck in model-based RL: the structural misalignment between policy search and value learning. To overcome this limitation, we introduced Model-Based Diffusion Policy Optimization (MBDPO). In this framework, the diffusion policy intrinsically links the world model and policy optimization, unlocking the potential of world models for policy learning. By leveraging an implicit energy function to refine the score field, MBDPO anchors policy updates within a trust region, effectively mitigating training inconsistencies and compounding errors. Our extensive evaluations across multi-task offline pretraining, online learning, and offline-to-online fine-tuning demonstrate that MBDPO consistently outperforms leading baselines. Notably, MBDPO exhibits strong scalability, yielding monotonic performance gains as model capacity increases. Beyond quantitative improvements, visualizations of the controlled latent trajectories generated by MBDPO align naturally with physical intuition. Ultimately, MBDPO establishes an effective new paradigm for policy learning within world models.

\textbf{Limitations and Future Directions.} 
While MBDPO demonstrates strong scaling properties and performance, several avenues remain for future exploration.  First, due to bounded computational resources, we have not yet scaled our framework to extremely large parameter sizes or massive foundation-model scales; investigating its full scaling limits remains a direct next step.  Second, our evaluations are currently focused on simulated environments. Moving forward, deploying MBDPO on real-world robots will be critical to testing its robust transferability and practical viability.  Finally, an exciting prospective direction is to integrate MBDPO with even more powerful, pretrained representations (e.g., visual or multimodal foundation models as encoders). Leveraging such richer visual priors could improve the latent representation and unlock truly large-scale RL within world models.

\bibliographystyle{unsrt}
\bibliography{ref}

\clearpage
\appendix

\clearpage
\section*{Notation}
\begin{center}
\begin{tabular}{ll} 
\toprule
\textbf{Notation} & \textbf{Meaning} \\
\midrule
$a$ & action \\
$e$ & learnable task embedding \\
$s$ & state \\
% $\mathbf{s}$ & score function \\
$t$ & time step \\
$r$ & reward function \\
$z$ & latent state \\
$\mathcal{A}$ & space of action  \\
$E$ & implicit energy function \\
$\mathcal{E}$ & encoder of world model \\
$\mathcal{E}_{env}$ & space of learnable task embedding \\
$\mathcal{F}$ & latent dynamics of world model \\
$G$ &  cumulative return over an $H$-step action rollout \\ 
$\tilde{G}$ & energy-regularized cumulative return over an $H$-step action rollout\\ 
$J$ & standard discounted returns \\
$\mathcal{M}$ & Markov decision process \\
$\mathcal{S}$ & space of state \\
$\mathcal{T}$ & transition dynamics \\
$Q$ & $Q$ value function \\
$\hat{Q}$ & value function learned from buffer dataset \\
% $t$ & time step \\
$\mathcal{R}$ & latent reward function \\
$\tilde{\mathcal{R}}$ & energy-regularized latent reward function \\
$T^\Box$ & Bellman update \\
$\mathcal{Z}$ & space of latent state \\
$\alpha, \bar{\alpha}$ & scalar factor of noise injection \\
$\beta$ & behavioral policy distribution \\ 
$\kappa$ & temperature factor \\ 
$\pi$ & policy \\
$\eta$ & energy regularized factor \\
$\rho$ & initial distribution \\
$\sigma^2(\tau)$ & variance induced by the reverse diffusion transition kernel \\
$\pi_{\phi}$ & policy generated by score function  \\
$\phi$  & denoted symbol of score function \\
$\gamma$ & discount factor \\
$\mathbb{1}_{\{\cdot\}}$ & indicator function \\
\bottomrule
\end{tabular}
\end{center}

\clearpage
\section*{A Reader's Guide to the Appendix}
Appendices can often feel like a dense maze of mathematical proofs and exhaustive plots. To respect your time and make navigating this supplementary material as seamless as possible, we have organized the content into distinct, self-contained modules. Depending on your background and primary interests, here is a quick roadmap to help you find exactly what you are looking for:

\begin{itemize}
    \item \textbf{For visual intuitions and empirical rigor:}
    If you want to see how MBDPO behaves across different environments, head to Section~\ref{append:additional_results_online} for exhaustive per-task learning curves and action drift metrics. To visualize what the model actually learns, Section~\ref{append:all_latent_traj} provides 3D manifolds showing how our method forms clean, closed-loop latent trajectories compared to baselines. Exhaustive compute profiling and timing evaluations are also provided to ensure full transparency in Section~\ref{append:time_eval}. 

    \item \textbf{For the theoretical foundation:}
    If you are interested in the mathematical mechanisms, jump straight to Section~\ref{append:theoretical_analysis}. We break down the intrinsic bottlenecks of existing model-based policy search paradigms (Section~\ref{append:bottleneck_analysis}) and provide step-by-step proofs for the exact score function derivations and Theorem~\ref{theorem:score_matching_policy} (Section~\ref{section:score_function_proof}).

    \item \textbf{For reproduction:}
    If you are looking to implement MBDPO or understand its computational footprint, Section~\ref{append:practical_implement} provides the complete pseudocode (Algorithm~\ref{alg:MBDPO}) and hyperparameter settings alongside practical details for online, offline, and fine-tuning variants. 
\end{itemize}

Please feel free to bypass the sections outside your immediate scope and jump directly to the modules most relevant to your expertise.

\clearpage
\section{More Experimental Results}

\subsection{Training Environments}
Our evaluation covers 121 diverse control challenges from four standard benchmarks: DMControl, MetaWorld, ManiSkill2, and MyoSuite \citep{tassa2018deepmind, yu2020meta, gu2023maniskill2, caggiano2022myosuite}. 

\begin{figure}[ht]
    \centering
    \includegraphics[width=0.99\linewidth]{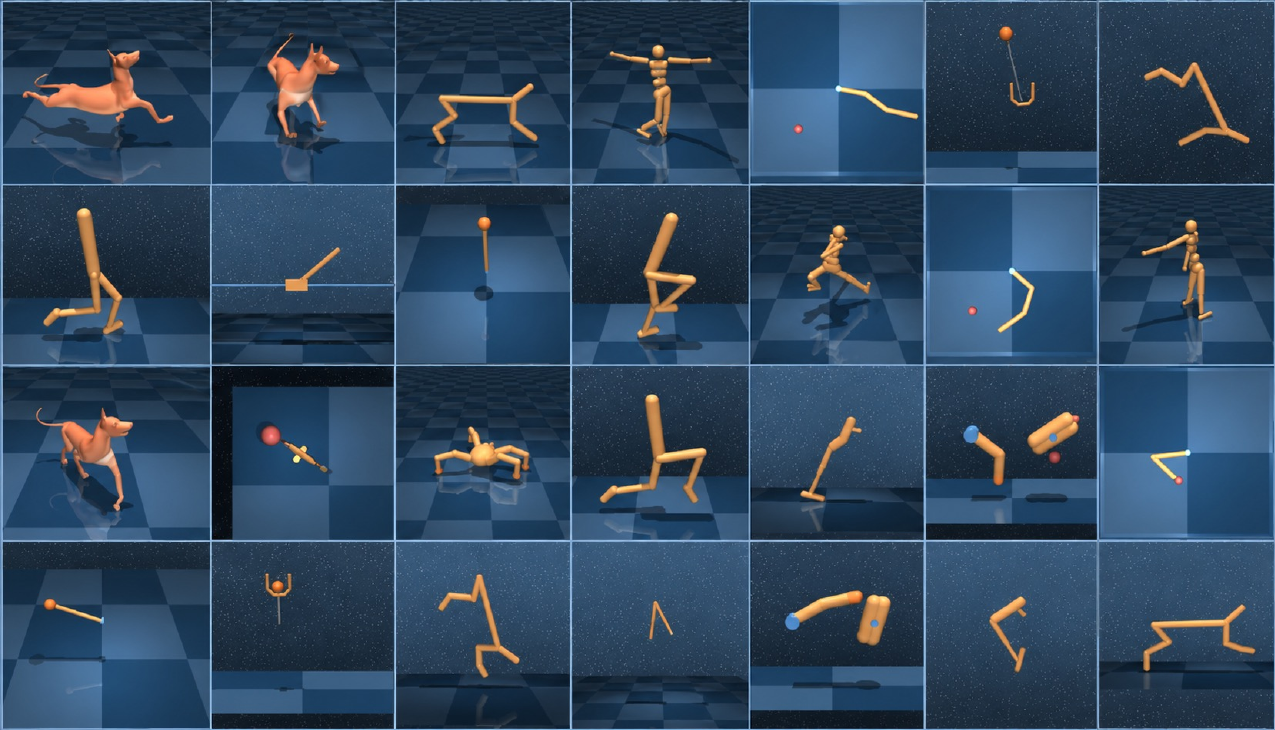}
    \caption{\textbf{Demonstration of DMControl tasks.}}
    \label{fig:demo_dmcontrol}
\end{figure}

\begin{figure}[ht]
    \centering
    \includegraphics[width=0.99\linewidth]{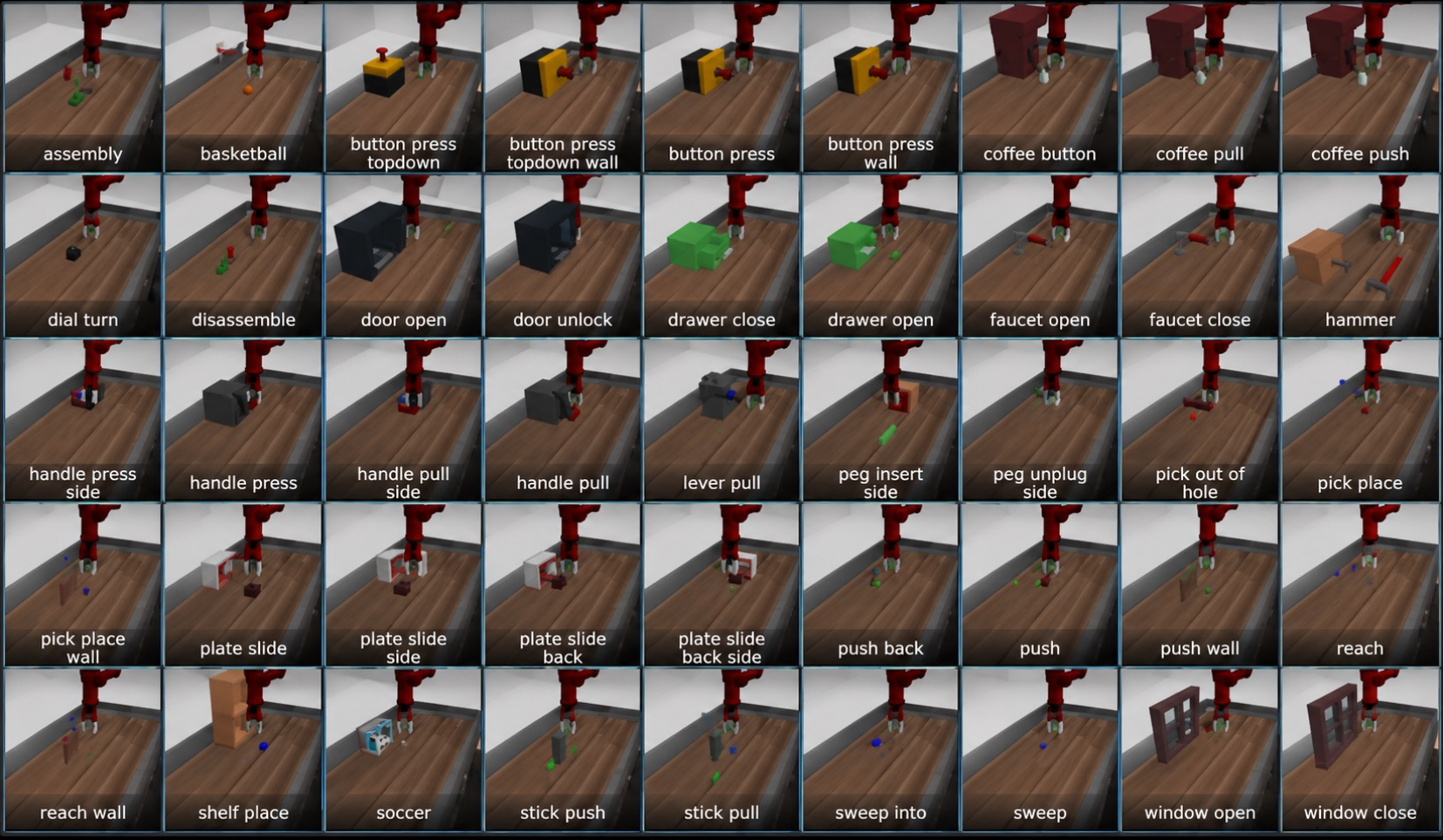}
    \caption{\textbf{Demonstration of MetaWorld tasks.}}
    \label{fig:demo_metaworld}
\end{figure}

\begin{figure}[ht]
    \centering
    \includegraphics[width=0.99\linewidth]{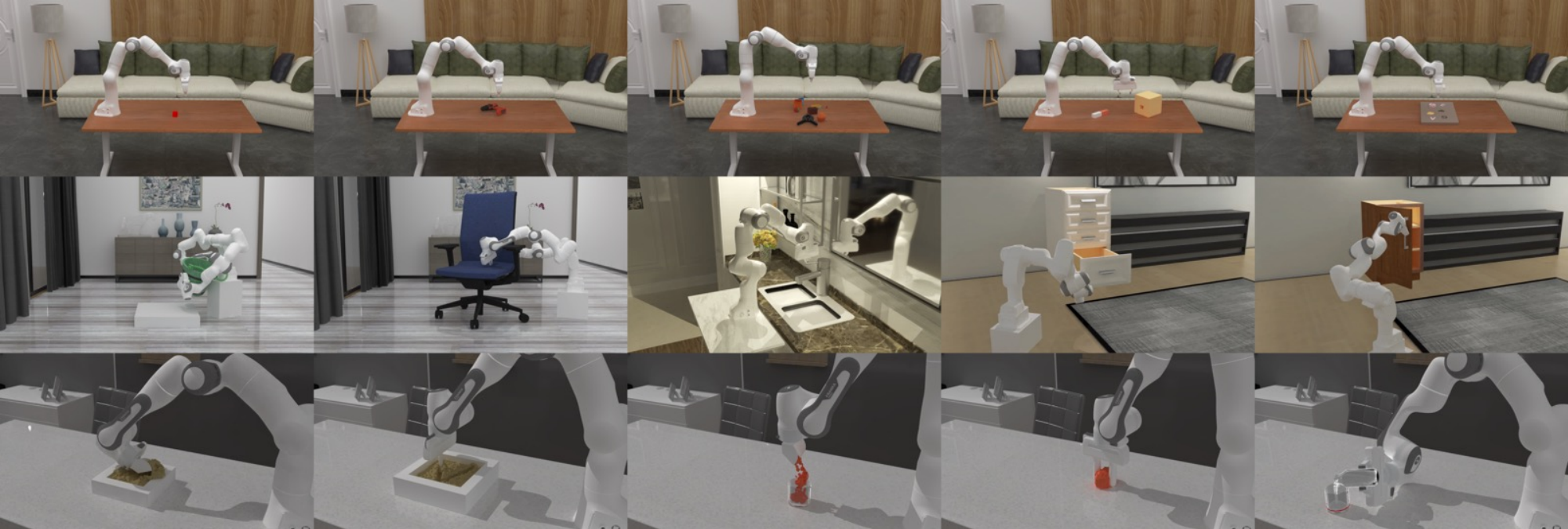}
    \caption{\textbf{Demonstration of ManiSkill2 tasks.}}
    \label{fig:demo_maniskill2}
\end{figure}

\begin{figure}[ht]
    \centering
    \includegraphics[width=0.99\linewidth]{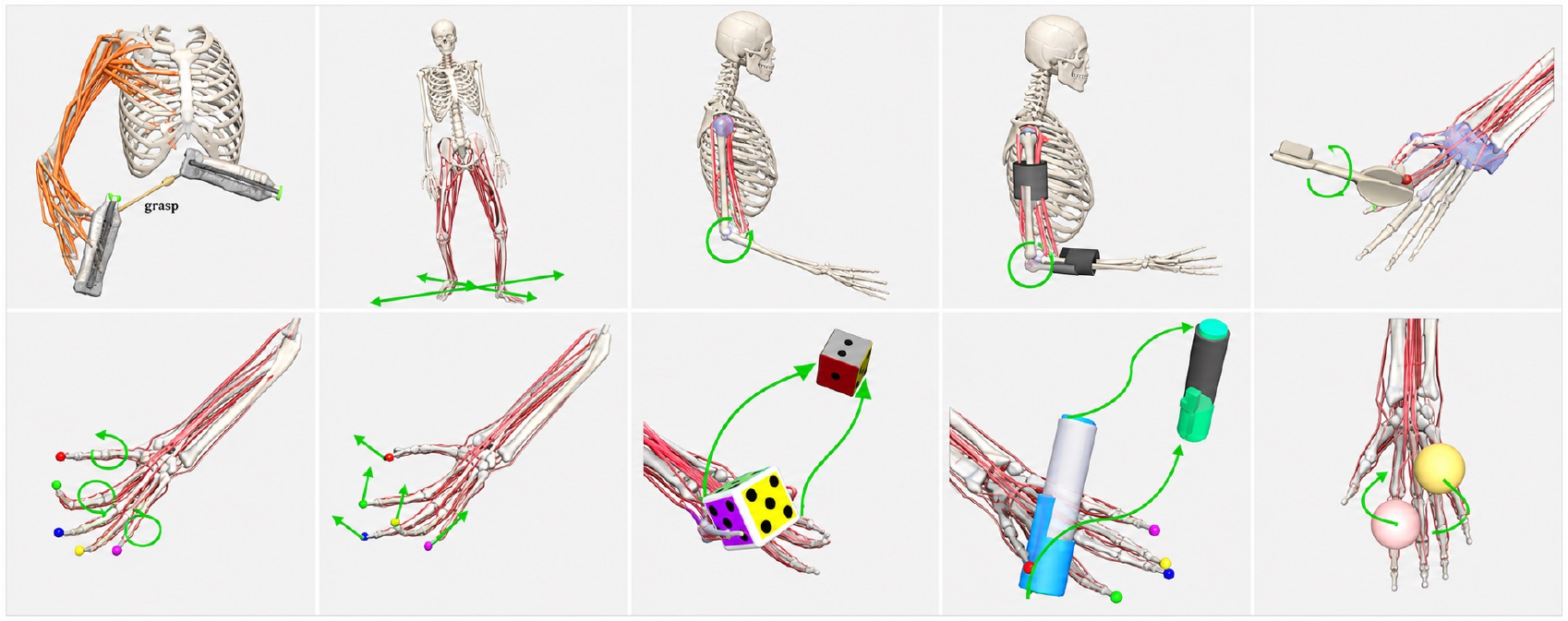}
    \caption{\textbf{Demonstration of MyoSuite tasks.}}
    \label{fig:demo_myosuite}
\end{figure}

\clearpage
\subsection{Additional Results of Online Training from Scratch} \label{append:additional_results_online}

We evaluate MBDPO across 121 tasks from 4 major benchmarks (Figures~\ref{fig:DMControl}-\ref{fig:myosuite}), with each task averaged over 3 random seeds. The aggregate results in Figure~\ref{fig:overview} show that MBDPO consistently achieves state-of-the-art performance across all environments. Compared to both model-free and model-based baselines, our framework demonstrates superior final rewards, higher sample efficiency, and greater training stability of policy optimization.  

\begin{figure}[ht]
    \centering
    \includegraphics[width=0.99\linewidth]{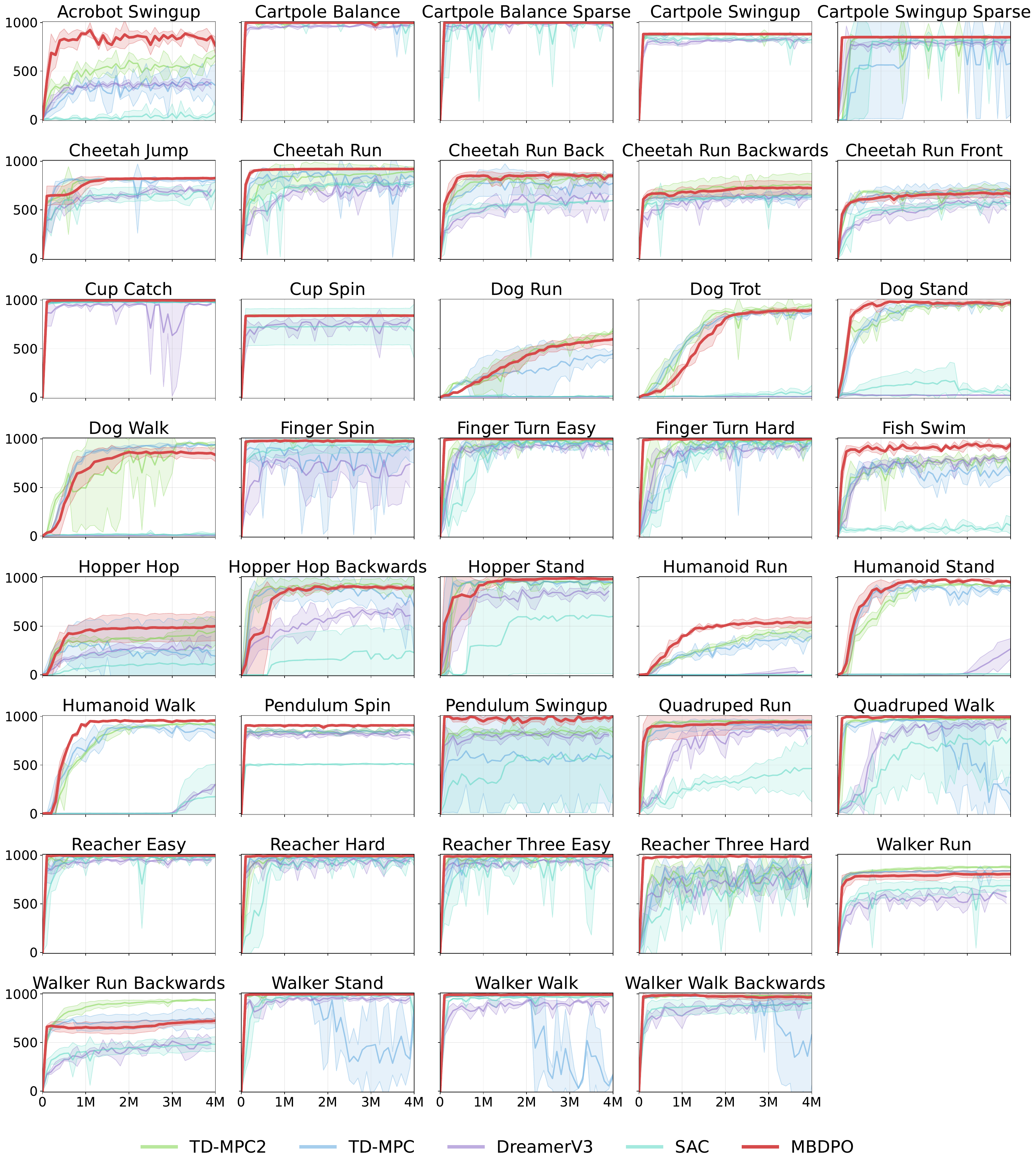}
    \caption{\textbf{Single-task DMControl results.} Episode return as a function of environment steps. The first $4$M environment steps are shown for each task.}
    \label{fig:DMControl}
\end{figure}

\begin{figure}[H]
    \centering
    \includegraphics[width=0.99\linewidth]{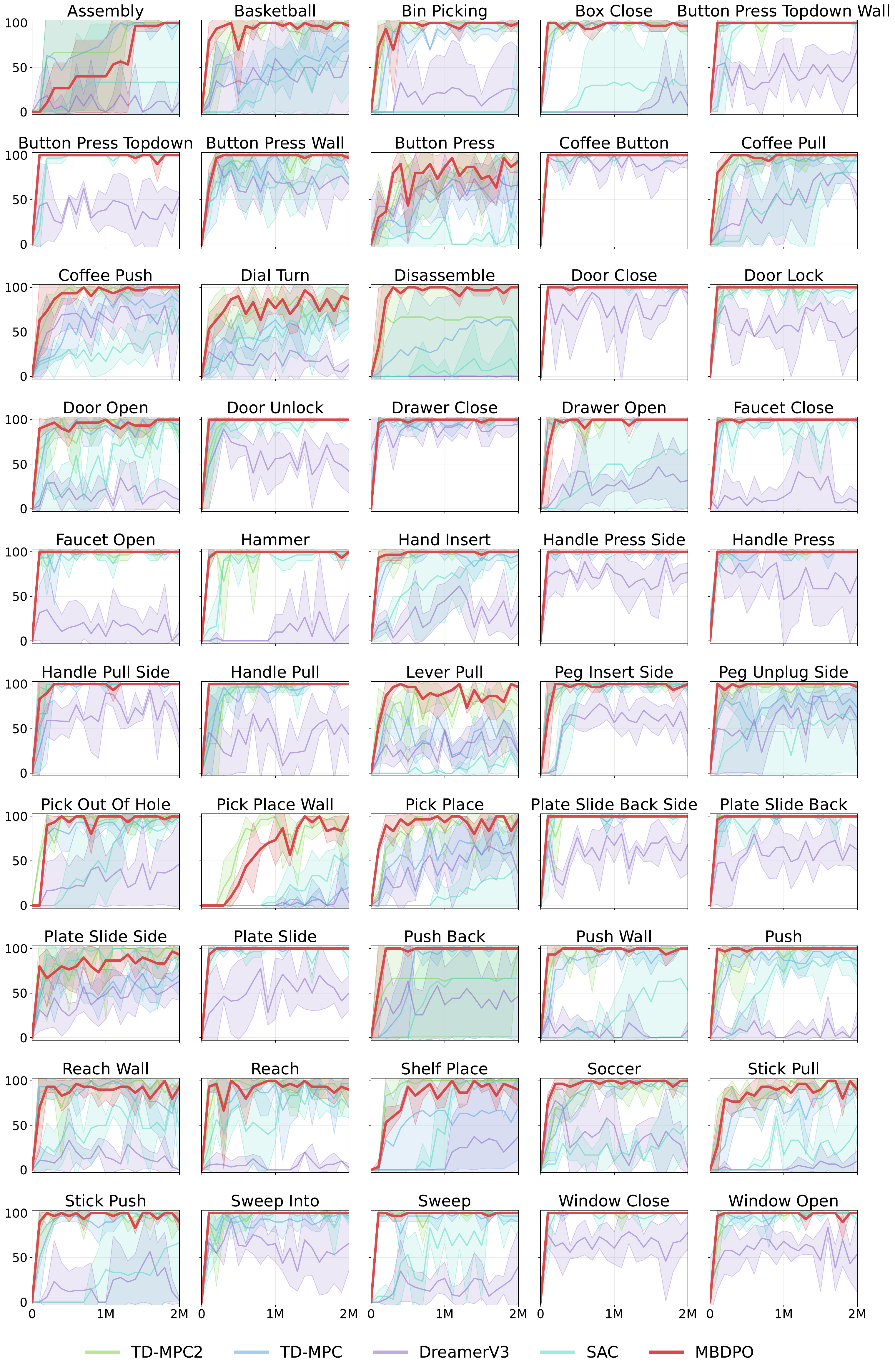}
    \caption{\textbf{Single-task MetaWorld results.} Success rate (\%) as a function of environment steps. MBDPO achieves the best averaged performance over all MetaWorld tasks, while outperforming other methods on hard tasks such as \textit{Pick Place Wall} and \textit{Shelf Place}. DreamerV3 often fails to converge.}
    \label{fig:Meta_world}
\end{figure}

\begin{figure}[H]
    \centering
    \includegraphics[width=0.99\linewidth]{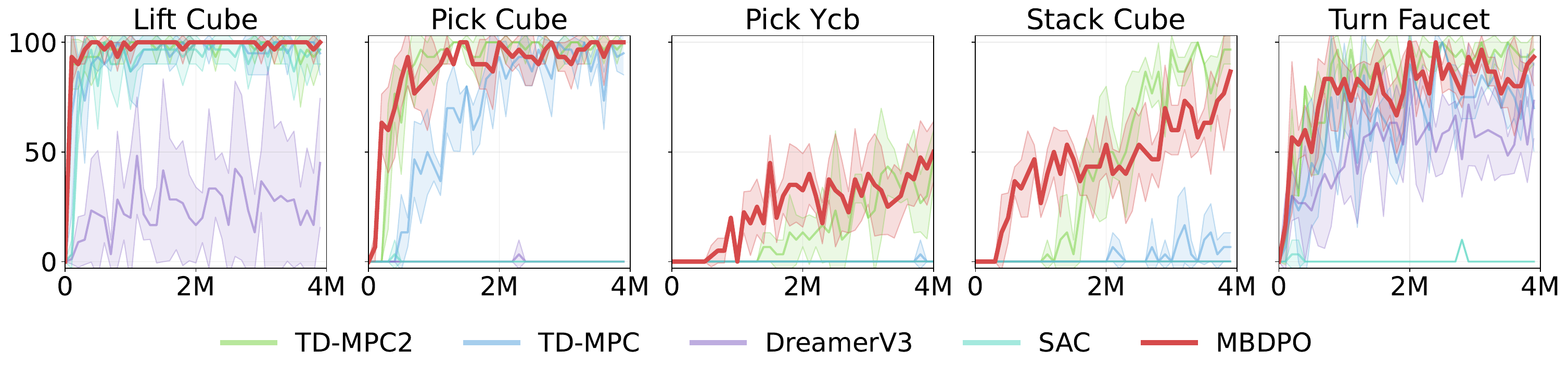}
    \caption{\textbf{Single-task ManiSkill2 results.}  Success rate (\%) as a function of environment steps on 5 object manipulation tasks from ManiSkill2. Pick YCB is the hardest task and involves manipulation of all 74 objects from the YCB \citep{calli2015ycb}. We report $4$M environment steps for each task. MBDPO achieves a success rate above $75\%$  on the Pick YCB task, whereas other methods fail to learn within the given budget.}
    \label{fig:mani_skill2}
\end{figure}

\begin{figure}[H]
    \centering
    \includegraphics[width=0.99\linewidth]{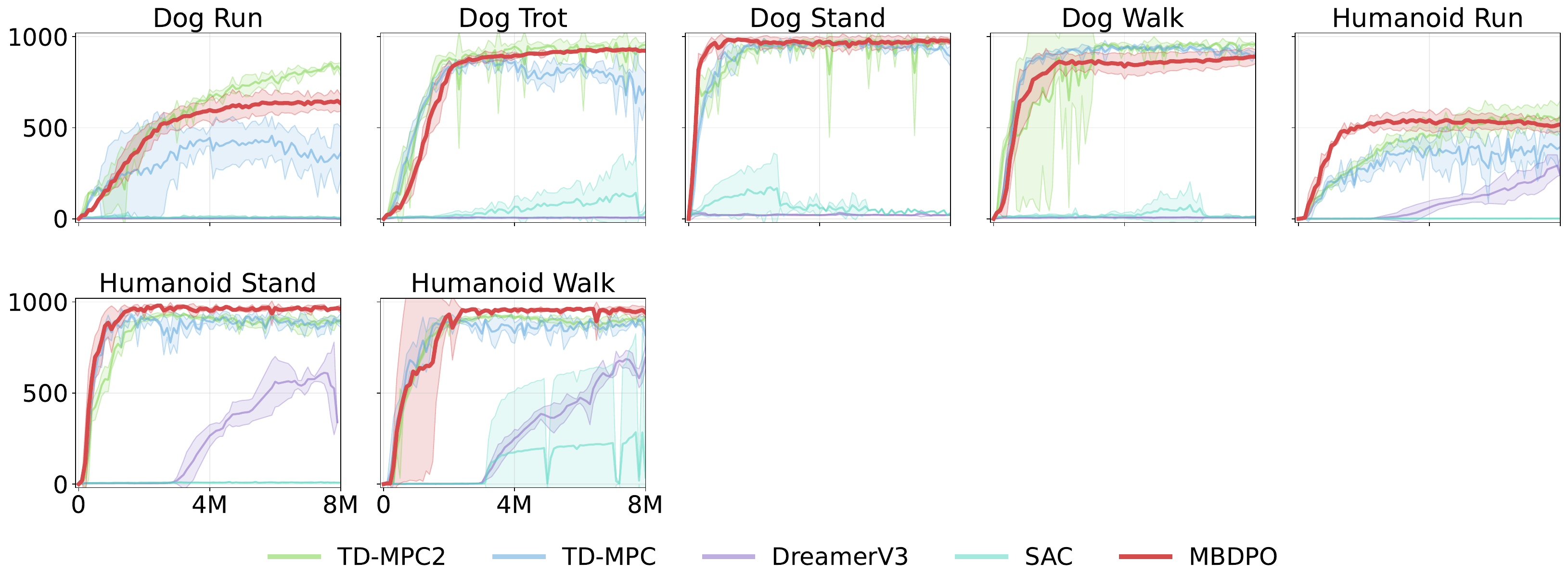}
    \caption{\textbf{High-dimensional locomotion results.} Episode return as a function of environment steps on all 7 “Locomotion” benchmark tasks. }
    \label{fig:high_dim}
\end{figure}

\begin{figure}[H]
    \centering
    \includegraphics[width=0.99\linewidth]{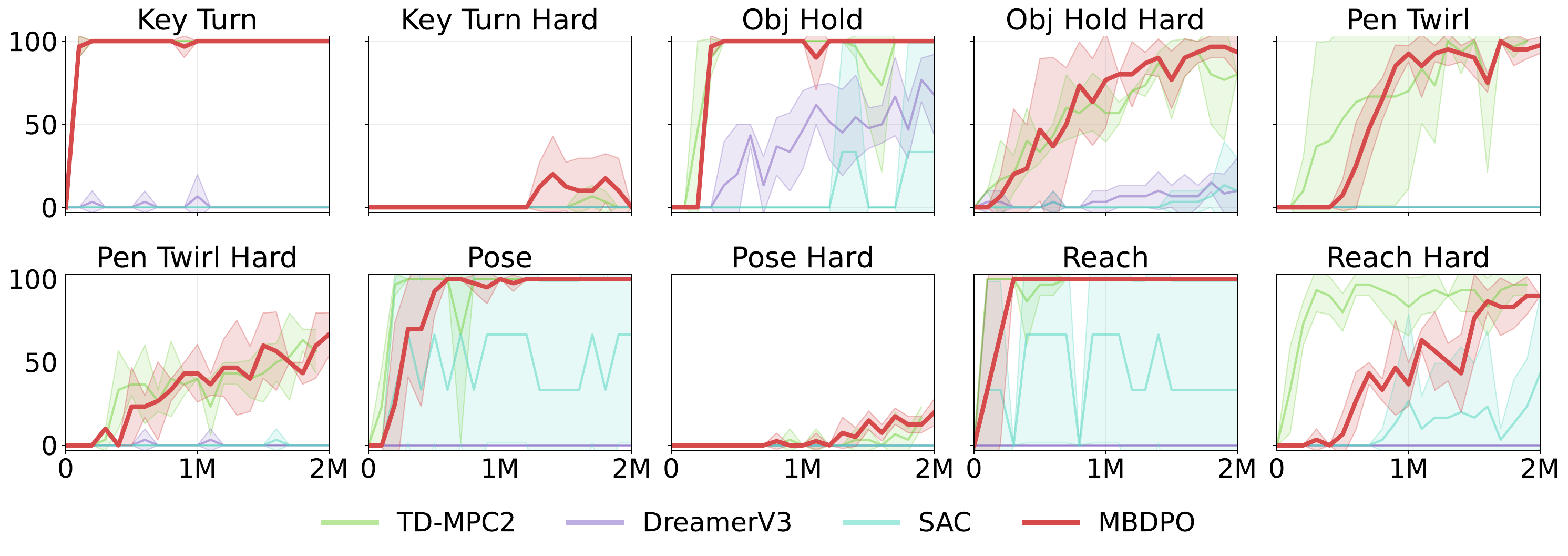}
    \caption{\textbf{Single-task MyoSuite results.} Success rate (\%) as a function of environment steps. This task domain includes high-dimensional contact-rich musculoskeletal motor control with a physiologically accurate robot hand. Goals are randomized in tasks designated as “Hard”. MBDPO achieves comparable or better performance than existing methods on all tasks from this benchmark.}
    \label{fig:myosuite}
\end{figure}

\clearpage
\subsection{Additional Results of Action Drift During Training}~\label{append:additional_result}

\begin{figure}[ht]
    \centering
    \includegraphics[width=0.99\linewidth]{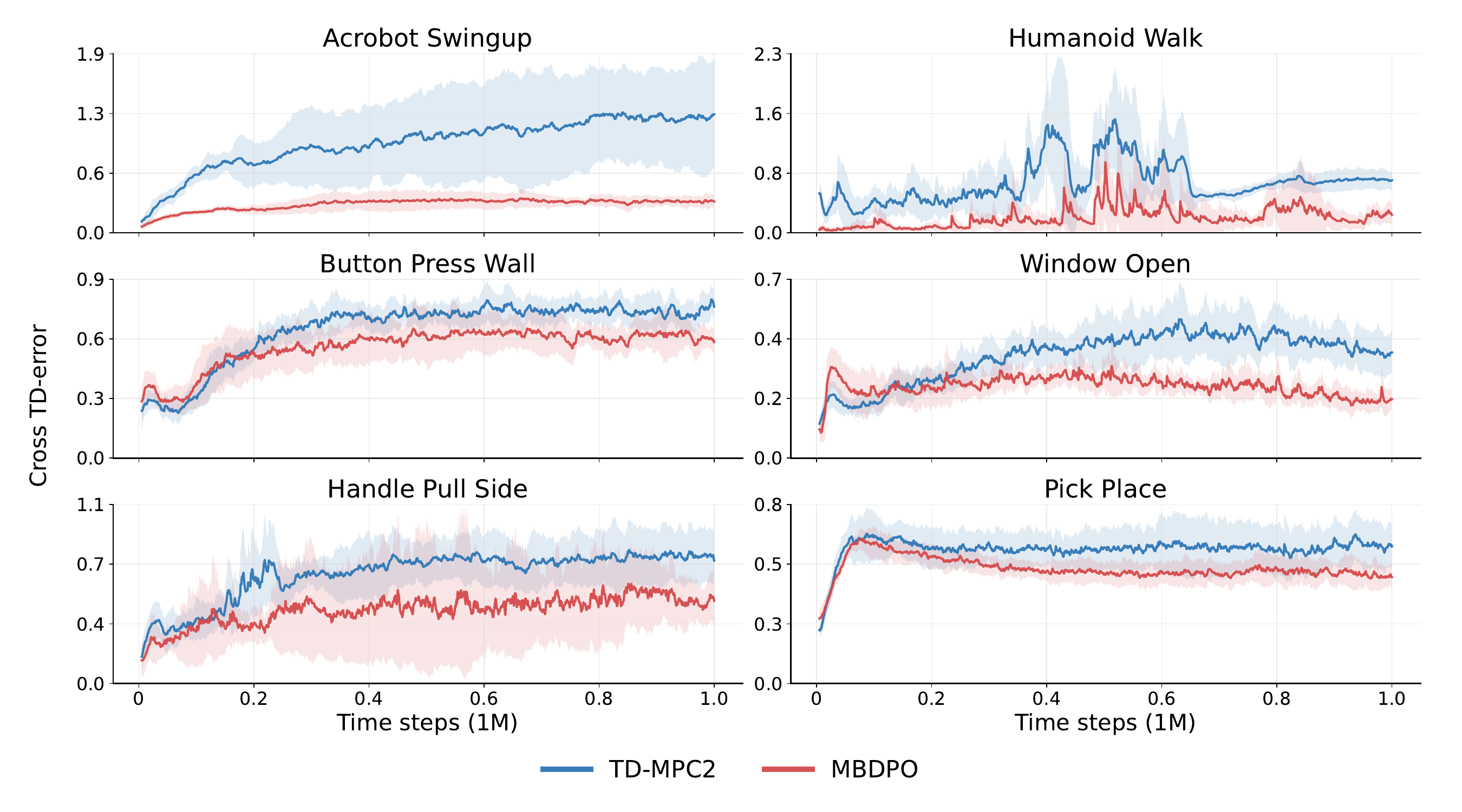}
    \caption{\textbf{Per-task cross TD-error during training.}  EMA-smoothed cross TD-errors are reported for representative online tasks. The cross TD-error measures the temporal-difference error incurred when evaluating trajectories under a policy different from the one used for value-function training, thereby reflecting the distribution mismatch between policy improvement and value learning. Across most tasks, MBDPO exhibits lower cross TD-error than TD-MPC2, suggesting that diffusion policy optimization reduces out-of-distribution value queries and leads to more stable value iteration during training.}
    \label{fig:all_TD_error}
\end{figure}

\begin{figure}[ht]
    \centering
    \includegraphics[width=0.99\linewidth]{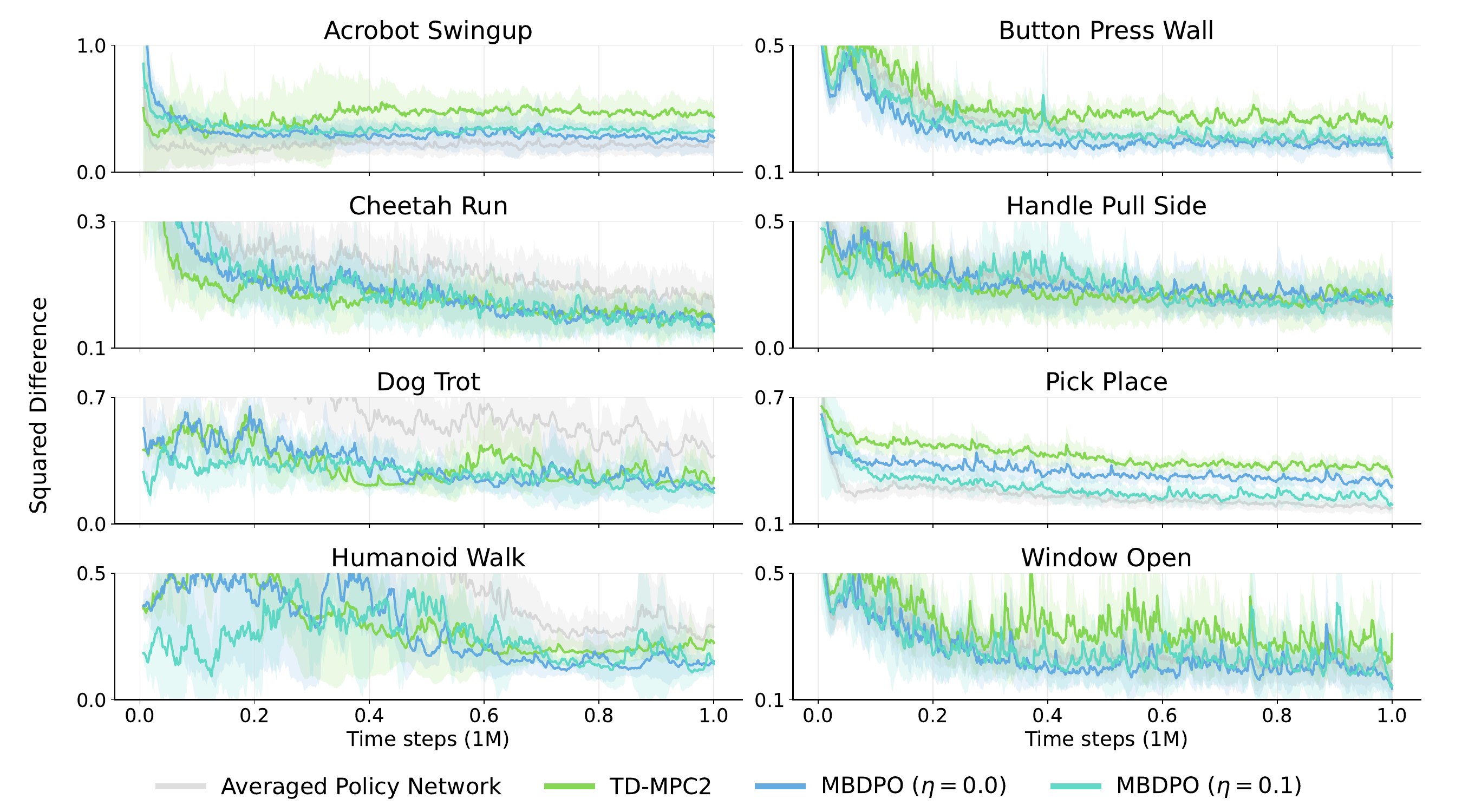}
    \caption{\textbf{Per-task action drift during training.}
    EMA-smoothed action differences are reported for each of the $8$ online tasks.
    The Averaged Policy Network (Grey) curve reports the mean action drift of the policy networks from TD-MPC2 and the two MBDPO variants.
    Across these tasks, MBDPO exhibits smaller drift than TD-MPC2, indicating improved temporal stability under diffusion policy optimization.
}
    \label{fig:all_method_drift}
\end{figure}

\clearpage

\subsection{Latent Task Embedding}
Figure~\ref{fig:embeddings} below shows the task embedding $\mathcal{E}_{env}$ for $70$-task pretraining with $10$ unseen tasks for fine-tuning.  Each point corresponds to one task embedding projected into a two-dimensional space using t-SNE for visualization.  The red circles denote MetaWorld tasks, while the blue squares denote DMControl tasks.  As shown in the figure, the learned task embeddings exhibit a clear domain-level separation: MetaWorld manipulation tasks form a compact cluster in the upper region, whereas DMControl locomotion and control tasks are mainly grouped in the lower region.  This separation indicates that $\mathcal{E}_{env}$ captures task-specific environment information and organizes tasks according to their underlying dynamics, observation distributions, and action semantics.

Most tasks within the same benchmark are embedded close to semantically or dynamically related tasks.  For example, MetaWorld tasks involving similar manipulation primitives, such as pushing, pulling, opening, closing, and object placement, tend to be located near each other.  Similarly, DMControl tasks with related morphology or control structure, such as Walker, Cheetah, Cartpole, and Reacher variants, also form local clusters.  This suggests that the learned task embedding is not merely distinguishing task identities, but also reflects meaningful similarities across tasks. Notably, the task ``Hopper Hop'' is highlighted by the red circle.  Although it belongs to the DMControl suite, its embedding is relatively isolated from the main DMControl cluster. This indicates that ``Hopper Hop'' has a substantially different data distribution from the other pretraining tasks.  Such an outlying embedding may lead to less effective sharing of representations and policies during multi-task pretraining, thereby making policy optimization more challenging for this task. 
Overall, the visualization supports our hypothesis that the learned task embedding $\mathcal{E}_{env}$ provides a structured representation of task distributions, while also revealing tasks whose distributions deviate significantly from the majority of the pretraining set.
\begin{figure}[ht]
    \centering
    \includegraphics[width=0.5\linewidth]{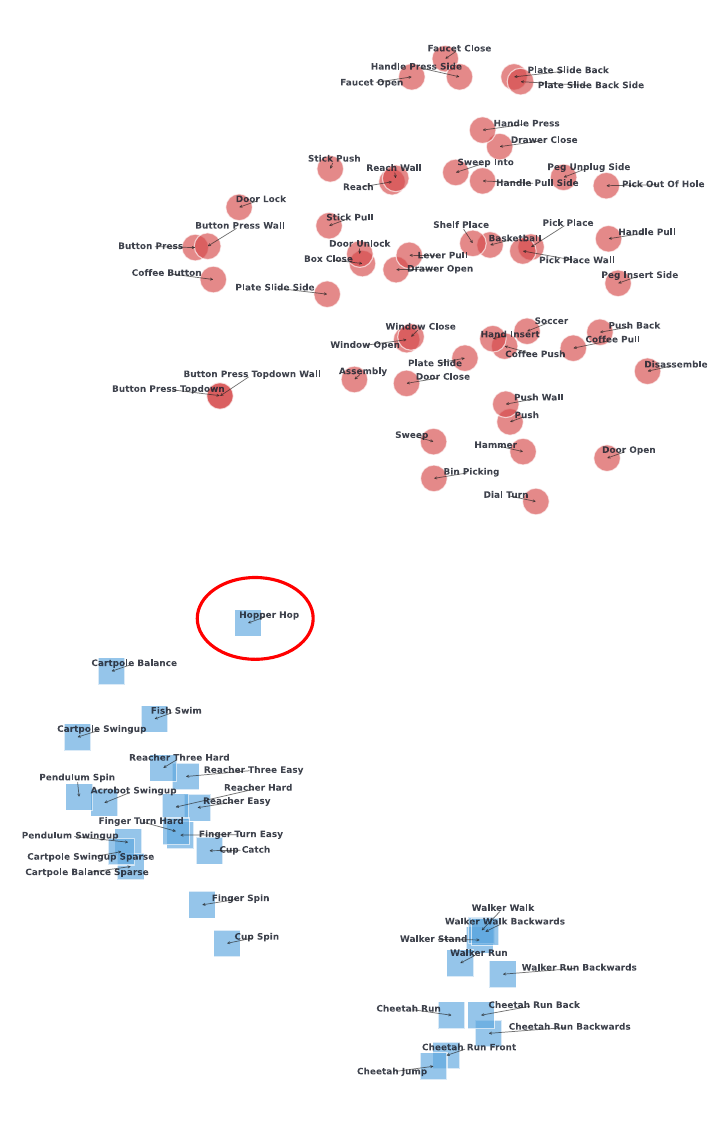}
    \caption{\textbf{T-SNE visualization \citep{van2008visualizing} of task embeddings learned by MBDPO trained on $80$ tasks from DMControl and MetaWorld.} Detailed labels are shown for clarity. According to our hypothesis, the task embedding of Hopper Hop can be attributed to a severe deviation in the representation’s data distribution (see the red circle), which subsequently impedes effective policy optimization.
    }
    \label{fig:embeddings}
\end{figure}

% \clearpage
\subsection{Latent Trajectories Comparison} 
\label{append:all_latent_traj}
We provide additional visualizations of latent state trajectories for all $80$ tasks under both MBDPO and TD-MPC2.  Specifically, Figures~\ref{fig:single_epo_diffusion} and~\ref{fig:single_epo_tdmpc} compare single-episode trajectories, while Figures~\ref{fig:multi_epo_diffusion} and~\ref{fig:multi_epo_tdmpc} compare multi-episode trajectories.  Across both settings, MBDPO produces more structured and temporally coherent latent manifolds, including closed-loop patterns for cyclic control tasks and smooth goal-directed transitions for manipulation tasks.  In contrast, TD-MPC2 often exhibits noisier and less organized trajectories, suggesting weaker alignment between the sampled policy and the underlying task dynamics.

\begin{figure}[ht]
    \centering
    \includegraphics[width=0.9\linewidth]{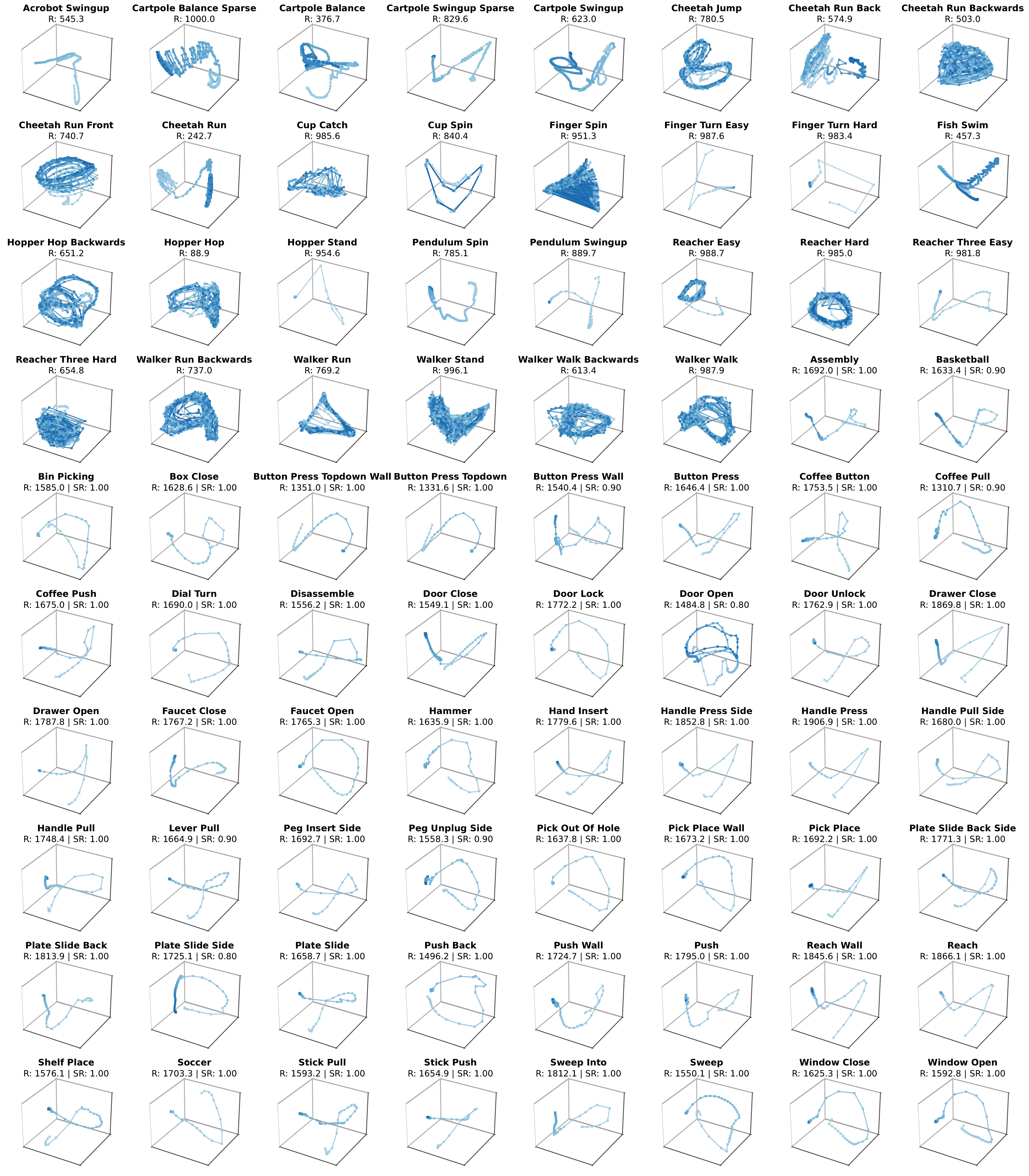}
    \caption{\textbf{Single-episode latent trajectory visualization for MBDPO.}
    We visualize the latent state trajectories produced by MBDPO with the diffusion policy over one episode for all $80$ tasks using a locally linear embedding. The color gradient indicates temporal progression along the trajectory. For cyclical control tasks, the latent trajectories often form closed-loop structures that reflect the repetitive physical dynamics of the skills (DMControl tasks). For goal-directed manipulation tasks (MetaWorld tasks), the trajectories typically evolve smoothly from the initial state toward the target state, suggesting that the diffusion policy induces structured and physically meaningful latent transitions.}
    \label{fig:single_epo_diffusion}
\end{figure}

\begin{figure}[ht]
    \centering
    \includegraphics[width=0.9\linewidth]{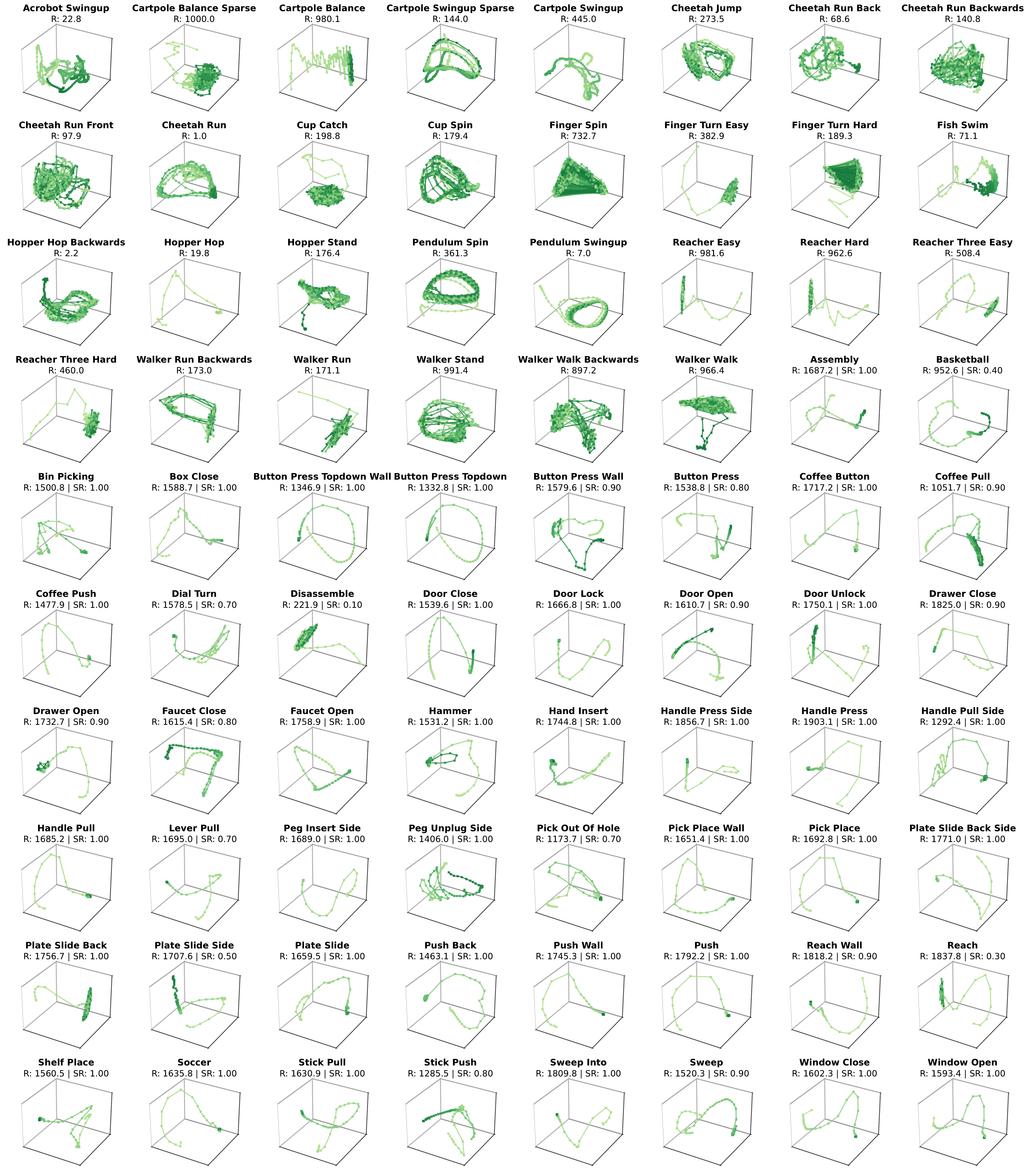}
    \caption{\textbf{Single-episode latent trajectory visualization for TD-MPC2.}
    We visualize the latent state trajectories produced by TD-MPC2 over one episode for all $80$ tasks using the same locally linear embedding protocol.
    The color gradient indicates temporal progression along the trajectory.
    Compared with MBDPO, TD-MPC2 produces trajectories that are often more scattered, noisy, and less aligned with the physical or goal-directed structure of the tasks.
    This suggests that its sampled policy may induce less coherent latent dynamics during decision-making.
}
    \label{fig:single_epo_tdmpc}
\end{figure}

\begin{figure}[ht]
    \centering
    \includegraphics[width=0.99\linewidth]{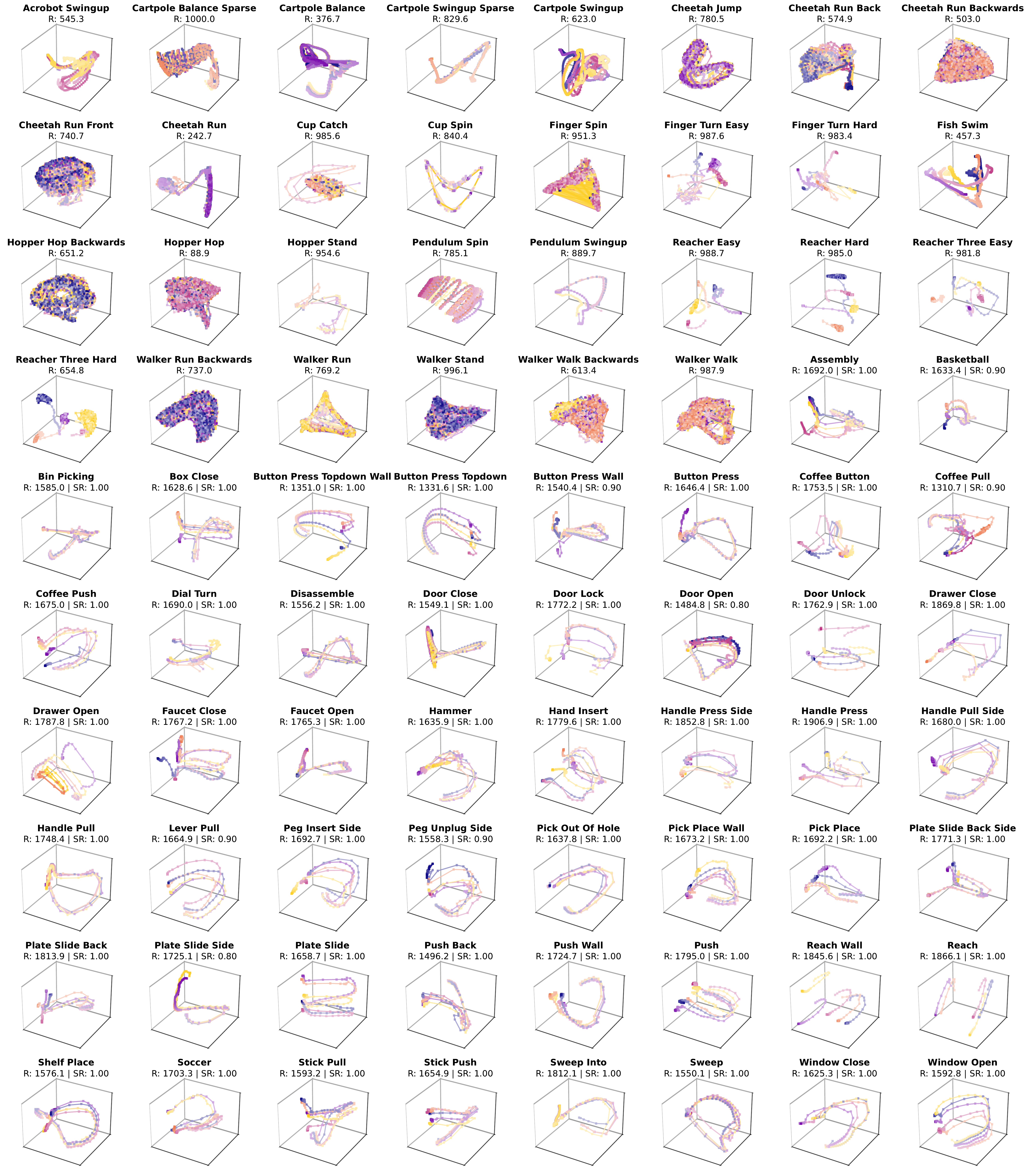}
    \caption{\textbf{Multi-episode latent trajectory visualization for MBDPO.}  We visualize latent state trajectories produced by MBDPO with the diffusion policy over multiple episodes for all $80$ tasks using a locally linear embedding. The color gradient denotes temporal progression within each episode. Across repeated rollouts, MBDPO maintains consistent and structured trajectory manifolds, with closed-loop patterns for cyclic tasks and smooth directed transitions for goal-conditioned manipulation tasks. These results indicate that the diffusion policy captures stable task-specific latent dynamics across episodes.
}
    \label{fig:multi_epo_diffusion}
\end{figure}

\begin{figure}[ht]
    \centering
    \includegraphics[width=0.99\linewidth]{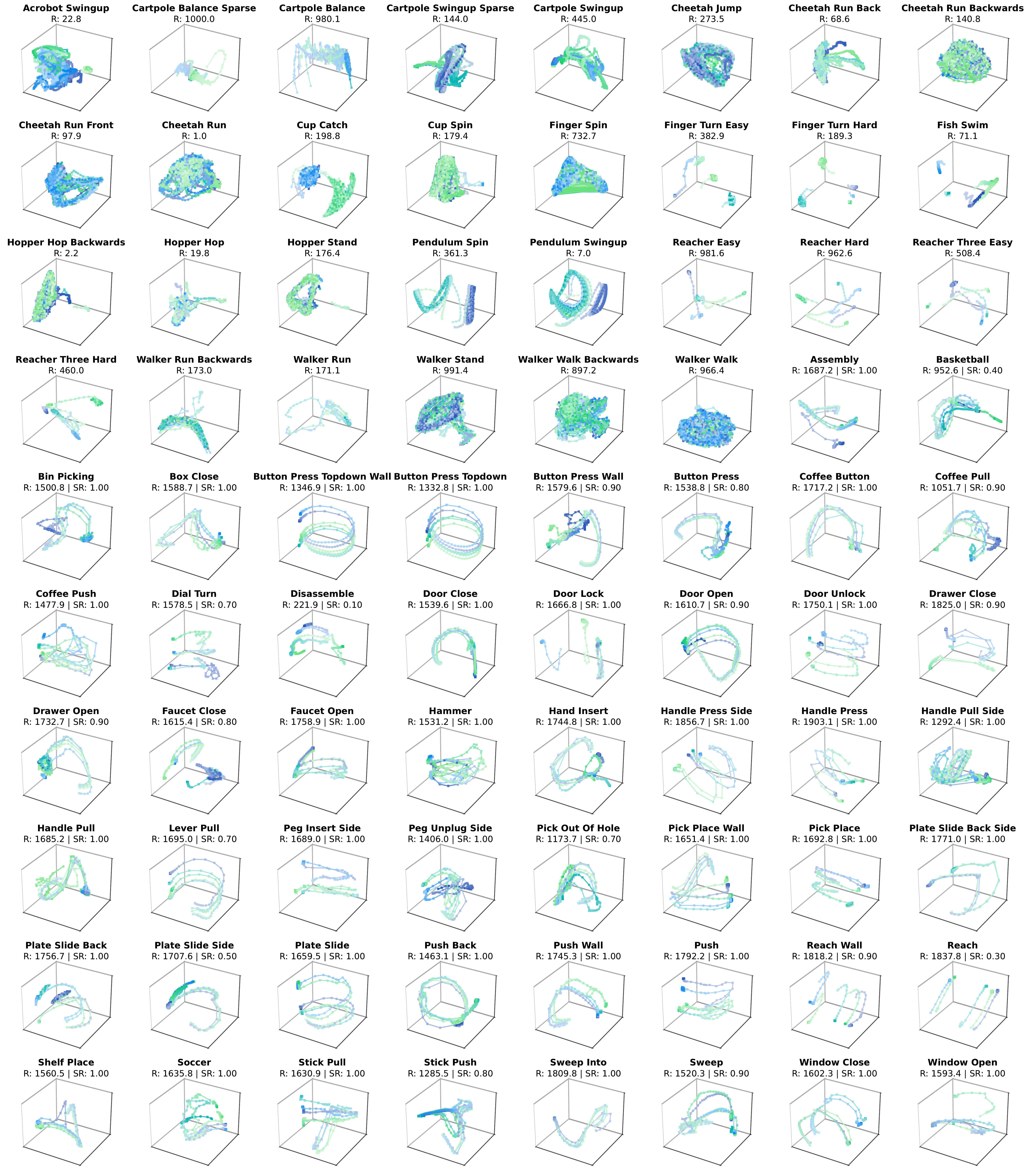}
    \caption{\textbf{Multi-episode latent trajectory visualization for TD-MPC2.}   We visualize latent state trajectories produced by TD-MPC2 over multiple episodes for all $80$ tasks using the same locally linear embedding protocol.   The color gradient denotes temporal progression within each episode.  Compared with MBDPO, TD-MPC2 exhibits less consistent trajectory geometry across rollouts, with noisier, more dispersed, and more irregular latent paths. This indicates that its sampled policy produces less stable latent dynamics and weaker alignment with the underlying causal structure of the environments.
}
    \label{fig:multi_epo_tdmpc}
\end{figure}

\clearpage
\subsection{Comprehensive Training and Evaluation Time}~\label{append:time_eval}

\begin{table}[ht]
\centering
\scriptsize
\setlength{\tabcolsep}{2pt}
\renewcommand{\arraystretch}{0.86}
\caption{\textbf{Profiling results of online-task training time per environment step across domains} (milliseconds per env step). Values are reported as mean $\pm$ std. Each experiment runs a single task on one NVIDIA A800-SXM4-80GB GPU.}
\label{tab:profiling_training_results}
\begin{adjustbox}{max width=\linewidth, max totalheight=0.95\textheight, center}
\begin{tabular}{@{}lcc@{\hspace{0.8em}}lcc@{}}
\toprule
\textbf{Task} & \textbf{TD-MPC2} & \textbf{MBDPO} & \textbf{Task} & \textbf{TD-MPC2} & \textbf{MBDPO} \\
\midrule
\multicolumn{6}{c}{\textbf{MetaWorld}} \\
\midrule
\texttt{mw-assembly} & \prof{1.35}{0.012} & \prof{1.35}{0.008} & \texttt{mw-basketball} & \prof{1.07}{0.019} & \prof{1.09}{0.022} \\
\texttt{mw-button-press-topdown} & \prof{0.87}{0.007} & \prof{0.89}{0.006} & \texttt{mw-button-press-topdown-wall} & \prof{0.90}{0.016} & \prof{0.89}{0.006} \\
\texttt{mw-button-press} & \prof{0.89}{0.017} & \prof{0.92}{0.028} & \texttt{mw-button-press-wall} & \prof{0.94}{0.054} & \prof{0.91}{0.013} \\
\texttt{mw-coffee-button} & \prof{0.99}{0.015} & \prof{1.00}{0.004} & \texttt{mw-coffee-pull} & \prof{1.26}{0.026} & \prof{1.26}{0.007} \\
\texttt{mw-coffee-push} & \prof{1.22}{0.007} & \prof{1.21}{0.006} & \texttt{mw-dial-turn} & \prof{0.94}{0.024} & \prof{0.92}{0.007} \\
\texttt{mw-disassemble} & \prof{1.36}{0.018} & \prof{1.34}{0.020} & \texttt{mw-door-open} & \prof{1.09}{0.013} & \prof{1.08}{0.015} \\
\texttt{mw-door-close} & \prof{1.01}{0.010} & \prof{1.01}{0.016} & \texttt{mw-drawer-close} & \prof{1.09}{0.041} & \prof{1.08}{0.041} \\
\texttt{mw-drawer-open} & \prof{1.02}{0.007} & \prof{1.03}{0.012} & \texttt{mw-faucet-open} & \prof{0.91}{0.022} & \prof{0.90}{0.029} \\
\texttt{mw-faucet-close} & \prof{0.90}{0.005} & \prof{0.89}{0.007} & \texttt{mw-hammer} & \prof{1.33}{0.016} & \prof{1.33}{0.016} \\
\texttt{mw-handle-press-side} & \prof{0.88}{0.035} & \prof{0.87}{0.018} & \texttt{mw-handle-press} & \prof{0.93}{0.038} & \prof{0.92}{0.037} \\
\texttt{mw-handle-pull-side} & \prof{1.00}{0.011} & \prof{1.00}{0.036} & \texttt{mw-handle-pull} & \prof{0.99}{0.009} & \prof{0.98}{0.007} \\
\texttt{mw-lever-pull} & \prof{1.00}{0.014} & \prof{0.99}{0.010} & \texttt{mw-peg-insert-side} & \prof{1.14}{0.005} & \prof{1.14}{0.016} \\
\texttt{mw-peg-unplug-side} & \prof{1.22}{0.079} & \prof{1.20}{0.079} & \texttt{mw-pick-out-of-hole} & \prof{1.08}{0.019} & \prof{1.07}{0.048} \\
\texttt{mw-pick-place} & \prof{1.07}{0.014} & \prof{1.10}{0.032} & \texttt{mw-pick-place-wall} & \prof{1.19}{0.010} & \prof{1.18}{0.006} \\
\texttt{mw-plate-slide} & \prof{0.90}{0.011} & \prof{0.92}{0.013} & \texttt{mw-plate-slide-side} & \prof{0.92}{0.008} & \prof{0.91}{0.013} \\
\texttt{mw-plate-slide-back} & \prof{0.97}{0.011} & \prof{0.94}{0.013} & \texttt{mw-plate-slide-back-side} & \prof{0.96}{0.011} & \prof{0.95}{0.009} \\
\texttt{mw-push-back} & \prof{1.10}{0.008} & \prof{1.10}{0.011} & \texttt{mw-push} & \prof{1.11}{0.010} & \prof{1.11}{0.008} \\
\texttt{mw-push-wall} & \prof{1.17}{0.010} & \prof{1.15}{0.014} & \texttt{mw-reach} & \prof{0.90}{0.011} & \prof{0.89}{0.011} \\
\texttt{mw-reach-wall} & \prof{0.91}{0.005} & \prof{0.92}{0.025} & \texttt{mw-shelf-place} & \prof{1.21}{0.040} & \prof{1.19}{0.038} \\
\texttt{mw-soccer} & \prof{1.10}{0.014} & \prof{1.10}{0.009} & \texttt{mw-stick-push} & \prof{1.35}{0.015} & \prof{1.34}{0.010} \\
\texttt{mw-stick-pull} & \prof{1.43}{0.017} & \prof{1.39}{0.009} & \texttt{mw-sweep-into} & \prof{1.19}{0.030} & \prof{1.20}{0.023} \\
\texttt{mw-sweep} & \prof{1.12}{0.024} & \prof{1.11}{0.004} & \texttt{mw-window-open} & \prof{1.23}{0.018} & \prof{1.22}{0.020} \\
\texttt{mw-window-close} & \prof{1.24}{0.031} & \prof{1.23}{0.008} & \texttt{mw-bin-picking} & \prof{1.12}{0.015} & \prof{1.19}{0.229} \\
\texttt{mw-box-close} & \prof{1.44}{0.092} & \prof{1.41}{0.087} & \texttt{mw-door-lock} & \prof{1.02}{0.067} & \prof{1.00}{0.076} \\
\texttt{mw-door-unlock} & \prof{1.00}{0.042} & \prof{1.00}{0.041} & \texttt{mw-hand-insert} & \prof{1.14}{0.031} & \prof{1.11}{0.026} \\
\midrule
\multicolumn{6}{c}{\textbf{DMControl}} \\
\midrule
\texttt{acrobot-swingup} & \prof{0.40}{0.003} & \prof{0.38}{0.002} & \texttt{cartpole-balance} & \prof{0.51}{0.002} & \prof{0.50}{0.012} \\
\texttt{cartpole-balance-sparse} & \prof{0.40}{0.004} & \prof{0.37}{0.002} & \texttt{cartpole-swingup} & \prof{0.51}{0.013} & \prof{0.51}{0.001} \\
\texttt{cartpole-swingup-sparse} & \prof{0.38}{0.002} & \prof{0.37}{0.001} & \texttt{cheetah-jump} & \prof{0.50}{0.007} & \prof{0.50}{0.009} \\
\texttt{cheetah-run} & \prof{0.40}{0.005} & \prof{0.38}{0.008} & \texttt{cheetah-run-back} & \prof{0.45}{0.003} & \prof{0.45}{0.008} \\
\texttt{cheetah-run-backwards} & \prof{0.40}{0.008} & \prof{0.38}{0.003} & \texttt{cheetah-run-front} & \prof{0.47}{0.007} & \prof{0.44}{0.004} \\
\texttt{cup-catch} & \prof{0.40}{0.011} & \prof{0.39}{0.002} & \texttt{cup-spin} & \prof{0.68}{0.055} & \prof{0.67}{0.050} \\
\texttt{dog-run} & \prof{2.76}{0.147} & \prof{2.70}{0.137} & \texttt{dog-trot} & \prof{2.72}{0.097} & \prof{2.84}{0.118} \\
\texttt{dog-stand} & \prof{2.69}{0.069} & \prof{2.62}{0.090} & \texttt{dog-walk} & \prof{2.85}{0.151} & \prof{2.74}{0.138} \\
\texttt{finger-spin} & \prof{0.37}{0.009} & \prof{0.33}{0.002} & \texttt{finger-turn-easy} & \prof{0.47}{0.005} & \prof{0.45}{0.003} \\
\texttt{finger-turn-hard} & \prof{0.47}{0.005} & \prof{0.46}{0.005} & \texttt{fish-swim} & \prof{0.51}{0.003} & \prof{0.51}{0.003} \\
\texttt{hopper-hop} & \prof{0.45}{0.019} & \prof{0.41}{0.015} & \texttt{hopper-hop-backwards} & \prof{0.53}{0.043} & \prof{0.41}{0.016} \\
\texttt{hopper-stand} & \prof{0.49}{0.014} & \prof{0.46}{0.015} & \texttt{humanoid-run} & \prof{1.03}{0.013} & \prof{1.02}{0.012} \\
\texttt{humanoid-stand} & \prof{1.07}{0.010} & \prof{1.09}{0.009} & \texttt{humanoid-walk} & \prof{1.06}{0.011} & \prof{1.03}{0.012} \\
\texttt{pendulum-spin} & \prof{0.35}{0.011} & \prof{0.33}{0.001} & \texttt{pendulum-swingup} & \prof{0.31}{0.002} & \prof{0.30}{0.001} \\
\texttt{quadruped-run} & \prof{1.11}{0.020} & \prof{1.09}{0.019} & \texttt{quadruped-walk} & \prof{1.09}{0.019} & \prof{1.09}{0.022} \\
\texttt{reacher-easy} & \prof{0.37}{0.003} & \prof{0.34}{0.004} & \texttt{reacher-hard} & \prof{0.36}{0.003} & \prof{0.34}{0.002} \\
\texttt{reacher-three-easy} & \prof{0.38}{0.007} & \prof{0.36}{0.003} & \texttt{reacher-three-hard} & \prof{0.38}{0.006} & \prof{0.35}{0.003} \\
\texttt{walker-run} & \prof{0.61}{0.006} & \prof{0.58}{0.006} & \texttt{walker-run-backwards} & \prof{0.61}{0.005} & \prof{0.59}{0.007} \\
\texttt{walker-stand} & \prof{0.57}{0.010} & \prof{0.55}{0.021} & \texttt{walker-walk} & \prof{0.61}{0.006} & \prof{0.58}{0.003} \\
\texttt{walker-walk-backwards} & \prof{0.60}{0.007} & \prof{0.59}{0.007} &  &  &  \\
\midrule
\multicolumn{6}{c}{\textbf{ManiSkill2}} \\
\midrule
\texttt{lift-cube} & \prof{4.98}{0.301} & \prof{4.85}{0.249} & \texttt{pick-cube} & \prof{4.84}{0.389} & \prof{5.04}{0.251} \\
\texttt{stack-cube} & \prof{5.74}{0.229} & \prof{5.85}{0.351} & \texttt{pick-ycb} & \prof{8.24}{1.921} & \prof{7.99}{1.890} \\
\texttt{turn-faucet} & \prof{8.69}{2.773} & \prof{10.81}{5.652} &  &  &  \\
\midrule
\multicolumn{6}{c}{\textbf{MyoSuite}} \\
\midrule
\texttt{myo-reach} & \prof{1.59}{0.062} & \prof{1.52}{0.029} & \texttt{myo-reach-hard} & \prof{1.59}{0.041} & \prof{1.55}{0.024} \\
\texttt{myo-pose} & \prof{1.49}{0.039} & \prof{1.48}{0.049} & \texttt{myo-pose-hard} & \prof{1.65}{0.046} & \prof{1.59}{0.087} \\
\texttt{myo-obj-hold} & \prof{1.68}{0.058} & \prof{1.61}{0.036} & \texttt{myo-obj-hold-hard} & \prof{1.58}{0.043} & \prof{1.61}{0.040} \\
\texttt{myo-key-turn} & \prof{1.84}{0.081} & \prof{1.70}{0.085} & \texttt{myo-key-turn-hard} & \prof{1.90}{0.097} & \prof{1.86}{0.090} \\
\texttt{myo-pen-twirl} & \prof{1.23}{0.022} & \prof{1.22}{0.039} & \texttt{myo-pen-twirl-hard} & \prof{1.24}{0.037} & \prof{1.24}{0.019} \\
\midrule
\multicolumn{6}{c}{\textbf{Visual RL}} \\
\midrule
\texttt{acrobot-swingup} & \prof{2.99}{0.023} & \prof{2.54}{0.070} & \texttt{cheetah-run} & \prof{3.20}{0.153} & \prof{3.16}{0.553} \\
\texttt{finger-spin} & \prof{4.18}{0.212} & \prof{2.98}{0.438} & \texttt{finger-turn-easy} & \prof{4.10}{0.086} & \prof{2.96}{0.050} \\
\texttt{finger-turn-hard} & \prof{3.38}{0.279} & \prof{3.04}{0.350} & \texttt{quadruped-walk} & \prof{4.42}{0.101} & \prof{3.72}{0.029} \\
\texttt{reacher-easy} & \prof{3.48}{0.280} & \prof{2.80}{0.076} & \texttt{reacher-hard} & \prof{2.91}{0.027} & \prof{2.53}{0.011} \\
\texttt{walker-run} & \prof{3.45}{0.104} & \prof{3.43}{0.439} & \texttt{walker-walk} & \prof{3.44}{0.089} & \prof{3.24}{0.048} \\
\bottomrule
\end{tabular}
\end{adjustbox}
\end{table}

\begin{table}[ht]
\centering
\scriptsize
\setlength{\tabcolsep}{2pt}
\renewcommand{\arraystretch}{0.86}
\caption{\textbf{Profiling results of evaluation time for online tasks} (seconds per evaluation phase, 10 episodes in total) across domains. Values are reported as mean $\pm$ std. All experiments are evaluated with a single task running on a single NVIDIA A800-SXM4-80GB GPU. 
}
\label{tab:profiling_results}
\begin{adjustbox}{max width=\linewidth, max totalheight=0.95\textheight, center}
\begin{tabular}{@{}lcc@{\hspace{0.8em}}lcc@{}}
\toprule
\textbf{Task} & \textbf{TD-MPC2} & \textbf{MBDPO} & \textbf{Task} & \textbf{TD-MPC2} & \textbf{MBDPO} \\
\midrule

\multicolumn{6}{c}{\textbf{MetaWorld}} \\
\midrule
\texttt{mw-assembly} & \prof{10.67}{0.03} & \prof{18.89}{0.02} & \texttt{mw-basketball} & \prof{10.31}{0.01} & \prof{18.76}{0.04} \\
\texttt{mw-button-press-topdown} & \prof{10.18}{0.01} & \prof{18.77}{0.02} & \texttt{mw-button-press-topdown-wall} & \prof{10.32}{0.01} & \prof{18.73}{0.08} \\
\texttt{mw-button-press} & \prof{10.06}{0.01} & \prof{18.39}{0.02} & \texttt{mw-button-press-wall} & \prof{10.16}{0.01} & \prof{18.51}{0.02} \\
\texttt{mw-coffee-button} & \prof{10.33}{0.02} & \prof{18.48}{0.03} & \texttt{mw-coffee-pull} & \prof{10.59}{0.04} & \prof{18.73}{0.16} \\
\texttt{mw-coffee-push} & \prof{10.44}{0.01} & \prof{18.79}{0.02} & \texttt{mw-dial-turn} & \prof{10.25}{0.01} & \prof{18.31}{0.03} \\
\texttt{mw-disassemble} & \prof{10.48}{0.01} & \prof{18.68}{0.09} & \texttt{mw-door-open} & \prof{10.29}{0.01} & \prof{18.69}{0.01} \\
\texttt{mw-door-close} & \prof{10.26}{0.01} & \prof{19.02}{0.04} & \texttt{mw-drawer-close} & \prof{10.36}{0.03} & \prof{18.60}{0.02} \\
\texttt{mw-drawer-open} & \prof{10.32}{0.05} & \prof{18.67}{0.08} & \texttt{mw-faucet-open} & \prof{10.36}{0.06} & \prof{18.38}{0.12} \\
\texttt{mw-faucet-close} & \prof{10.14}{0.02} & \prof{18.15}{0.04} & \texttt{mw-hammer} & \prof{10.62}{0.03} & \prof{18.87}{0.04} \\
\texttt{mw-handle-press-side} & \prof{10.08}{0.00} & \prof{18.53}{0.09} & \texttt{mw-handle-press} & \prof{10.15}{0.02} & \prof{18.61}{0.04} \\
\texttt{mw-handle-pull-side} & \prof{10.21}{0.00} & \prof{18.40}{0.02} & \texttt{mw-handle-pull} & \prof{10.25}{0.03} & \prof{18.52}{0.04} \\
\texttt{mw-lever-pull} & \prof{10.31}{0.02} & \prof{18.73}{0.12} & \texttt{mw-peg-insert-side} & \prof{10.38}{0.05} & \prof{18.79}{0.04} \\
\texttt{mw-peg-unplug-side} & \prof{10.42}{0.03} & \prof{18.73}{0.04} & \texttt{mw-pick-out-of-hole} & \prof{10.66}{0.01} & \prof{18.73}{0.02} \\
\texttt{mw-pick-place} & \prof{10.33}{0.01} & \prof{18.50}{0.07} & \texttt{mw-pick-place-wall} & \prof{10.45}{0.02} & \prof{18.70}{0.03} \\
\texttt{mw-plate-slide} & \prof{10.27}{0.01} & \prof{18.42}{0.03} & \texttt{mw-plate-slide-side} & \prof{10.15}{0.01} & \prof{18.57}{0.30} \\
\texttt{mw-plate-slide-back} & \prof{10.36}{0.04} & \prof{18.67}{0.03} & \texttt{mw-plate-slide-back-side} & \prof{10.42}{0.02} & \prof{19.25}{0.09} \\
\texttt{mw-push-back} & \prof{10.32}{0.01} & \prof{19.04}{0.03} & \texttt{mw-push} & \prof{10.37}{0.02} & \prof{18.64}{0.04} \\
\texttt{mw-push-wall} & \prof{10.62}{0.05} & \prof{18.62}{0.02} & \texttt{mw-reach} & \prof{10.16}{0.01} & \prof{18.60}{0.04} \\
\texttt{mw-reach-wall} & \prof{10.14}{0.03} & \prof{18.42}{0.04} & \texttt{mw-shelf-place} & \prof{10.42}{0.02} & \prof{18.64}{0.02} \\
\texttt{mw-soccer} & \prof{10.45}{0.05} & \prof{19.08}{0.05} & \texttt{mw-stick-push} & \prof{10.57}{0.05} & \prof{19.04}{0.03} \\
\texttt{mw-stick-pull} & \prof{10.65}{0.02} & \prof{19.19}{0.03} & \texttt{mw-sweep-into} & \prof{10.64}{0.05} & \prof{18.53}{0.03} \\
\texttt{mw-sweep} & \prof{10.32}{0.01} & \prof{18.71}{0.02} & \texttt{mw-window-open} & \prof{10.48}{0.02} & \prof{18.48}{0.02} \\
\texttt{mw-window-close} & \prof{10.41}{0.01} & \prof{18.59}{0.03} & \texttt{mw-bin-picking} & \prof{10.36}{0.01} & \prof{19.01}{0.03} \\
\texttt{mw-box-close} & \prof{10.51}{0.02} & \prof{18.44}{0.08} & \texttt{mw-door-lock} & \prof{10.47}{0.05} & \prof{18.65}{0.06} \\
\texttt{mw-door-unlock} & \prof{10.45}{0.02} & \prof{18.86}{0.04} & \texttt{mw-hand-insert} & \prof{10.73}{0.04} & \prof{18.89}{0.04} \\
\midrule

\multicolumn{6}{c}{\textbf{DMControl}} \\
\midrule
\texttt{acrobot-swingup} & \prof{48.96}{0.04} & \prof{91.41}{0.05} & \texttt{cartpole-balance} & \prof{49.54}{0.02} & \prof{93.11}{0.49} \\
\texttt{cartpole-balance-sparse} & \prof{48.59}{0.02} & \prof{89.92}{0.31} & \texttt{cartpole-swingup} & \prof{49.14}{0.02} & \prof{95.71}{0.21} \\
\texttt{cartpole-swingup-sparse} & \prof{48.53}{0.02} & \prof{90.39}{0.64} & \texttt{cheetah-jump} & \prof{48.64}{0.02} & \prof{90.12}{0.42} \\
\texttt{cheetah-run} & \prof{48.04}{0.05} & \prof{88.86}{0.06} & \texttt{cheetah-run-back} & \prof{48.40}{0.07} & \prof{89.86}{0.05} \\
\texttt{cheetah-run-backwards} & \prof{48.28}{0.19} & \prof{89.26}{0.20} & \texttt{cheetah-run-front} & \prof{48.57}{0.18} & \prof{93.52}{0.31} \\
\texttt{cup-catch} & \prof{48.53}{0.06} & \prof{89.35}{0.33} & \texttt{cup-spin} & \prof{49.28}{0.08} & \prof{88.98}{0.08} \\
\texttt{dog-run} & \prof{81.49}{0.49} & \prof{116.13}{0.88} & \texttt{dog-trot} & \prof{82.57}{0.43} & \prof{112.60}{0.74} \\
\texttt{dog-stand} & \prof{80.86}{0.47} & \prof{117.07}{0.78} & \texttt{dog-walk} & \prof{76.16}{0.14} & \prof{102.96}{0.42} \\
\texttt{finger-spin} & \prof{48.39}{0.08} & \prof{88.62}{0.51} & \texttt{finger-turn-easy} & \prof{48.70}{0.07} & \prof{90.84}{0.38} \\
\texttt{finger-turn-hard} & \prof{49.44}{0.47} & \prof{90.07}{0.18} & \texttt{fish-swim} & \prof{50.22}{0.40} & \prof{90.99}{0.75} \\
\texttt{hopper-hop} & \prof{48.45}{0.05} & \prof{90.15}{0.08} & \texttt{hopper-hop-backwards} & \prof{48.56}{0.03} & \prof{90.67}{0.06} \\
\texttt{hopper-stand} & \prof{48.93}{0.04} & \prof{89.40}{0.07} & \texttt{humanoid-run} & \prof{65.93}{0.16} & \prof{93.01}{0.31} \\
\texttt{humanoid-stand} & \prof{66.68}{0.42} & \prof{93.37}{0.98} & \texttt{humanoid-walk} & \prof{65.95}{0.19} & \prof{93.96}{0.15} \\
\texttt{pendulum-spin} & \prof{48.23}{0.04} & \prof{89.49}{0.40} & \texttt{pendulum-swingup} & \prof{48.29}{0.09} & \prof{89.57}{0.93} \\
\texttt{quadruped-run} & \prof{51.07}{0.10} & \prof{92.25}{0.34} & \texttt{quadruped-walk} & \prof{50.48}{0.06} & \prof{94.07}{0.14} \\
\texttt{reacher-easy} & \prof{48.58}{0.05} & \prof{89.32}{0.39} & \texttt{reacher-hard} & \prof{48.50}{0.19} & \prof{89.07}{0.38} \\
\texttt{reacher-three-easy} & \prof{48.62}{0.04} & \prof{89.35}{0.04} & \texttt{reacher-three-hard} & \prof{48.29}{0.08} & \prof{88.68}{0.42} \\
\texttt{walker-run} & \prof{49.95}{0.03} & \prof{90.58}{0.17} & \texttt{walker-run-backwards} & \prof{49.51}{0.03} & \prof{89.00}{0.12} \\
\texttt{walker-stand} & \prof{49.05}{0.03} & \prof{92.53}{0.13} & \texttt{walker-walk} & \prof{49.31}{0.08} & \prof{90.07}{0.08} \\
\texttt{walker-walk-backwards} & \prof{49.61}{0.04} & \prof{90.30}{0.06} &  &  &  \\
\midrule

\multicolumn{6}{c}{\textbf{ManiSkill2}} \\
\midrule
\texttt{lift-cube} & \prof{15.27}{0.32} & \prof{24.99}{1.19} & \texttt{pick-cube} & \prof{15.62}{0.27} & \prof{26.34}{0.76} \\
\texttt{stack-cube} & \prof{16.08}{0.30} & \prof{27.02}{0.65} & \texttt{pick-ycb} & \prof{18.83}{1.01} & \prof{31.01}{2.20} \\
\texttt{turn-faucet} & \prof{18.84}{1.11} & \prof{25.06}{0.55} &  &  &  \\
\midrule

\multicolumn{6}{c}{\textbf{MyoSuite}} \\
\midrule
\texttt{myo-reach} & \prof{13.90}{0.03} & \prof{19.22}{0.02} & \texttt{myo-reach-hard} & \prof{14.00}{0.04} & \prof{19.45}{0.09} \\
\texttt{myo-pose} & \prof{13.90}{0.07} & \prof{19.00}{0.02} & \texttt{myo-pose-hard} & \prof{13.95}{0.04} & \prof{19.08}{0.03} \\
\texttt{myo-obj-hold} & \prof{13.88}{0.05} & \prof{19.79}{0.02} & \texttt{myo-obj-hold-hard} & \prof{14.01}{0.05} & \prof{19.34}{0.05} \\
\texttt{myo-key-turn} & \prof{14.27}{0.08} & \prof{18.84}{0.04} & \texttt{myo-key-turn-hard} & \prof{14.10}{0.03} & \prof{19.22}{0.05} \\
\texttt{myo-pen-twirl} & \prof{13.66}{0.05} & \prof{18.81}{0.03} & \texttt{myo-pen-twirl-hard} & \prof{13.67}{0.07} & \prof{18.72}{0.02} \\
\midrule

\multicolumn{6}{c}{\textbf{Visual RL}} \\
\midrule
\texttt{acrobot-swingup} & \prof{66.51}{0.42} & \prof{108.95}{0.30} & \texttt{cheetah-run} & \prof{66.29}{0.37} & \prof{108.64}{0.31} \\
\texttt{finger-spin} & \prof{66.35}{0.47} & \prof{108.86}{0.28} & \texttt{finger-turn-easy} & \prof{66.36}{0.46} & \prof{113.39}{0.09} \\
\texttt{finger-turn-hard} & \prof{65.37}{2.59} & \prof{105.12}{0.20} & \texttt{quadruped-walk} & \prof{66.66}{1.69} & \prof{106.39}{0.67} \\
\texttt{reacher-easy} & \prof{65.29}{2.59} & \prof{104.03}{0.94} & \texttt{reacher-hard} & \prof{65.41}{2.63} & \prof{104.97}{0.48} \\
\texttt{walker-run} & \prof{66.02}{0.15} & \prof{104.89}{0.63} & \texttt{walker-walk} & \prof{66.13}{0.23} & \prof{105.64}{0.20} \\

\bottomrule
\end{tabular}
\end{adjustbox}
\end{table}

% \begin{table}[h]
% \centering
% \caption{Profiling results of offline multi-task training time (hours per 10M steps). Values are reported as mean $\pm$ std. All experiments are evaluated on a single NVIDIA H200 GPU.}
% \label{tab:offline_training_time}
% \setlength{\tabcolsep}{8pt}
% \renewcommand{\arraystretch}{1.15}
% \begin{tabular}{lccccc}
% \toprule
%  & \multicolumn{5}{c}{Model Size} \\
% \cmidrule(lr){2-6}
%  Offline Task & 1.7M & 6M & 21M & 54M & 340M \\
% \midrule
% \parbox[c][2.8em][c]{3.1cm}{\raggedright DMControl (30 tasks)} & 
% $34.8 \pm 0.6$ & $35.0 \pm 0.5$ & $39.8 \pm 0.3$ & $52.6 \pm 0.1$ & $202.9 \pm 0.9$ \\
% \cmidrule(lr){1-6}
% \parbox[c][2.8em][c]{3.1cm}{\raggedright MetaWorld \&\\DMControl (80 tasks)} & 
% $39.9 \pm 0.4$ & $40.5 \pm 0.6$ & $42.6 \pm 0.4$ & $69.6 \pm 0.7$ & $206.8 \pm 0.8$ \\
% \bottomrule
% \end{tabular}
% \end{table}

\clearpage
\section{Technical Definition} \label{append:technical_definition}
In this section, we provide further technical details and formal definitions to supplement the descriptions in the main text.

\subsection{Diffusion Policy based on DDPM}
We briefly review the fundamental mechanism of Denoising Diffusion Probabilistic Models (DDPM) \citep{ho2020denoising, song2020score}. In our setting, the diffusion process based on DDPM is used for sampling and generating target policies. Let $\tau \in \{1, \dots, N\}$ denote the diffusion timestep and $\bar{\alpha}^{\tau} \in (0, 1]$ be the cumulative noise schedule. The forward and reverse diffusion processes for the action sequence are defined as: 
\begin{align} 
\text{(Forward)} \quad & p(a_{t:t+H}^{\tau} | a_{t:t+H}^{0}, z_t) = \mathcal{N}\left(a_{t:t+H}^{\tau}; \sqrt{\bar{\alpha}^{\tau}} a_{t:t+H}^{0}, (1-\bar{\alpha}^{\tau}) I\right), \label{eq:forward_process} \\
\text{(Reverse)} \quad & a_{t:t+H}^{\tau-1} = \frac{1}{\sqrt{\alpha^\tau}}
\left(
a_{t:t+H}^{\tau}
+
(1-\alpha^{\tau})
\phi(z_t, a_{t:t+H}^{\tau}, \tau)
\right)
+
\sigma(\tau)\epsilon,
\quad
\epsilon \sim \mathcal{N}(0, I), \label{eq:reverse_process}
\end{align}
where $p(a_{t:t+H}^{\tau} | a_{t:t+H}^{0}, z_t)$ is the transition kernel in the forward process, and $\phi(z_t, a_{t:t+H}^{\tau}, \tau) = \nabla_{a_{t:t+H}^{\tau}} \log \pi^{\tau}(a_{t:t+H}^{\tau} | z_t)$ denotes the score function at diffusion step $\tau$. In our framework, as policy optimization is performed within the latent world model, the explicit forward diffusion process is not strictly required for training. Nevertheless, the forward process remains essential for computing the transition kernel in the reverse process, as formulated in \eqref{eq:transition_kernel}.

% \subsection{Bellman Operator}
\subsection{Bellman Operator}
To formally analyze value estimation within the world model, we define the Bellman operator as follows \citep{sutton1998reinforcement}:

\begin{definition}[Bellman Operator] \label{def:bellman_operator}
Consider the space of $Q$-functions $\mathcal{Q}$ equipped with the $L_\infty$ norm\footnote{Formally, $\mathcal{Q} = \{Q | Q: \mathcal{Z} \times \mathcal{A} \to \mathbb{R}\}$ is the set of all bounded real-valued functions defined on the latent state-action space. The $L_\infty$ norm is defined as $\|Q\|_\infty = \sup_{(z, a) \in \mathcal{Z} \times \mathcal{A}} |Q(z, a)|$.}. For a given policy $\pi$, the Bellman evaluation operator $T^\pi: \mathcal{Q} \to \mathcal{Q}$ is defined as:
\begin{equation}
    (T^\pi Q)(z_t, a_t) = \mathcal{R}(z_t, a_t) + \gamma \mathbb{E}_{z_{t+1} \sim \mathcal{F}(z_t, a_t), a_{t+1} \sim \pi}[Q(z_{t+1}, a_{t+1})],
\end{equation}
where $\mathcal{R}(z_t, a_t)$ denotes the latent reward function, $\mathcal{F}$ represents the latent dynamics within the learned world model, and $\gamma \in (0, 1)$ is the discount factor.
\end{definition}
This operator serves exclusively for policy evaluation and value function update. By repeatedly applying $T^\pi$, the $Q$-function converges to $Q^\pi$ (according to the Banach fixed-point theorem \citep{bellman2013stability}), the true value of the current policy $\pi$. This evaluation stage is crucial as it provides an accurate value estimate, which MBDPO then uses to guide the policy optimization process toward higher-reward regions.

\clearpage
% \section{Theoretical Analysis}
\section{Theoretical Analysis} \label{append:theoretical_analysis}

In this section, we provide a theoretical analysis of the proposed framework. We first examine the misalignment between search and value learning, highlighting how discrepancies between the search policy and the learned value function can affect optimization. We then analyze MBDPO from a theoretical perspective, characterizing its learning dynamics and discussing why it can mitigate such misalignment. Finally, we present an error analysis of the overall algorithm, identifying the key sources of approximation error and their impact on the resulting policy.

\subsection{Bottlenecks in the Existing Paradigm} \label{append:bottleneck_analysis}

Before presenting our main results, we first provide a detailed mathematical analysis of policy search in the TD-MPC series~\citep{hansen2022temporal, hansen2025learning, zhan2025bootstrap, lin2025tdmpc2improvingtemporaldifference}.

\subsubsection{How Does Policy Search Work in the TD-MPC Series?} \label{append:previous_method_stragety}

The policy search procedure can be decomposed into two steps:

\begin{itemize}
    \item \textbf{Step 1: Initialize with a Prior Policy.}
    The search distribution is initialized with an isotropic Gaussian prior policy 
    $p_0(\cdot|z) \sim \mathcal{N}(\mu(z), \sigma^2)$ (usually parameterized by a policy network), where the mean $\mu$ is estimated by maximizing the objective in~\eqref{eq:policy_objective}.

    \item \textbf{Step 2: Policy Search via Model Predictive Path Integral Planning (MPPI).}
    Starting from the initialized prior policy $\mathcal{N}(\mu(z), \sigma^2)$, the algorithm iteratively samples candidate action sequences and evaluates them using model-based rollouts \citep{williams2017model}. These samples are then reweighted according to their estimated returns (abstracted in \eqref{eq:mppi_principle}), 
    \begin{align} \label{eq:mppi_principle}
        \underbrace{p_0(a_{t:t+H}|z_t)}_{\text{Gaussian prior}}
\xrightarrow{\text{sample}}
\underbrace{\{a_{t:t+H}^{(i)}\}_{i=1}^{T}}_{\text{candidate actions}}
\xrightarrow{\text{evaluated by $H$-step } \hat{Q}}
\underbrace{\{\hat{Q}^{(i)}\}_{i=1}^{T}}_{\text{estimated values}}
\xrightarrow{\text{reweight}}
\underbrace{\pi(a_{t:t+H}|z_t)}_{\text{search target}}, 
    \end{align}
    where $T$ represents the sample number, which yields an improved policy target through importance-weighted aggregation.
\end{itemize}

This procedure highlights a key limitation of policy search in the TD-MPC series. The resulting target policy is induced by the search distribution rather than obtained through an explicit policy optimization problem constrained around an anchored policy distribution. As a result, the searched target can become effectively uncontrolled: its update is primarily determined by the return estimates used for importance weighting. When the learned value function or model-based return estimates are inaccurate, erroneously high $Q$-values may assign excessive weights to suboptimal action sequences. Consequently, the aggregated policy target can be biased toward poor actions, causing policy learning to imitate a bad search-induced target rather than ensuring a monotonic improvement over the prior policy. In this sense, the search step does not constitute a principled policy optimization procedure; it constructs a heuristic target from weighted samples of learned world models, without explicitly optimizing a regularized policy objective or enforcing an improvement guarantee.

\subsubsection{Analysis of Bottleneck~\ref{prob_1}
% and \ref{prob_2}
}~\label{append:proof_prob_1}
We start from the theoretical analysis from the Bellman update. In the setting of a latent world model, the $\hat{Q}$ value function is learned from the (behavioral) policy $\beta$ (parameterized with a policy network). 

% \textbf{Analysis of Problem~\ref{prob_1}.} 
\begin{proof}
By Definition~\ref{def:bellman_operator}, for any latent state-action pair $(z_t,a_t)$, we have
\begin{equation}
(T^\pi \hat Q)(z_t,a_t)
=
\mathcal{R}(z_t,a_t)
+
\gamma 
\mathbb{E}_{z_{t+1}\sim \mathcal{F}(z_t, a_t)}
\mathbb{E}_{a_{t+1}\sim \pi(\cdot|z_{t+1})}
[\hat Q(z_{t+1},a_{t+1})],
\end{equation}
and
\begin{equation}
(T^\beta \hat Q)(z_t,a_t)
=
\mathcal{R}(z_t,a_t)
+
\gamma 
\mathbb{E}_{z_{t+1}\sim \mathcal{F}(z_t, a_t)}
\mathbb{E}_{a_{t+1}\sim \beta(\cdot|z_{t+1})}
[\hat Q(z_{t+1},a_{t+1})].
\end{equation}
Therefore,
\begin{align}
&\left|(T^\pi \hat Q)(z_t,a_t) - (T^\beta \hat Q)(z_t,a_t)\right| \nonumber \\
&=
\gamma
\left|
\mathbb{E}_{z_{t+1}\sim \mathcal{F}(z_t, a_t)}
\left[
\sum_{a_{t+1}}
\left(
\pi(a_{t+1}|z_{t+1})
-
\beta(a_{t+1}|z_{t+1})
\right)
\hat Q(z_{t+1},a_{t+1})
\right]
\right| \nonumber \\
&\leq
\gamma
\mathbb{E}_{z_{t+1}\sim \mathcal{F}(z_t, a_t)}
\left[
\left|
\sum_{a_{t+1}}
\left(
\pi(a_{t+1}|z_{t+1})
-
\beta(a_{t+1}|z_{t+1})
\right)
\hat Q(z_{t+1},a_{t+1})
\right|
\right] \nonumber \\
&\leq
\gamma \|\hat Q\|_\infty
\mathbb{E}_{z_{t+1}\sim \mathcal{F}(z_t, a_t)}
\left[
\int_{a_{t+1}} 
\left|
\pi(a_{t+1}|z_{t+1})
-
\beta(a_{t+1}|z_{t+1})
\right|da_{t+1}
\right]  \nonumber \\
&=
\gamma \|\hat Q\|_\infty
\mathbb{E}_{z_{t+1}\sim \mathcal{F}(z_t, a_t)}
\left[
\|\pi(\cdot|z_{t+1})-\beta(\cdot|z_{t+1})\|_1
\right].
\end{align}
By Pinsker's inequality,
\begin{equation}
\|\pi(\cdot|z_{t+1})-\beta(\cdot|z_{t+1})\|_1
\leq
\sqrt{
2D_{\mathrm{KL}}
\left(
\pi(\cdot|z_{t+1})\|\beta(\cdot|z_{t+1})
\right)
}.
\end{equation}
Thus,
\begin{equation}
\left|(T^\pi \hat Q)(z_t,a_t) - (T^\beta \hat Q)(z_t,a_t)\right|
\leq
\gamma \|\hat Q\|_\infty
\mathbb{E}_{z_{t+1}\sim \mathcal{F}(z_t, a_t)}
\left[
\sqrt{
2D_{\mathrm{KL}}
\left(
\pi(\cdot|z_{t+1})\|\beta(\cdot|z_{t+1})
\right)}
\right].
\end{equation}
Taking the supremum over all latent state-action pairs $(z_t,a_t) \in \mathcal{Z}\times \mathcal{A}$ gives
\begin{equation}
\begin{split}
    \|T^\pi \hat Q - T^\beta \hat Q\|_{\infty}
& \leq
\gamma \|\hat Q\|_{\infty}
\sup_{z_t,a_t}
\mathbb{E}_{z_{t+1}\sim \mathcal{F}(z_t, a_t)}
\left[
\sqrt{
2D_{\mathrm{KL}}
\left(
\pi(\cdot|z_{t+1})\|\beta(\cdot|z_{t+1})
\right)}
\right] \\
& \leq \gamma \|\hat Q\|_{\infty} \sqrt{2D_{\text{KL}}^{\max}(\pi\|\beta)},
\end{split}
\end{equation}
with $D_{\text{KL}}^{\max}(\pi\|\beta)\coloneqq \sup_{z\in \mathcal{Z}} D_{\text{KL}}(\pi(\cdot|z)\|\beta(\cdot|z))$, which completes the proof.
\end{proof}

\subsection{Analysis of Bottleneck~\ref{prob_2}} \label{append:proof_prob_2}
\begin{proof}
We consider the latent MDP induced by the world model transition 
$z_{t+1}=\mathcal{F}(z_t,a_t)$. Equivalently, the transition kernel is deterministic:
\begin{equation}
P_{\mathcal{F}}(z'|z,a)=\mathbb{1}_{\{z'=\mathcal{F}(z,a)\}},
\end{equation}
where $\mathbb{1}_{\{ \cdot \}}$ is the indicator function. Let $d^{\pi}$ and $d^{\beta}$ be the discounted latent-state occupancy distributions under policies $\pi$ and $\beta$, respectively.

By the performance difference lemma \citep{mahadevan1996average}, the true improvement of $\pi$ over $\beta$ can be written as
\begin{equation}
J(\pi)-J(\beta)
=
\frac{1}{1-\gamma}
\mathbb{E}_{z\sim d^{\pi}, a\sim \pi(\cdot|z)}
\left[
A^\beta(z,a)
\right],
\end{equation}
where
\begin{equation}
A^\beta(z,a)
=
Q^\beta(z,a)-V^\beta(z),
\qquad
V^\beta(z)
=
\mathbb{E}_{a\sim \beta(\cdot|z)}
[Q^\beta(z,a)].
\end{equation}
Equivalently,
\begin{align}
J(\pi)-J(\beta)
&=
\frac{1}{1-\gamma}
\mathbb{E}_{z\sim d^{\pi}}
\left[
\mathbb{E}_{a\sim \pi(\cdot|z)}Q^\beta(z,a)
-
\mathbb{E}_{a\sim \beta(\cdot|z)}Q^\beta(z,a)
\right].
\end{align}

We now decompose the right-hand side into a surrogate term evaluated under 
$d^{\beta}$ and a distribution-shift term:
\begin{align}
J(\pi)-J(\beta)
&=
\frac{1}{1-\gamma}
\mathbb{E}_{z\sim d^{\beta}}
\left[
\mathbb{E}_{a\sim \pi(\cdot|z)}Q^\beta(z,a)
-
\mathbb{E}_{a\sim \beta(\cdot|z)}Q^\beta(z,a)
\right]
+
\Delta_{\mathrm{occ}},
\end{align}
where
\begin{align}
\Delta_{\mathrm{occ}}
&=
\frac{1}{1-\gamma}
\left(
\mathbb{E}_{z\sim d^{\pi}}
-
\mathbb{E}_{z\sim d^{\beta}}
\right)
\left[
\mathbb{E}_{a\sim \pi(\cdot|z)}Q^\beta(z,a)
-
\mathbb{E}_{a\sim \beta(\cdot|z)}Q^\beta(z,a)
\right].
\end{align}
Under the standard boundedness assumption 
$\|Q^\beta\|_\infty \leq Q_{\max}$, the distribution-shift term can be bounded by a trust-region style argument \citep{schulman2015trust}:
\begin{equation}
|\Delta_{\mathrm{occ}}|
\leq
C_1 D_{\mathrm{KL}}^{\max}(\pi\|\beta),
\end{equation}
where
\begin{equation}
D_{\mathrm{KL}}^{\max}(\pi\|\beta)
\coloneqq
\sup_z D_{\mathrm{KL}}
\left(
\pi(\cdot|z)\|\beta(\cdot|z)
\right).
\end{equation}
This bound follows from the fact that the discrepancy between the occupancy distributions 
$d^{\pi}$ and $d^{\beta}$ is controlled by the maximum policy divergence at each latent state, together with Pinsker's inequality.

Therefore,
\begin{align}
J(\pi)-J(\beta)
&\geq
\frac{1}{1-\gamma}
\mathbb{E}_{z\sim d^{\beta}}
\left[
\mathbb{E}_{a\sim \pi(\cdot|z)}Q^\beta(z,a)
-
\mathbb{E}_{a\sim \beta(\cdot|z)}Q^\beta(z,a)
\right]
-
C_1 D_{\mathrm{KL}}^{\max}(\pi\|\beta).
\end{align}

Next, we relate the true value function $Q^\beta$ to its approximation $\hat Q$. 
For any policy $\mu$, we have
\begin{equation}
\left|
\mathbb{E}_{z\sim d^{\beta}, a\sim \mu(\cdot|z)}
\left[
Q^\beta(z,a)-\hat Q(z,a)
\right]
\right|
\leq
\|Q^\beta-\hat Q\|_\infty.
\end{equation}
Applying this inequality to both $\mu=\pi$ and $\mu=\beta$ gives
\begin{align}
&\mathbb{E}_{z\sim d^{\beta}}
\left[
\mathbb{E}_{a\sim \pi(\cdot|z)}Q^\beta(z,a)
-
\mathbb{E}_{a\sim \beta(\cdot|z)}Q^\beta(z,a)
\right] \nonumber \\
&\geq
\mathbb{E}_{z\sim d^{\beta}}
\left[
\mathbb{E}_{a\sim \pi(\cdot|z)}\hat Q(z,a)
-
\mathbb{E}_{a\sim \beta(\cdot|z)}\hat Q(z,a)
\right]
-
2\|Q^\beta-\hat Q\|_\infty.
\end{align}
By the definition of the surrogate objective, the first term on the right-hand side is exactly
\begin{equation}
\hat J_\beta(\pi)-\hat J_\beta(\beta).
\end{equation}
Thus,
\begin{align}
J(\pi)-J(\beta)
&\geq
\frac{1}{1-\gamma}
\left(
\hat J_\beta(\pi)-\hat J_\beta(\beta)
\right)
-
C_1 D_{\mathrm{KL}}^{\max}(\pi\|\beta)
-
C_2
\|Q^\beta-\hat Q\|_\infty,
\end{align}
% Absorbing the factor $\frac{1}{1-\gamma}$ into the constants gives
% \begin{equation}
% J(\pi)-J(\beta)
% \geq
% \hat J_\beta(\pi)-\hat J_\beta(\beta)
% -
% C_1 D_{\mathrm{KL}}^{\max}(\pi\|\beta)
% -
% C_2\|Q^\beta-\hat Q\|_\infty,
% \end{equation}
where $C_1,C_2 = \frac{2}{1 - \gamma}>0$ are constants depending on $\gamma$ and the boundedness of the value function. 
This completes the proof.
\end{proof}

\paragraph{Implication: search policy should be anchored to the non-search behavioral policy.}
The above two results reveal a common source of failure when the search policy $\pi$ is optimized without being anchored to the behavioral policy $\beta$. 
From~\eqref{eq:bellman_gap}, the Bellman update induced by $\pi$ deviates from the Bellman update induced by $\beta$ by an amount controlled by the policy divergence:
\begin{equation}
\|T^\pi \hat Q - T^\beta \hat Q\|_{\infty}
\leq
\gamma \|\hat Q\|_{\infty}
\sqrt{2D_{\mathrm{KL}}^{\max}(\pi\|\beta)}.
\end{equation}
This means that when $\pi$ moves far away from $\beta$, the value function $\hat Q$, which is learned from data generated by $\beta$, is no longer updated consistently with the distribution on which it was trained. 
As a result, applying $T^\pi$ can propagate value estimates into latent regions that are poorly covered by the behavioral policy, where $\hat Q$ may be inaccurate.

Meanwhile,~\eqref{eq:updating_gap} shows that the true policy improvement is lower bounded by the estimated improvement only up to two error terms:
\begin{equation}
J(\pi)-J(\beta)
\geq
\frac{1}{1-\gamma}
\left(
\hat J_\beta(\pi)-\hat J_\beta(\beta)
\right)
-
C_1D_{\mathrm{KL}}^{\max}(\pi\|\beta)
-
C_2\|Q^\beta-\hat Q\|_\infty.
\end{equation}
The first penalty is the TRPO-style distribution-shift term \citep{schulman2015trust}, which indicates that policy improvement is inherently local around the behavioral (or reference) policy $\beta$. 
The second penalty captures the value estimation gap introduced by using the learned value function $\hat Q$ instead of the true value function $Q^\beta$. 
Together, these two terms imply that a large surrogate improvement under $\hat J_\beta(\pi)$ does not necessarily translate into true improvement when $\pi$ deviates substantially from $\beta$.

Therefore, the two bottlenecks reinforce each other. 
An unconstrained search policy may exploit spurious high-value regions of $\hat Q$ and move away from the data-supported behavior distribution. 
This increases the KL divergence between $\pi$ and $\beta$, enlarges the Bellman update mismatch in~\eqref{eq:bellman_gap}, and further increases the value estimation error appearing in~\eqref{eq:updating_gap}. 
Consequently, the search procedure can enter a self-reinforcing cycle of distribution shift and value overestimation, where inaccurate value estimates attract the policy, and the shifted policy further amplifies value errors.

This analysis suggests that search in latent world models should not be performed as an unconstrained policy improvement step. 
Instead, the search policy should be regularized or anchored to the behavioral policy $\beta$, for example through a KL constraint or a policy-prior term. 
Such an anchoring mechanism keeps the search policy within the region where the learned value function is reliable, thereby reducing Bellman mismatch, controlling distribution shift, and enabling more stable monotonic improvement within the world model.

\clearpage
\subsection{Derivation of Exact Score Function in~\eqref{eq:score_matching}}  \label{section:score_function_proof}
\begin{lemma}[Exact Score Function] \label{lemma:score_function_proof}
Consider the forward and reverse diffusion processes defined in 
\eqref{eq:forward_process} and \eqref{eq:reverse_process} in the latent world model over 
$\mathcal{Z}$. 
For a latent state $z_t$ and a noisy action sequence $a_{t:t+H}^{\tau}$ at diffusion step 
$\tau$, define a proposal distribution
\begin{equation}
    q^{\tau}(a_{t:t+H}^0 | a_{t:t+H}^{\tau},z_t)
    =
    \mathcal{N}\left(a_{t:t+H}^0 ;
    \frac{1}{\sqrt{\bar{\alpha}^{\tau}}}a_{t:t+H}^{\tau},
    \frac{1-\bar{\alpha}^{\tau}}{\bar{\alpha}^{\tau}} I
    \right).
\end{equation}
Then the exact score function with respect to $a_{t:t+H}^{\tau}$ can be written as
\begin{align}
    \phi(z_t, a_{t:t+H}^\tau, \tau)
    &=
    -\frac{a^{\tau}_{t:t+H}}{1-\bar{\alpha}^{\tau}}
    +
    \frac{\sqrt{\bar{\alpha}^{\tau}}}{1-\bar{\alpha}^{\tau}}
    \mathbb{E}_{a_{t:t+H}^0 \sim q_{\tau}(\cdot | a_{t:t+H}^{\tau})}
    \left[
    w(a^{0|\tau}_{t:t+H}) a_{t:t+H}^{0|\tau}
    \right],
    \label{eq:score_matching_append}
\end{align}
where the normalized importance weight is
\begin{equation} \label{eq:importance_weight_append}
    w(a_{t:t+H}^{0|\tau})
    =
    \frac{
    \exp\left(\tilde{G}(z_t, a_{t:t+H}^0)/\kappa\right)
    }{
    \mathbb{E}_{\tilde a_{t:t+H}^0 \sim q_{\tau}(\cdot | a_{t:t+H}^{\tau})}
    \left[
    \exp\left(\tilde{G}(z_t, \tilde a_{t:t+H}^0)/\kappa\right)
    \right]
    }.
\end{equation}
\end{lemma}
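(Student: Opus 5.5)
The plan is to derive the score via Tweedie's formula, applied to the target clean distribution. Here the target is taken as $\pi^0(a^0\mid z_t)\propto \exp(\tilde G(z_t,a^0)/\kappa)$, the Gibbs form in which the energy term $-\eta E \approx \eta\log\beta$ plays the role of the behavior prior. Because the forward kernel \eqref{eq:forward_process} is Gaussian, the noisy marginal is the convolution
\begin{equation*}
\pi^\tau(a^\tau\mid z_t)=\int p(a^\tau\mid a^0,z_t)\,\pi^0(a^0\mid z_t)\,da^0 .
\end{equation*}
I would differentiate $\log \pi^\tau$ under the integral sign, which requires integrability of $\exp(\tilde G/\kappa)$ times Gaussian tails. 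Using
\begin{equation*}
\nabla_{a^\tau}\log p(a^\tau\mid a^0)=-\frac{a^\tau-\sqrt{\bar\alpha^\tau}a^0}{1-\bar\alpha^\tau},
\end{equation*}
this gives
\begin{equation*}
\phi=-\frac{a^\tau}{1-\bar\alpha^\tau}+\frac{\sqrt{\bar\alpha^\tau}}{1-\bar\alpha^\tau}\,\mathbb{E}[a^0\mid a^\tau,z_t],
\end{equation*}
where the expectation is over the exact posterior $p(a^0\mid a^\tau,z_t)\propto p(a^\tau\mid a^0)\,\pi^0(a^0\mid z_t)$.

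The next step rewrites this posterior through the proposal $q^\tau$. Viewed as a function of $a^0$, the forward density $\mathcal N(a^\tau;\sqrt{\bar\alpha^\tau}a^0,(1-\bar\alpha^\tau)I)$ equals, after completing the square, $(\bar\alpha^\tau)^{-d/2}\,\mathcal N\bigl(a^0;a^\tau/\sqrt{\bar\alpha^\tau},\frac{1-\bar\alpha^\tau}{\bar\alpha^\tau}I\bigr)$. It is therefore proportional to $q^\tau(a^0\mid a^\tau,z_t)$, with a constant that does not depend on $a^0$. Consequently
\begin{equation*}
p(a^0\mid a^\tau,z_t)=\frac{q^\tau(a^0)\exp(\tilde G(z_t,a^0)/\kappa)}{\mathbb{E}_{\tilde a^0\sim q^\tau}\bigl[\exp(\tilde G(z_t,\tilde a^0)/\kappa)\bigr]},
\end{equation*}
because the partition function of $\pi^0$ and the factor $(\bar\alpha^\tau)^{-d/2}$ cancel between numerator and denominator. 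Inserting this expression into the conditional mean yields $\mathbb{E}_q[w\,a^0]$ with $w$ exactly as in \eqref{eq:importance_weight_append}, which establishes \eqref{eq:score_matching_append}.

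The main obstacle is identifying the target $\pi^0$. The lemma presupposes that the clean policy is the Gibbs density in $\tilde G$, so I would state this explicitly as the modelling assumption linking it to \eqref{eq:Gibbs}. That link relies on the energy approximation $-E\approx\log\beta+C(z)$, under which the $z$-dependent constants drop out of the normalized weight. I would also note that the latent dynamics are deterministic, so $\tilde G$ is a deterministic function of $a^0$ given $z_t$ and the indicator constraints are absorbed into the rollout. The remaining technical conditions are mild: a growth bound such as $\tilde G \le c(1+\|a^0\|)$, which ensures the normalizer is finite and justifies exchanging derivative and integral.
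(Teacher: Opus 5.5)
Your proposal is correct and follows the paper's proof essentially step for step. Like the paper, you differentiate the Gaussian convolution under the integral to get the Tweedie form with the posterior mean, note that the forward kernel viewed as a function of $a^0$ is proportional to $q^\tau$, and substitute the Gibbs target $\pi^0\propto\exp(\tilde G/\kappa)$ so that the normalizing constants cancel in $w$. The differences are only presentational: you state the Gibbs form of $\pi^0$ up front as the modelling assumption, whereas the paper invokes it only in the last step, and you supply the explicit constant $(\bar\alpha^\tau)^{-d/2}$ and the integrability conditions that the paper leaves implicit.
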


\begin{proof}
We start from the definition of the score function. For a fixed latent state $z_t$, the marginal distribution of the noisy action sequence $a_{t:t+H}^{\tau}$ is given by
\begin{equation}
    \pi_\phi^\tau(a_{t:t+H}^{\tau}| z_t)
    =
    \int 
    p(a_{t:t+H}^{\tau}| a_{t:t+H}^{0}, z_t)
    \pi_\phi^0(a_{t:t+H}^{0}| z_t)
    da_{t:t+H}^{0}.
\end{equation}
Therefore,
\begin{equation}
\begin{split}
\phi(z_t, a_{t:t+H}^\tau, \tau)
&=
\nabla_{a_{t:t+H}^{\tau}}
\log \pi^\tau_\phi(a_{t:t+H}^{\tau} | z_t) \\
&=
\frac{
\nabla_{a_{t:t+H}^{\tau}}
\int 
p(a_{t:t+H}^{\tau}| a_{t:t+H}^{0}, z_t)
\pi_{\phi}^{0}(a_{t:t+H}^{0}| z_t)
da_{t:t+H}^{0}
}{
\int 
p(a_{t:t+H}^{\tau}| a_{t:t+H}^{0}, z_t)
\pi_{\phi}^{0}(a_{t:t+H}^{0}| z_t)
da_{t:t+H}^{0}
} \\
&=
\frac{
\int 
\nabla_{a_{t:t+H}^{\tau}}
p(a_{t:t+H}^{\tau}| a_{t:t+H}^{0}, z_t)
\pi_{\phi}^{0}(a_{t:t+H}^{0}| z_t)
da_{t:t+H}^{0}
}{
\int 
p(a_{t:t+H}^{\tau}| a_{t:t+H}^{0}, z_t)
\pi_{\phi}^{0}(a_{t:t+H}^{0}| z_t)
da_{t:t+H}^{0}
}.
\label{eq:base_score}
\end{split}
\end{equation}

Under the DDPM forward process, the transition kernel is
\begin{equation} \label{eq:inter_prob}
\begin{split}
p(a_{t:t+H}^{\tau}| a_{t:t+H}^{0}, z_t)
&=
\mathcal{N}
\left(
a_{t:t+H}^{\tau};
\sqrt{\bar{\alpha}^{\tau}}a_{t:t+H}^{0},
(1-\bar{\alpha}^{\tau})I
\right) \\
&\propto
\exp
\left(
-\frac{1}{2}
\frac{
\left(
a_{t:t+H}^{\tau}
-
\sqrt{\bar{\alpha}^{\tau}}a_{t:t+H}^{0}
\right)^\top
\left(
a_{t:t+H}^{\tau}
-
\sqrt{\bar{\alpha}^{\tau}}a_{t:t+H}^{0}
\right)
}{
1-\bar{\alpha}^{\tau}
}
\right).
\end{split}
\end{equation}
Hence,
\begin{equation}
\nabla_{a_{t:t+H}^{\tau}}
p(a_{t:t+H}^{\tau}| a_{t:t+H}^{0}, z_t)
=
-\frac{
a_{t:t+H}^{\tau}
-
\sqrt{\bar{\alpha}^{\tau}}a_{t:t+H}^{0}
}{
1-\bar{\alpha}^{\tau}
}
p(a_{t:t+H}^{\tau}| a_{t:t+H}^{0}, z_t).
\end{equation}
Substituting this identity into~\eqref{eq:base_score}, we obtain
\begin{equation}
\begin{split}
\phi(z_t, a_{t:t+H}^\tau, \tau)
&=
\frac{
\int 
-\frac{
a_{t:t+H}^{\tau}
-
\sqrt{\bar{\alpha}^{\tau}}a_{t:t+H}^{0}
}{
1-\bar{\alpha}^{\tau}
}
p(a_{t:t+H}^{\tau}| a_{t:t+H}^{0}, z_t)
\pi_{\phi}^{0}(a_{t:t+H}^{0}| z_t)
da_{t:t+H}^{0}
}{
\int 
p(a_{t:t+H}^{\tau}| a_{t:t+H}^{0}, z_t)
\pi_{\phi}^{0}(a_{t:t+H}^{0}| z_t)
da_{t:t+H}^{0}
} \\
&=
-\frac{a_{t:t+H}^{\tau}}{1-\bar{\alpha}^{\tau}}
+
\frac{\sqrt{\bar{\alpha}^{\tau}}}{1-\bar{\alpha}^{\tau}}
\frac{
\int 
p(a_{t:t+H}^{\tau}| a_{t:t+H}^{0}, z_t)
\pi_{\phi}^{0}(a_{t:t+H}^{0}| z_t)
a_{t:t+H}^{0}
da_{t:t+H}^{0}
}{
\int 
p(a_{t:t+H}^{\tau}| a_{t:t+H}^{0}, z_t)
\pi_{\phi}^{0}(a_{t:t+H}^{0}| z_t)
da_{t:t+H}^{0}
}.
\end{split}
\end{equation}
Thus, computing the exact score reduces to estimating the posterior mean
\begin{equation} \label{eq:posterior_expectation}
\begin{split}
\mathbb{E}
\left[
a_{t:t+H}^{0}
|
a_{t:t+H}^{\tau}, z_t
\right]
&=
\frac{
\int 
p(a_{t:t+H}^{\tau}| a_{t:t+H}^{0}, z_t)
\pi_{\phi}^{0}(a_{t:t+H}^{0}| z_t)
a_{t:t+H}^{0}
da_{t:t+H}^{0}
}{
\int 
p(a_{t:t+H}^{\tau}| a_{t:t+H}^{0}, z_t)
\pi_{\phi}^{0}(a_{t:t+H}^{0}| z_t)
da_{t:t+H}^{0}
}.
\end{split}
\end{equation}

To express this posterior mean using proposal sampling, we introduce the Gaussian proposal distribution
\begin{equation} \label{eq:proposal}
q_{\tau}(a_{t:t+H}^{0}| a_{t:t+H}^{\tau}, z_t)
=
\mathcal{N}
\left(
a_{t:t+H}^{0};
\frac{1}{\sqrt{\bar{\alpha}^{\tau}}}a_{t:t+H}^{\tau},
\frac{1-\bar{\alpha}^{\tau}}{\bar{\alpha}^{\tau}}I
\right).
\end{equation}
This proposal is proportional to the DDPM likelihood 
$p(a_{t:t+H}^{\tau}| a_{t:t+H}^{0}, z_t)$ when viewed as a function of $a_{t:t+H}^{0}$. 
Therefore,~\eqref{eq:posterior_expectation} can be rewritten as
\begin{equation}
\begin{split}
\mathbb{E}
\left[
a_{t:t+H}^{0}
|
a_{t:t+H}^{\tau}, z_t
\right]
&=
\frac{
\mathbb{E}_{a_{t:t+H}^{0|\tau}\sim q_{\tau}(\cdot| a_{t:t+H}^{\tau},z_t)}
\left[
\pi_{\phi}^{0}(a_{t:t+H}^{0|\tau}| z_t)
a_{t:t+H}^{0|\tau}
\right]
}{
\mathbb{E}_{a_{t:t+H}^{0|\tau}\sim q_{\tau}(\cdot| a_{t:t+H}^{\tau},z_t)}
\left[
\pi_{\phi}^{0}(a_{t:t+H}^{0|\tau}| z_t)
\right]
}.
\end{split}
\end{equation}

Substituting the posterior mean back into the score expression gives
\begin{equation}
\phi(z_t, a_{t:t+H}^\tau, \tau)
=
-\frac{a_{t:t+H}^{\tau}}{1-\bar{\alpha}^{\tau}}
+
\frac{\sqrt{\bar{\alpha}^{\tau}}}{1-\bar{\alpha}^{\tau}}
\mathbb{E}_{a_{t:t+H}^{0}\sim q_{\tau}(\cdot | a_{t:t+H}^{\tau},z_t)}
\left[
w(a_{t:t+H}^{0|\tau})a_{t:t+H}^{0|\tau}
\right],
\end{equation}
where the normalized importance weight is defined as
\begin{equation}
w(a_{t:t+H}^{0|\tau})
=
\frac{
\pi_{\phi}^{0}(a_{t:t+H}^{0|\tau}| z_t)
}{
\mathbb{E}_{\tilde a_{t:t+H}^{0|\tau}\sim q_{\tau}(\cdot | a_{t:t+H}^{\tau},z_t)}
\left[
\pi_{\phi}^{0}(\tilde a_{t:t+H}^{0|\tau}| z_t)
\right]
}.
\end{equation}
Finally, under the entropy-regularized policy induced by the trajectory return,
\begin{equation}
\pi_{\phi}^{0}(a_{t:t+H}^{0|\tau}| z_t)
\propto
\exp
\left(
\frac{\tilde{G}(z_t,a_{t:t+H}^{0|\tau})}{\kappa}
\right).
\end{equation}
Hence, the normalized importance weight becomes
\begin{equation} \label{eq:importance_weight_append}
w(a_{t:t+H}^{0|\tau})
=
\frac{
\exp\left(\tilde{G}(z_t,a_{t:t+H}^{0|\tau})/\kappa\right)
}{
\mathbb{E}_{\tilde a_{t:t+H}^{0|\tau}\sim q_{\tau}(\cdot | a_{t:t+H}^{\tau},z_t)}
\left[
\exp\left(\tilde{G}(z_t,\tilde a_{t:t+H}^{0|\tau})/\kappa\right)
\right]
}.
\end{equation}
This completes the proof.
\end{proof}

% \end{proof}
\paragraph{What is the role of the world model in the computation of the score function?} The world model provides the latent dynamics used to evaluate each candidate clean action sequence sampled during the score computation.  In the derivation above, the proposal distribution 
$q_{\tau}(a_{t:t+H}^{0|\tau}\mid a_{t:t+H}^{\tau},z_t)$ is obtained from the DDPM forward kernel and is used to sample clean action sequences $a_{t:t+H}^{0|\tau}$ that are compatible with the noisy action sequence $a_{t:t+H}^{\tau}$. 
For each sampled action sequence, the world model recursively rolls out the latent dynamics
\begin{equation}
    z_{h+1}^{0|\tau} = \mathcal{F}(z_h^{0|\tau},a_h^{0|\tau}), 
    \qquad h=t,\ldots,t+H-1,
\end{equation}
starting from the current latent state $z_t^{0|\tau}=z_t$. 
This rollout produces a latent trajectory 
\begin{equation}
    \left(
    z_t,
    a_t^{0|\tau},
    z_{t+1}^{0|\tau},
    a_{t+1}^{0|\tau},
    \ldots,
    z_{t+H}^{0|\tau},
    a_{t+H}^{0|\tau}
    \right),
\end{equation}
which is then used to compute the trajectory-level return
\begin{equation}
    \tilde{G}(z_t, a_{t:t+H}^{0|\tau}) 
    \coloneqq 
    \sum_{h = t}^{t+H-1} 
    \gamma^{h - t} 
    \tilde{\mathcal{R}}(z_h^{0|\tau}, a_h^{0|\tau}) 
    + 
    \gamma^{H}
    \hat{Q}(z_{t+H}^{0|\tau}, a_{t+H}^{0|\tau}) 
    - 
    \eta E(z_{t+H}^{0|\tau}, a_{t+H}^{0|\tau}).
\end{equation}
The return $\tilde{G}(z_t, a_{t:t+H}^{0|\tau})$ determines the normalized importance weight
\begin{equation}
w(a_{t:t+H}^{0|\tau})
=
\frac{
\exp\left(\tilde{G}(z_t,a_{t:t+H}^{0|\tau})/\kappa\right)
}{
\mathbb{E}_{\tilde a_{t:t+H}^{0|\tau}\sim q_{\tau}(\cdot |a_{t:t+H}^{\tau},z_t)}
\left[
\exp\left(\tilde{G}(z_t,\tilde a_{t:t+H}^{0|\tau})/\kappa\right)
\right]
}.
\end{equation}
Thus, the world model does not directly define the Gaussian proposal used for denoising. 
Instead, it provides the model-based evaluation signal that reweights the sampled clean action sequences according to their predicted trajectory returns. 
The exact score is then computed as a return-weighted posterior mean:
\begin{equation}
\phi(z_t,a_{t:t+H}^{\tau},\tau)
=
-\frac{a^{\tau}_{t:t+H}}{1-\bar{\alpha}^{\tau}}
+
\frac{\sqrt{\bar{\alpha}^{\tau}}}{1-\bar{\alpha}^{\tau}}
\mathbb{E}_{a_{t:t+H}^{0|\tau} \sim q_{\tau}(\cdot |a_{t:t+H}^{\tau},z_t)}
\left[
w(a^{0|\tau}_{t:t+H}) a_{t:t+H}^{0|\tau}
\right].
\end{equation}
In this sense, the world model converts each sampled clean action sequence into a trajectory-level value estimate, and the score function shifts the noisy action sequence toward clean action sequences that achieve higher predicted return under the latent dynamics.

A more straightforward abstraction is that the reverse transition kernel defines an iterative policy refinement process. 
Starting from a simple Gaussian prior over action sequences, the reverse diffusion process gradually transforms this prior into the optimized action-sequence policy:
\begin{equation}
\underbrace{\pi_{\phi}^{N}(a_{t:t+H}^{N}|z_t)}_{\text{Gaussian prior}}
\xrightarrow{\pi_{\phi}^{N-1}(a_{t:t+H}^{N-1}|a_{t:t+H}^{N},z_t)}
\pi_{\phi}^{N-1}(a_{t:t+H}^{N-1}|z_t)
\cdots
\xrightarrow{\pi_{\phi}^{0}(a_{t:t+H}^{0}|a_{t:t+H}^{1},z_t)}
\underbrace{\pi_{\phi}^{0}(a_{t:t+H}^{0}|z_t)}_{\text{clean policy}}.
\end{equation}
Equivalently, the final refined policy is obtained by marginalizing over all intermediate denoising steps:
\begin{equation}
    \pi_{\phi}^{0} (a_{t:t+H}^{0}|z_t)
    =
    \int_{a_{t:t+H}^{1:N}} 
    \pi^{N}_{\phi}(a_{t:t+H}^{N}|z_t) 
    \prod_{\tau = N}^{1} 
    \pi_{\phi}^{\tau-1} 
    (a_{t:t+H}^{\tau-1}|a_{t:t+H}^{\tau}, z_t) 
    da_{t:t+H}^{1:N}.
\end{equation}
At each reverse step, the transition kernel 
$\pi_{\phi}^{\tau-1}(a_{t:t+H}^{\tau-1}|a_{t:t+H}^{\tau},z_t)$ is guided by the score function $\phi(z_t,a_{t:t+H}^{\tau},\tau)$. This score is computed by sampling candidate clean action sequences from the DDPM posterior proposal  $q_{\tau}(\cdot|a_{t:t+H}^{\tau},z_t)$, rolling them out through the world model, and reweighting them according to their predicted returns.  \textit{Consequently, the world model redefines the target for score matching by introducing model-based evaluations. This shift directly propagates to the stepwise transition kernel, thereby rendering the entire reverse diffusion process equivalent to policy optimization. }

From this perspective, reverse diffusion can be viewed as model-based policy optimization in action-sequence space. 
The Gaussian prior provides exploration over possible action sequences, while the world-model-evaluated score progressively reshapes this prior into a policy that concentrates on action sequences with higher predicted return. 
Therefore, the policy refinement in~\eqref{eq:policy_refinement} is not merely a generative denoising procedure; it is also a KL-regularized policy optimization process where the world model supplies the trajectory-level objective used to guide each denoising step (stated in Theorem~\ref{theorem:score_matching_policy}).

\subsection{Proof of Theorem~\ref{theorem:score_matching_policy}} \label{append:proof_theorem_1}

\begin{proof} We prove Theorem~\ref{theorem:score_matching_policy} in two steps: (1) demonstrating that the optimal distribution of the KL-constrained policy optimization problem formulated in~\eqref{eq:Lagrangian} follows a Gibbs distribution, and (2) showing that score matching within the world model is equivalent to sampling from this optimal policy.

\textbf{Step 1. Deriving the Optimal Gibbs Policy.} 

For a fixed latent state $z_t$, the latent trajectory is deterministically generated by the learned world model
\begin{equation}
    z_{h+1}=\mathcal F(z_h,a_h), 
    \qquad h=t,\ldots,t+H-1.
\end{equation}
Therefore, once $z_t$ and the action sequence $a_{t:t+H}$ are given, the trajectory return $G(z_t,a_{t:t+H})$ is completely determined by the rollout of the world model. Equivalently, the world-model constraint restricts feasible trajectories to those satisfying $z_{h+1}=\mathcal F(z_h,a_h)$.

We first consider the Lagrangian formulation of the KL-constrained policy optimization problem~\eqref{eq:Lagrangian}. Let $\eta>0$ be the KKT multiplier associated with the KL divergence constraint. The regularized objective is:
\begin{equation}
\max_{\pi_\phi(\cdot|z_t)}
\left\{
\mathbb{E}_{a_{t:t+H}\sim \pi_\phi(\cdot|z_t)}
\left[
G(z_t,a_{t:t+H})
\right]
-
\eta 
D_{\mathrm{KL}}
\left(
\pi_\phi(\cdot|z_t)\|\beta(\cdot|z_t)
\right)
\right\}.
\label{eq:kl_regularized_policy_proof}
\end{equation}

Expanding the KL term, the objective becomes
\begin{align}
\mathcal J(\pi_\phi)
&=
\int 
\pi_\phi(a_{t:t+H}|z_t)
G(z_t,a_{t:t+H})
da_{t:t+H}
\nonumber \\
&\quad
-
\eta
\int 
\pi_\phi(a_{t:t+H}|z_t)
\log
\frac{
\pi_\phi(a_{t:t+H}|z_t)
}{
\beta(a_{t:t+H}|z_t)
}
da_{t:t+H}.
\end{align}

Note that the behavior prior $\beta$ over the trajectory incorporates the deterministic latent dynamics constraint of world models:
\begin{equation}
    \beta(a_{t:t+H}|z_t) = \prod_{h=t}^{t+H} \beta(a_h|z_h) \cdot \prod_{h=t}^{t+H-1}\mathbb{1}_{\{z_{h+1} = \mathcal{F}(z_h,a_h)\}}.
\end{equation}

To enforce the probability normalization constraint $\int \pi_\phi(a_{t:t+H}|z_t)da_{t:t+H}=1$, we introduce a second Lagrange multiplier $\lambda$ and define the full Lagrangian:
\begin{align}
\mathcal L(\pi_\phi,\lambda)
&=
\int 
\pi_\phi(a_{t:t+H}|z_t)
G(z_t,a_{t:t+H})
da_{t:t+H}
\nonumber \\
&\quad
-
\eta
\int 
\pi_\phi(a_{t:t+H}|z_t)
\log
\frac{
\pi_\phi(a_{t:t+H}|z_t)
}{
\beta(a_{t:t+H}|z_t)
}
da_{t:t+H}
\nonumber \\
&\quad
+
\lambda
\left(
\int 
\pi_\phi(a_{t:t+H}|z_t)
da_{t:t+H}
-1
\right).
\end{align}

Taking the functional derivative with respect to $\pi_\phi(a_{t:t+H}|z_t)$ and setting it to zero gives
\begin{equation}
0
=
G(z_t,a_{t:t+H})
-
\eta
\left(
\log
\frac{
\pi_\phi(a_{t:t+H}|z_t)
}{
\beta(a_{t:t+H}|z_t)
}
+1
\right)
+
\lambda.
\end{equation}

Rearranging for the density ratio yields
\begin{equation}
\log
\frac{
\pi_\phi(a_{t:t+H}|z_t)
}{
\beta(a_{t:t+H}|z_t)
}
=
\frac{
G(z_t,a_{t:t+H})
}{\eta}
+
\frac{\lambda-\eta}{\eta}.
\end{equation}

Exponentiating both sides, we obtain the optimal policy:
\begin{equation}
\pi_\phi^*(a_{t:t+H}|z_t)
=
\frac{1}{Z(z_t)}
\beta(a_{t:t+H}|z_t)
\exp
\left(
\frac{
G(z_t,a_{t:t+H})
}{\eta}
\right),
\label{eq:gibbs_policy_proof}
\end{equation}
where $Z(z_t)$ is the partition function. Substituting the factorized form of $\beta$, the optimal distribution can be expressed as:
\begin{equation}
\pi_\phi^*(a_{t:t+H}|z_t)
\propto
\prod_{h=t}^{t+H} \beta(a_h|z_h) \cdot \exp\left( \frac{G(z_t, a_{t:t+H})}{\eta} \right) \cdot \prod_{h=t}^{t+H-1}\mathbb{1}_{\{z_{h+1} = \mathcal{F}(z_h, a_h)\}}.
\end{equation}

\textbf{Step 2. Generation of the Gibbs Distribution via Score Matching.} 

By Lemma~\ref{lemma:score_function_proof}, an expression of the form~\eqref{eq:score_matching} is analytically identical to the exact score function $\nabla_{a_{t:t+H}^{\tau}} \log \pi_{\phi}^\tau(a_{t:t+H}^{\tau}|z_t)$ of a forward-diffused distribution, whose underlying base distribution $\pi_{\phi}^0(a_{t:t+H}^0|z_t)$ is proportional to the unnormalized numerator of the weight $w(a_{t:t+H}^0)$ in \eqref{eq:importance_weight_append}. Therefore, according to the properties of variance-preserving stochastic differential equations \citep{song2020score}, the score matching condition implicitly defines the following base distribution:
\begin{equation}
\pi_{\phi}^0(a_{t:t+H}^{0}|z_t)
\propto
\exp
\left(
\frac{
\tilde{G}(z_t,a_{t:t+H}^{0})
}{\eta}
\right).
\label{eq:implied_base_dist}
\end{equation}

We now relate this implicit base distribution back to the policy derived in Step 1. Recall that the energy-regularized reward is defined as:
\begin{equation}
\tilde{\mathcal{R}} (z_h, a_h) = \mathcal{R}(z_h, a_h) - \frac{\eta}{\gamma^{h-t}} E(z_h, a_h).
\end{equation}
Based on the energy minimization objective~\eqref{eq:minimization_of_energy}, the learned energy approximates the behavior prior up to a state-dependent constant: $-E(z_h,a_h)\approx \log \beta(a_h|z_h)+C(z_h)$. 
In continuous state spaces with continuous dynamics, the state variations over a short rollout horizon $H$ are relatively small. Thus, we can reasonably approximate $C(z_h) \approx C(z_t)$, which acts as a fixed constant with respect to the future action sequence $a_{t:t+H}$. Summing over the trajectory and accounting for the discount factor, the energy-regularized return $\tilde{G}$ decomposes into the original return $G$ and the behavior prior term:
\begin{equation}
\frac{\tilde{G}(z_t,a_{t:t+H}^{0})}{\eta} \approx \frac{G(z_t,a_{t:t+H}^{0})}{\eta} + \sum_{h=t}^{t+H} \log \beta(a_h|z_h) + (H+1)C(z_t).
\end{equation}

Substituting this decomposition into~\eqref{eq:implied_base_dist}, the constant term $(H+1)C(z_t)$ is absorbed into the proportionality constant, and the base distribution can be explicitly written as:
\begin{equation}
\pi_{\phi}^0(a_{t:t+H}^{0}|z_t)
\propto
\left(\prod_{h=t}^{t+H} \beta(a_h|z_h)\right) \exp\left(\frac{G(z_t,a_{t:t+H}^{0})}{\eta}\right) \prod_{h=t}^{t+H-1} \mathbb{1}_{\{z_{h+1}=\mathcal F(z_h,a_h^0)\}}.
\end{equation}
This distribution is exactly identical to the optimal Gibbs policy $\pi^*_{\phi}(a_{t:t+H}| z_t)$ derived in Step 1.

Therefore, if the score matching condition is satisfied, the reverse diffusion process uses the exact score of the noisy marginal induced by the Gibbs policy. 
Consequently, the reverse diffusion sampler generates clean action sequences exactly from the optimal KL-regularized policy distribution.

Conversely, suppose the KL-constrained policy optimization problem attains its optimum. 
Then, by the variational argument above, the optimal policy has the Gibbs form in~\eqref{eq:gibbs_policy_proof}. 
The diffusion marginal induced by this clean-action distribution has a unique exact score, and by Lemma~\ref{lemma:score_function_proof}, this score is precisely the expression in~\eqref{eq:score_matching}. 
Thus, the score matching condition is satisfied.

Combining both directions, the KL-constrained policy optimization problem attains its Gibbs-form optimum if and only if the score matching condition in~\eqref{eq:score_matching} is satisfied. Finally, to provide flexibility in controlling the entropy and return trade-off during practical sampling, \textit{we relax the strict Lagrange multiplier $\eta$ to a general temperature parameter $\kappa > 0$}, yielding the target Gibbs distribution presented in Theorem~\ref{theorem:score_matching_policy}.
This completes the proof.
\end{proof}

\clearpage
\section{Practical Implementation and Hyperparameters} \label{append:practical_implement}

% \usepackage[ruled,linesnumbered,noend]{algorithm2e}
% \usepackage{amsmath}
% \usepackage{amssymb}

% \begin{document}

% Preamble

\SetKwInput{KwInput}{Input}
\SetKwInput{KwInit}{Initialize}
\SetKw{KwFor}{for}
\SetKw{KwEnd}{end for}
\DontPrintSemicolon
\SetNlSty{}{}{:}
\begin{algorithm}[ht]
\caption{Model-Based Diffusion Policy Optimization (MBDPO)}
\label{alg:MBDPO}

\KwInput{Encoder $\mathcal{E}$, latent dynamics $\mathcal{F}$, task embedding $\mathcal{E}_{env}$, reward function $\mathcal{R}$, value function $\hat{Q}$, implicit energy function $E$, score network $\hat{\phi}$, diffusion steps $N$, Monte Carlo samples $T$}
\KwInit{Replay buffer $\mathcal{D} \leftarrow \emptyset$}

\BlankLine
\texttt{// \textcolor{cyan}{Warmup: Train the initial world model}}\;
\For{$i = 1, 2, \dots, N_{\mathrm{warmup}}$}{
    Sample random action $a_h \sim \mathrm{Uniform}(\mathcal{A})$\;
    Execute action: $(s_{h}, \Box) \leftarrow \mathrm{env.step}(a_h)$\;
    Store transition $(s_h, a_h, r_h, s_{h+1})$ into $\mathcal{D}$\;
}
Update $\mathcal{E}, \mathcal{F}, \mathcal{R}, \mathcal{E}_{env}, \hat{Q}, E$ by minimizing the model loss in \eqref{eq:joint_minimizer} using $\mathcal{D}$\;

\BlankLine
\For{iteration $k = 1, 2, \dots, K$}{
    \texttt{// \textcolor{cyan}{Step 1: Sampling data from the environment via the score network}}\;
    Encode current observation: $z_h \leftarrow \mathcal{E}(s_h)$\;
    Sample action $a_h$ iteratively via the diffusion process in \eqref{eq:sampling}\; 
    Execute action: $(s_{h}, \Box) \leftarrow \mathrm{env.step}(a_h)$\;
    Store transition $(s_h, a_h, r_h, s_{h+1})$ into $\mathcal{D}$\;
    
    \vspace{0.5em}
    \texttt{// \textcolor{cyan}{Step 2: World model update}}\;
    Sample a batch of transitions $\mathcal{B} \sim \mathcal{D}$\;
    Update $\mathcal{E}, \mathcal{F}, \mathcal{R}, \mathcal{E}_{env}, \hat{Q}, E$ by minimizing the joint loss in \eqref{eq:joint_minimizer} on batch $\mathcal{B}$\;
    
    \vspace{0.5em}
    \texttt{// \textcolor{cyan}{Step 3: Diffusion policy optimization}}\;   
    Estimate the score function via \eqref{eq:monte_carlo} using imagined trajectories in the world model\;
    Update score network $\hat{\phi}$ by minimizing the supervised $L^2$ loss in \eqref{eq:score_fitting_loss}\;
}
\BlankLine
\end{algorithm}

Note that Algorithm \ref{alg:MBDPO} outlines the \textit{online} learning paradigm. In the \textit{offline} pretraining setting, the agent does not interact with the environment to collect new data. Instead, the replay buffer $\mathcal{D}$ is initialized with a static, pre-collected dataset, and Step 1 (sampling data via the score network and environment interaction) is entirely omitted. Consequently, the world model and diffusion policy are optimized, relying solely on the offline dataset.

Furthermore, for the \textbf{\textit{offline-to-online fine-tuning}} paradigm, the initial warmup phase is completely bypassed. Instead, the algorithm is initialized with the parameters of the world model and the score network that have been pre-trained on the offline dataset. The agent then proceeds directly to the main iterative loop, executing \textbf{Step 1}, \textbf{Step 2}, and \textbf{Step 3} to continuously interact with the environment and fine-tune the score function and world model.

% \subsection{Hyperparameter}

\begin{table}[t]
\centering
\caption{\textbf{MBDPO hyperparameters for the default 6M model.} We use the same hyperparameters for all online-from-scratch tasks.}
\label{tab:tdmpc2_hyperparameters}
\begin{tabular}{ll}
\toprule
\textbf{Hyperparameter} & \textbf{Value} \\
\midrule

\multicolumn{2}{l}{\textcolor{cyan}{\textbf{\underline{Search via diffusion sampling}}}} \\
Horizon ($H$) & 3 \\
Diffusion steps & 5-20 steps \\
Monte Carlo sample number & 512 \\
Temperature $\kappa$ & 0.5 \\
Energy regularized factor $\eta$ & 0.1 \\

\\[-0.3em]
\multicolumn{2}{l}{\textcolor{cyan}{\textbf{\underline{Replay buffer}}}} \\
Capacity & 1,000,000 \\
Sampling & Uniform \\

\\[-0.3em]
\multicolumn{2}{l}{\textcolor{cyan}{\textbf{\underline{Architecture (6M)}}}} \\
Encoder dim & 256 \\
MLP dim & 512 \\
Latent state dim & 512 \\
Task embedding dim & 96 \\
Task embedding norm & 1 \\
Activation & LayerNorm + Mish \\
$\hat{Q}$-function dropout rate & 1\% \\
Ensemble number of $\hat{Q}$-functions & 5 \\

\\[-0.3em]
\multicolumn{2}{l}{\textcolor{cyan}{\textbf{\underline{Optimization}}}} \\
Batch size & 256 (online) / 1024 (offline) \\
Policy prior loss norm. & Moving (5\%, 95\%) percentiles \\
Optimizer & Adam \\
Learning rate & $3 \times 10^{-4}$ \\
Encoder learning rate & $1 \times 10^{-4}$ \\
Gradient clip norm & 20 \\

\bottomrule
\end{tabular}
\end{table}

\begin{table}[t]
\centering
\caption{\textbf{Model configurations.} We list the specifications for each model configuration (size) of our multi-task offline experiments. \textit{Encoder dim} is the dimensionality of fully connected layers in the encoder $\mathcal{E}$, \textit{MLP dim} is the dimensionality of layers in all other components, \textit{Latent state dim} is the dimensionality of the latent representation $z$, \textit{\# encoder layers} is the number of layers in the encoder $\mathcal{E}$, \textit{\# Q-functions} is the number of learned $Q$-functions, and \textit{Task embedding dim} is the dimensionality of $e$ from Equation~(2). TD-targets are always computed by randomly subsampling two $\hat{Q}$-functions, regardless of the number of $\hat{Q}$-functions in the ensemble. We did not experiment with other model configurations. *The default (\texttt{base}) configuration used in our single-task RL experiments has 6M parameters.}
\label{tab:model_configurations}
\begin{tabular}{lccccc}
\toprule
 & \textbf{1.7M} & \textbf{6M*} & \textbf{21M} & \textbf{54M} & \textbf{340M} \\
\midrule
Encoder dim & 256 & 256 & 1024 & 1792 & 4096 \\
MLP dim & 384 & 512 & 1024 & 1792 & 4096 \\
Latent state dim & 128 & 512 & 768 & 768 & 1376 \\
\# encoder layers & 2 & 2 & 3 & 4 & 5 \\
\# $\hat{Q}$-functions & 2 & 5 & 5 & 5 & 8 \\
Task embedding dim & 96 & 96 & 96 & 96 & 96 \\
\bottomrule
\end{tabular}
\end{table}

\end{document}